\newtheorem{theorem}{Theorem}
\newtheorem{definition}[theorem]{Definition}
\newtheorem{example}[theorem]{Example}
\newtheorem{lemma}[theorem]{Lemma}
\newtheorem{notation}[theorem]{Notation}
\newtheorem{proposition}[theorem]{Proposition}
\newenvironment{proof}[1][Proof]{\noindent\textbf{#1.} }{\ \rule{0.5em}{0.5em}}
\begin{document}

\title{The probatilistic Quantifier Fuzzification Mechanism $\mathcal{F}^{A}$: A
theoretical analysis.}
\author{F\'{e}lix D\'{\i}az-Hermida, Alberto Bugar\'{\i}n, David E. Losada\ \\University of Santiago de Compostela}
\maketitle

\begin{abstract}
The main goal of this work is to analyze the behaviour of the $\mathcal{F}%
^{A}$ quantifier fuzzification mechanism
\cite{DiazHermida04IPMU,DiazHermida04-IEEE,DiazHermida06Tesis}. As we prove in
the paper, this model has a very solid theorethical behaviour, superior to
most of the models defined in the literature. Moreover, we show that the
underlying probabilistic interpretation has very interesting consequences.

\textbf{Keywords:} Quantifier fuzzification mechanism, Determiner
fuzzification schemes, Probabilistic quantification models

\end{abstract}

\section{Introduction}

The evaluation of fuzzy quantified expressions is a topic that has been widely
dealt with in literature
\cite{Barro02,BoscLietard94,Bosc95,Delgado97,Delgado98,Delgado99,Delgado00,DiazHermida00,Dubois85,Dubois89fss,Glockner97,Glockner00,Glockner01,Glockner03Thesis,Glockner03-Generalized,Glockner04Libro,Glockner06Libro,Liu98,Ming2006,Ralescu95,Sanchez99,Yager83,Yager84b,Yager84,Yager88,Yager91,Yager91IFES,Yager92,Yager93,Zadeh83}%
\ since the use of quantified expressions in fields such as fuzzy control
\cite{Yager84}, temporal reasoning in robotics,
\cite{Carinena01tfcis,Carinena03tesis,Mucientes03,Mucientes01}, complex fuzzy
queries in databases \cite{Bosc95,Bosc95Sqlf}, information retrieval
\cite{Bordogna00,bordogna-pasi95,losada-etal04,DiazHermida04IPMU,DiazHermida04-IEEE,Glockner99-image}%
, data fusion \cite{Yager88,Glockner98-Fusion}, etc. can take advantage of
using vague and interpretable quantification models. Moreover, the definition
of adequate models to evaluate quantified expressions is fundamental to
perform \textquotedblleft computing with words\textquotedblright, topic that
was suggested by Zadeh \cite{Zadeh96} to express the ability of programming
systems in a linguistic way. In this paper we analyze the theoretical behavior
and some practical consequences of the $\mathcal{F}^{A}$ model defined on
\cite{DiazHermida04IPMU,DiazHermida04-IEEE}\footnote{Most of the theoretical
results presented in this paper have been previously published in the
dissertation \cite{DiazHermida06Tesis}, in spanish.}. Furthermore, we show
that the underlying probabilistic interpretation of this model hints the
utility of the model for a number of applications.

In general, approaches to fuzzy quantification in the literature use the
concept of \textit{fuzzy linguistic quantifier} \cite{Zadeh83} to represent
absolute or proportional fuzzy quantities. Zadeh \cite{Zadeh83} defines
\textit{quantifiers of the first kind} as quantifiers used for representing
absolute quantities (defined by using fuzzy numbers on $\mathbb{N}$) , and
\textit{quantifiers of the second kind }as quantifiers used for representing
relative quantities (defined by using fuzzy numbers on $\left[  0,1\right]
$). In the literature, quantifiers of the first kind are associated to
sentences involving only one single fuzzy property (as in
\textit{\textquotedblleft about three men are tall\textquotedblright} where
\textit{\textquotedblleft tall\textquotedblright} is a fuzzy property); and
quantifiers of the second kind are associated to sentences involving two fuzzy
properties (as in \textit{\textquotedblleft about 70\% of\ blond men are
tall\textquotedblright} where \textit{\textquotedblleft
blond\textquotedblright} and \textit{\textquotedblleft tall\textquotedblright}
are fuzzy properties). The linguistic quantifier associated to the former
sentence denotes the semantics of \textit{\textquotedblleft about
3\textquotedblright} and is defined by using a fuzzy number with domain on
$\mathbb{N}$. The linguistic quantifier associated to the second sentence
represents the semantics of \textit{\textquotedblleft about
70\%\textquotedblright} and is defined by using a fuzzy number with domain on
$\left[  0,1\right]  $.

Moreover, most of the existing approaches for dealing with fuzzy
quantification are based on the evaluation of the compatibility between the
linguistic quantifier and a scalar, possibilistic or probabilistic cardinality
measure for the involved fuzzy sets. Scalar approaches \cite{Zadeh83}, usually
consist of a simple evaluation of the quantifier on the cardinality value. For
possibilistic approaches, an overlapping measure \textit{SUP-min} is generally
used \cite{Delgado98,Delgado00,Ralescu95} whilst for probabilistic approaches,
\cite{Delgado99,Delgado00,DiazHermida02-FuzzySets} a weighted mean of all the
compatibility values is computed. OWA approaches \cite{Yager88,Yager92} can
also be related to the probabilistic interpretation. A different approach is
used in
\cite{Glockner97,Glockner00,Glockner01,Glockner03Thesis,Glockner03-Generalized,Glockner04Libro,Glockner06Libro}%
, where families of models that are based on a three valued interpretation of
fuzzy sets are defined.

For analyzing the behavior of fuzzy quantification models different properties
of convenient or necessary fulfillment have been defined
\cite{Delgado00,Glockner97,Glockner00,Glockner01,Glockner03Thesis,Glockner03-Generalized,Glockner04Libro,Glockner06Libro,Sanchez99}%
. Most of the approaches in literature fail to exhibit a plausible behavior
\cite{Barro02,Delgado00,Glockner99,Glockner03Thesis,Glockner04Libro,DiazHermida06Tesis,Sanchez99}%
, and only a few
\cite{Delgado00,DiazHermida02-FuzzySets,Glockner97,Glockner00,Glockner01,Glockner03Thesis,Glockner03-Generalized,Glockner04Libro}
seem to exhibit an adequate behavior in the general case.

In this work we will follow the Gl\"{o}ckner approximation to fuzzy
quantification
\cite{Glockner97,Glockner00,Glockner01,Glockner03Thesis,Glockner03-Generalized,Glockner04Libro,Glockner06Libro}%
. In his approach, the author generalizes the concept of \textit{generalized
classic quantifier} \cite{Barwise81,Gamut84,Keenan97VanBenthem} (second order
predicates or set relationships) to the fuzzy case; that is, a \textit{fuzzy
quantifier }is a fuzzy relationship between fuzzy sets. And then rewrites the
fuzzy quantification problem as the problem of looking for mechanism to
transform \textit{semi-fuzzy quantifiers} (quantifiers between generalized
classic quantifiers and fuzzy quantifiers that are adequate to specify the
meaning of quantified expressions) to fuzzy quantifiers.

Moreover, Gl\"{o}ckner has also defined a rigorous axiomatic framework to
assure the good behavior of QFMs. Models fulfilling this framework are called
\textit{Determiner fuzzification schemes (DFSs) }and they fulfill an important
set of appropriate behavior properties.

The main goal of this work is to analyze the behavior of the $\mathcal{F}^{A}$
model \cite{DiazHermida04IPMU,DiazHermida04-IEEE,DiazHermida06Tesis}. This
model has a very solid theoretical behavior, superior to most of the models
defined in the literature. Moreover, we show that the underlying probabilistic
interpretation based on likelihood functions
\cite{Mabuchi92,Thomas95,Tursken2000Fundamentals,Dubois2000Fundamentals} has
very interesting consequences, that assure its utility for a number of
applications. For example, in
\cite{DiazHermida04IPMU,DiazHermida04-IEEE,DiazHermida06Tesis} the application
of the model in a information retrieval task was shown, with competitive
results. In \cite{DiazHermida10Estylf}\ the model has been used in a
summarization application for the evaluation of quantified temporal
expressions. From a theoretical point of view the model is a DFS, although is
only defined in finite domains. The fulfillment of the DFS axioms guarantees a
very good theoretical behavior. As an important point, the fuzzy operators
induced by the model are the product t-norm and the probabilistic sum
t-conorm. This fact makes the $\mathcal{F}^{A}$ model essentially different of
the models defined in \cite{Glockner04Libro} because all those
\textquotedblleft standard models\textquotedblright\ induce the min tnorm and
the max tconorm. To our knowledge, the $\mathcal{F}^{A}$ model is the unique
known non standard DFSs.

The paper is organized as follows. In the first section, we resume the
Gl\"{o}ckner's approach to fuzzy quantification, based on quantifier
fuzzification mechanisms\footnote{A complete explanation of the QFM framework
can be consulted in the excellent work \cite{Glockner06Libro}.}. In the second
section we explain some of the properties that let us to analyze the behavior
of the quantification model. Most of them are a compilation of the properties
defined on \cite[chapters 3 and 4]{Glockner04Libro,Glockner06Libro}, but we
have added to these properties two very interesting properties fulfilled by
the $\mathcal{F}^{A}$ model and by the probabilistic models defined in
\cite{DiazHermida02-FuzzySets}. In section three the $\mathcal{F}^{A}$ QFM is
defined. We also explores the behavior of the model when the cardinality of
the base set tends to infinite, with a surprising relation with original $\sum
Count$ Zadeh's model \cite{Zadeh83}. Next section is devoted to some
interesting consequences of the probabilistic interpretation of the
$\mathcal{F}^{A}$ QFM, with relation with a number of application fields.
Proofs of the properties and efficient algorithm solutions are collected in
two apendixes. A bibliographic analysis of quantification models has not been
included as it can be found in
\cite{Barro02,Delgado00,Glockner99,Glockner03Thesis,Glockner04Libro,DiazHermida06Tesis,Sanchez99}%
.

\section{Quantifier fuzzification mechanisms}

To overcome the Zadeh's framework to fuzzy quantification Gl\"{o}ckner
\cite{Glockner06Libro} rewrites the problem of fuzzy quantification as the
problem of looking for adequate means to convert the specification means
(semi-fuzzy quantifiers) into the operational means (fuzzy quantifiers)
\cite{Glockner06Libro}. In this section we explain in some detail the
framework proposed by Gl\"{o}ckner to achieve that result.

Fuzzy quantifiers are just a fuzzy generalization of crisp or classic
quantifiers. Before giving the definition of fuzzy quantifiers, we will show
the definition of classic quantifiers and some examples:

\begin{definition}
[Classic quantifier.]\cite[pag. 57]{Glockner06Libro} A two valued
(generalized) quantifier on a base set $E\neq\varnothing$ is a mapping
$Q:\mathcal{P}\left(  E\right)  ^{n}\longrightarrow\mathbf{2}$, where
$n\in\mathbb{N}$ is the arity (number of arguments) of $Q$, $\mathbf{2}%
=\left\{  0,1\right\}  $ denotes the set of crisp truth values, and
$\mathcal{P}\left(  E\right)  $ is the powerset of $E$.
\end{definition}

In this work\ we assume the base set $E$ is finite as the $\mathcal{F}^{A}$
model is only defined on finite base sets.

Examples of some definitions of classic quantifiers are:%
\begin{align}
\mathbf{all}\left(  Y_{1},Y_{2}\right)   &  =Y_{1}\subseteq Y_{2}%
\label{TFGQ_1}\\
\mathbf{at\_least}80\%\left(  Y_{1},Y_{2}\right)   &  =\left\{
\begin{array}
[c]{cc}%
\frac{\left\vert Y_{1}\cap Y_{2}\right\vert }{\left\vert Y_{1}\right\vert
}\geq0.80 & X_{1}\neq\varnothing\\
1 & X_{1}=\varnothing
\end{array}
\right. \nonumber
\end{align}

\begin{example}
Let us consider the evaluation of the sentence \textquotedblleft at least
eighty percent of the members are lawyers\textquotedblright\ where the
properties \textquotedblleft members\textquotedblright\ and \textquotedblleft
lawyers\textquotedblright\ are respectively defined as $Y_{1}=\left\{
1,0,1,0,1,0,1,1\right\}  ,Y_{2}=\left\{  1,0,1,0,1,0,0,0\right\}  $, and
\textquotedblleft at least eighty percent\textquotedblright\ is defined in
expression \ref{TFGQ_1}. Then $\mathbf{at\_least}80\%\left(  Y_{1}%
,Y_{2}\right)  =0$.
\end{example}

In a fuzzy quantifier arguments and result can be fuzzy. The definition of a
fuzzy quantifier is:

\begin{definition}
[Fuzzy Quantifier]\cite[pag. 66]{Glockner06Libro} An n-ary fuzzy quantifier
$\widetilde{Q}$ on a base set $E\neq\varnothing$ is a mapping $\widetilde
{Q}:\widetilde{\mathcal{P}}\left(  E\right)  ^{n}\longrightarrow
\mathbf{I=}\left[  0,1\right]  $. Here $\widetilde{\mathcal{P}}\left(
E\right)  $ denotes the fuzzy powerset of $E$.
\end{definition}

A fuzzy quantifier assigns a gradual result to each choice of $X_{1}%
,\ldots,X_{n}\in\widetilde{\mathcal{P}}\left(  E\right)  $.

An example of a fuzzy quantifier could be $\widetilde{\mathbf{all}}%
:\widetilde{\mathcal{P}}\left(  E\right)  ^{2}\longrightarrow\mathbf{I}$. A
reasonable fuzzy definition of the fuzzy quantifier $\widetilde{\mathbf{all}}$
is:
\begin{equation}
\widetilde{\mathbf{all}}\left(  X_{1},X_{2}\right)  =\inf\left\{  \max\left(
1-\mu_{X_{1}}\left(  e\right)  ,\mu_{X_{2}}\left(  e\right)  \right)  :e\in
E\right\}  \label{FuzzyEvaluationAll_1}%
\end{equation}

\begin{example}
Let us consider the evaluation of the sentence \textquotedblleft all big
houses are overvaluated\textquotedblright\ in a referential set $E=\left\{
e_{1},\ldots,e_{4}\right\}  $. Let us assume that properties \textquotedblleft
big\textquotedblright\ and \textquotedblleft overvaluated\textquotedblright%
\ are respectively defined as: $X_{1}=\left\{  0.8/e_{1},1/e_{2}%
,0.6/e_{3},0.3/e_{4}\right\}  $, $X_{2}=\left\{  0.9/e_{1},0.7/e_{2}%
,0.3/e_{3},0.2/e_{4}\right\}  $. If we use expression
(\ref{FuzzyEvaluationAll_1}) then: $\widetilde{\mathbf{all}}\left(
X_{1},X_{2}\right)  =\inf\left\{  \max\left(  1-\mu_{X_{1}}\left(  e\right)
,\mu_{X_{2}}\left(  e\right)  \right)  :e\in E\right\}  =0.4$.
\end{example}

Although a certain consensus may be achieved to accept this previous
expression as a suitable definition for $\widetilde{\mathbf{all}}$ this is not
the unique one. The problem of establishing consistent fuzzy definitions for
quantifiers (e.g., \textit{\textquotedblleft at least eighty
percent\textquotedblright}) is faced in \cite{Glockner06Libro} by introducing
the concept of semi-fuzzy quantifiers. A semi-fuzzy quantifier represents a
medium point between classic quantifiers and fuzzy quantifiers, and it is
close but is far more general than the idea of Zadeh's linguistic quantifiers
\cite{Zadeh83}. A semi-fuzzy quantifier only accepts crisp arguments, as
classic quantifiers, but lets the result range on the truth grade scale
$\mathbf{I}$, as for fuzzy quantifiers\footnote{An interesting classification
of semi-fuzzy quantifiers is shown in \cite{DiazHermida02-clasificacion}. In
\cite[chapter 4]{DiazHermida06Tesis} an extended classification is defined.}.

\begin{definition}
[Semi-fuzzy quantifier]\cite[pag. 71]{Glockner06Libro} An n-ary semi-fuzzy
quantifier $Q$ on a base set $E\neq\varnothing$ is a mapping $Q:\mathcal{P}%
\left(  E\right)  ^{n}\longrightarrow\mathbf{I}$.
\end{definition}

$Q$ assigns a gradual result to each pair of crisp sets $\left(  Y_{1}%
,\ldots,Y_{n}\right)  $.

Examples of semi-fuzzy quantifiers are:
\begin{align}
\mathbf{about\_5}\left(  Y_{1},Y_{2}\right)   &  =T_{2,4,6,8}\left(
\left\vert Y_{1}\cap Y_{2}\right\vert \right) \label{about_or_more_80}\\
\mathbf{at\_least}\_\mathbf{about}80\%\left(  Y_{1},Y_{2}\right)   &
=\left\{
\begin{array}
[c]{cc}%
S_{0.5,0.8}\left(  \frac{\left\vert Y_{1}\cap Y_{2}\right\vert }{\left\vert
Y_{1}\right\vert }\right)  & X_{1}\neq\varnothing\\
1 & X_{1}=\varnothing
\end{array}
\right. \nonumber
\end{align}
where $T_{2,4,6,8}\left(  x\right)  $ and $S_{0.5,0.8}\left(  x\right)  $ are
shown in figure (\ref{FigCuantificador})\footnote{Functions $T_{a,b,c,d}$ and
$S_{\alpha,\gamma}$ are defined as
\[
T_{a,b,c,d}\left(  x\right)  =\left\{
\begin{array}
[c]{cc}%
0 & x\leq a\\
\frac{x-a}{b-a} & a<x\leq b\\
1 & b<x\leq c\\
1-\frac{x-c}{d-c} & c<x\leq d\\
0 & d<x
\end{array}
\right.  \quad,S_{\alpha,\gamma}\left(  x\right)  =\left\{
\begin{tabular}
[c]{ll}%
$0$ & $x<\alpha$\\
$2\left(  \frac{\left(  x-\alpha\right)  }{\left(  \gamma-\alpha\right)
}\right)  ^{2}$ & $\alpha<x\leq\frac{\alpha+\gamma}{2}$\\
$1-2\left(  \frac{\left(  x-\gamma\right)  }{\left(  \gamma-\alpha\right)
}\right)  ^{2}$ & $\frac{\alpha+\gamma}{2}<x\leq\gamma$\\
$1$ & $\gamma<x$%
\end{tabular}
\ \ \ \ \ \ \ \ \ \ \ \ \ \ \ \right.
\]
\par
In this work, we will use the following relative definitions for the
existential and the universal fuzzy number:
\par%
\[
\exists\left(  x\right)  =\left\{
\begin{array}
[c]{cc}%
0 & x=0\\
1 & x>0
\end{array}
\right.  \quad,\forall\left(  x\right)  =\left\{
\begin{tabular}
[c]{ll}%
$0$ & $x<1$\\
$1$ & $x=1$%
\end{tabular}
\ \ \ \ \ \ \ \ \ \ \ \ \ \ \ \right.
\]
}.

\begin{example}
Let us consider the evaluation of the sentence \textquotedblleft about at
least 80\% the students are Spanish". Let us assume that properties
\textquotedblleft students\textquotedblright\ and \textquotedblleft
Spanish\textquotedblright\ are respectively defined as: $Y_{1}=\left\{
1,0,1,0,1,0,1,1\right\}  ,Y_{2}=\left\{  1,0,1,0,1,0,0,0\right\}  $, then
$\mathbf{at\_least}\_\mathbf{about}80\%\left(  Y_{1},Y_{2}\right)
=S_{0.5,0.8}^{{}}\left(  \frac{\left\vert Y_{1}\cap Y_{2}\right\vert
}{\left\vert Y_{1}\right\vert }\right)  =0.22$.
\end{example}

Semi-fuzzy quantifiers are much more intuitive and easier to define than fuzzy
quantifiers, but they do not solve the problem of evaluating fuzzy quantified sentences.

In order to do so mechanisms are needed that enable us to transform semi-fuzzy
quantifiers into fuzzy quantifiers, i.e., mappings with domain in the universe
of semi-fuzzy quantifiers and range in the universe of fuzzy quantifiers.
Glockner names those mechanisms \textit{quantifier fuzzification mechanisms}.

\begin{definition}
\cite[pag. 74]{Glockner06Libro}A quantifier fuzzification mechanism (QFM)
$\mathcal{F}$ assigns to each semi-fuzzy quantifier $Q:\mathcal{P}\left(
E\right)  ^{n}\rightarrow\mathbf{I}$ a corresponding fuzzy quantifier
$\mathcal{F}\left(  Q\right)  :\widetilde{\mathcal{P}}\left(  E\right)
^{n}\rightarrow\mathbf{I}$ of the same artity $n\in\mathbb{N}$ and on the same
base set.
\end{definition}

%

\begin{figure}
[ptb]
\begin{center}
\includegraphics[
height=3.4541cm,
width=9.7066cm
]%
{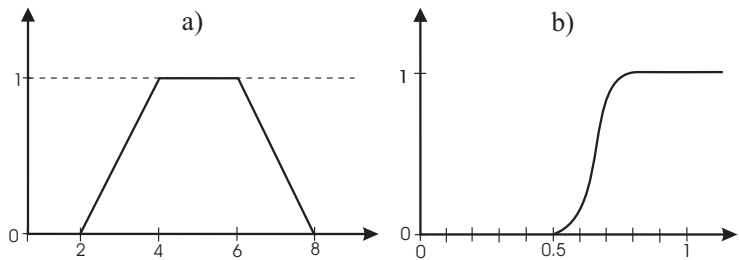}%
\caption{quantifiers \textbf{about\_5} (a) and \textbf{at\_least\_about\_80\%}
(b)}%
\label{FigCuantificador}%
\end{center}
\end{figure}

\section{Some properties to guarantee the good behavior of QFMs}

Before proceeding to explain the QFM $\mathcal{F}^{A}$ we will introduce some
of the properties that let us to guarantee a good behavior of the QFMs. For
the sake of brevity, we have only selected some of the more important
properties to characterize the behavior of quantification models. A complete
and detailed exposition, showing the intuitions under those definitions can be
found in \cite[chapters three and four.]{Glockner06Libro}.

The set of properties is organized in three sets. First set is composed of the
most important properties that are consequence of the DFS axioms. Second group
is composed of some properties that are not consequence of the DFS framework
but are important to characterize the behavior of QFMs for different reasons.
The last group includes two very important properties that the $\mathcal{F}%
^{A}$ model and the probabilistic models defined on
\cite{DiazHermida02-FuzzySets} fulfills\footnote{One of the models defined in
\cite{DiazHermida02-FuzzySets} is a generalization of an original proposal of
Delgado et al. \cite{Delgado99,Delgado00} to semi-fuzzy quantifiers.}.

In the appendix we show the proof of those properties for the $\mathcal{F}%
^{A}$ QFM.

\subsection{Some properties that are consequence of the DFS axiomatic
framework}

\subsubsection{Correct generalization property (P.1)}

Perhaps the most fundamental property to be fulfilled by a QFM is the correct
generalization property. This property, defined independently by Gl\"{o}ckner
\cite{Glockner97} for QFMs and by Delgado et al. for models following the
Zadeh's framework \cite{Sanchez99, Delgado00}, requires that the behavior of a
fuzzy quantifier $\mathcal{F}\left(  Q\right)  $ on crisp arguments was the
expected; that is, the results obtained with a fuzzy quantifier $\mathcal{F}%
\left(  Q\right)  $ and with the corresponding semi-fuzzy quantifier $Q$ must
coincide on crisp arguments.

We show now the definition of the property:

\begin{definition}
[\textbf{Property of correct generalization}]\cite[pag. 112]{Glockner06Libro}%
\textbf{\ }Let $Q:\mathcal{P}\left(  E\right)  ^{n}\rightarrow\mathbf{I},n>0$
be an n-ary semi-fuzzy quantifier. We say that a QFM $\mathcal{F}$ fulfills
the property of correct generalization if for all the crisp subsets
$Y_{1},\ldots,Y_{n}\in\mathcal{P}\left(  E\right)  $, then it holds
$\mathcal{F}\left(  Q\right)  \left(  Y_{1},\ldots,Y_{n}\right)  =Q\left(
Y_{1},\ldots,Y_{n}\right)  $.
\end{definition}

For a detailed explanation of this property \cite[Sections 3.2. and
4.2.]{Glockner06Libro} can be consulted.

For example, given crisp sets $Y_{1},Y_{2}\in\mathcal{P}\left(  E\right)  $,
$Y_{1}=student$, $Y_{2}=spanish$, then this property guarantees that
\[
\mathcal{F}\left(  \mathbf{some}\right)  \left(  student,spanish\right)
=\mathbf{some}\left(  student,spanish\right)
\]

In the DFS axiomatic framework it is sufficient to guarantee this property in
the unary case.

\subsubsection{Membership assessment (P.2)}

This property is related with the evaluation of the membership grade of a
particular element \cite[section 3.3.]{Glockner06Libro}, and belongs to the
set of axioms that are used to characterize the DFSs.

In the classic case, we can define a crisp quantifier $\pi_{e}:\mathcal{P}%
\left(  E\right)  \rightarrow\mathbf{2}$ that test if the element $e$ belongs
to the argument set. In the same way, in the fuzzy case, we can define a fuzzy
quantifier $\widetilde{\pi}_{e}$ that returns the membership grade of $e$. It
is natural to require that a reasonable QFM $\mathcal{F}$ maps $\pi_{e}$ to
$\widetilde{\pi}_{e}$.

The formal definitions of $\pi_{e}$ and $\widetilde{\pi}_{e}$ are:

\begin{definition}
\cite[pag. 88]{Glockner06Libro} Let $E$ a base set and $e\in E$. The
projection quantifier $\pi_{e}:\mathcal{P}\left(  E\right)  \rightarrow
\mathbf{2}$ is defined by $\pi_{e}\left(  Y\right)  =\chi_{Y}\left(  e\right)
$ for all $Y\in\mathcal{P}\left(  E\right)  $, where $\chi_{Y}\left(
e\right)  $ denotes the crisp characteristic funtion of the set $Y$.
\end{definition}

The corresponding fuzzy definition is:

\begin{definition}
\cite[pag. 88]{Glockner06Libro} Let a base set $E$ be given and $e\in E$. The
fuzzy projection quantifier $\widetilde{\pi}_{e}:\widetilde{\mathcal{P}%
}\left(  E\right)  \rightarrow\mathbf{2}$ is defined by $\widetilde{\pi}%
_{e}\left(  Y\right)  =\chi_{Y}\left(  e\right)  $ for all $Y\in
\mathcal{P}\left(  E\right)  $.
\end{definition}

Using these definitions the property that establishes that a QFM $\mathcal{F}$
generalizes the quantifier $\pi_{e}$ in the correct way is defined:

\begin{definition}
[Projection quantifiers]\cite[pag. 89, pag. 112]{Glockner06Libro}%
\label{DefPropValorVerdadInducido} Let $\mathcal{F}$ a QFM. $\mathcal{F}$
fulfills the property of projection quantifiers if it holds $\mathcal{F}%
\left(  \pi_{e}\right)  =\widetilde{\pi}_{e}$ for $E\neq\varnothing$ and $e\in
E$.
\end{definition}

\subsubsection{Induced operators (P3)}

Gl\"{o}ckner explains that a QFM can be used to transform crisp logical
operators into fuzzy operators. For example, logical \textquotedblleft
or\textquotedblright\ can be extended by using the following semi-fuzzy
quantifier defined on a referential set $E$ composed by two elements
($E=\left\{  e_{1},e_{2}\right\}  $):%
\[
Q_{\vee}\left(  X\right)  =\left\{
\begin{array}
[c]{cc}%
0 & if\text{ }X=\varnothing\\
1 & if\text{ }X=\left\{  e_{1}\right\}  \vee X=\left\{  e_{2}\right\}  \vee
X=\left\{  e_{1},e_{2}\right\}
\end{array}
\right.
\]
and in this way is possible to define the fuzzy logical function
$\widetilde{\vee}$ that is induced by the fuzzification mechanism
$\mathcal{F}$ as
\[
\widetilde{\vee}\left(  x_{1},x_{2}\right)  =\widetilde{\mathcal{F}}\left(
\vee\right)  \left(  x_{1},x_{2}\right)  =\mathcal{F}\left(  Q_{\vee}\right)
\left(  \left\{  x_{1}/e_{1},x_{2}/e_{2}\right\}  \right)
\]

This construction is shown in \cite{Glockner00,Glockner01}, \cite[Section
3.4]{Glockner06Libro}. In \cite[Secci\'{o}n 1]{Glockner97}, \cite[Section
4.4]{Glockner06Libro} a different construction is shown.

To formally define this property the next bijection $\eta:\mathbf{2}%
^{n}\rightarrow\mathcal{P}\left(  \left\{  1,\ldots,n\right\}  \right)  $ is
needed:%
\[
\eta\left(  x_{1},\ldots,x_{n}\right)  =\left\{  k\in\left\{  1,\ldots
,n\right\}  :x_{k}=1\right\}
\]
for all $x_{1},\ldots,x_{n}\in\mathbf{2}$. In the fuzzy case the analogous
bijection is $\mu_{\widetilde{\eta}\left(  x_{1},\ldots,x_{n}\right)  }\left(
k\right)  =x_{k}$ for all $x_{1},\ldots,x_{n}\in\mathbf{I}$ and $k\in\left\{
1,\ldots,n\right\}  $.

These bijections are used to transform the fuzzy truth functions (i.e.
mappings $\mathbf{2}^{n}\rightarrow\mathbf{I}$) in semi-fuzzy quantifiers
$Q_{f}:\mathcal{P}\left(  \left\{  1,\ldots,n\right\}  \right)  \rightarrow
\mathbf{I}$\textbf{.} In the same way fuzzy quantifieres $\widetilde
{Q}:\widetilde{\mathcal{P}}\left(  \left\{  1,\ldots,n\right\}  \right)
\rightarrow\mathbf{I}$ can be transformed in fuzzy truth functions
$\widetilde{f}:\mathbf{I}^{n}\rightarrow\mathbf{I}$.

The definition that let us to transform semi-fuzzy truth function in fuzzy
truth functions by means of a QFM is the following:

\begin{definition}
\cite[pag. 90]{Glockner06Libro} Suppose $\mathcal{F}$ is a QFM and
$f:\mathbf{2}^{n}\rightarrow\mathbf{I}$ is a mapping (i.e. a `semi-fuzzy truth
funtion') for some $n\in\mathbb{N}$. The semi-fuzzy quantifier $Q_{f}%
:\mathcal{P}\left(  \left\{  1,\ldots,n\right\}  \right)  \rightarrow
\mathbf{I}$ is defined by $Q_{f}\left(  Y\right)  =f\left(  \eta^{-1}\left(
Y\right)  \right)  $ for all $Y\in\mathcal{P}\left(  \left\{  1,\ldots
,n\right\}  \right)  $. In terms of $Q_{f}$, the induced fuzzy truth function
$\widetilde{\mathcal{F}}\left(  f\right)  :\mathbf{I}^{n}\rightarrow
\mathbf{I}$ is defined by%
\[
\widetilde{\mathcal{F}}\left(  f\right)  \left(  x_{1},\ldots,x_{n}\right)
=\widetilde{\mathcal{F}}\left(  Q_{f}\right)  \left(  \eta^{-1}\left(
x_{1},\ldots,x_{n}\right)  \right)
\]
for all $x_{1},\ldots,x_{n}\in\mathbf{I}$.
\end{definition}

The construction allows us to transform the usual crisp logical operators
($\lnot$, $\wedge$, $\vee$, $\rightarrow$) into the analogous fuzzy operators
($\widetilde{\lnot}$, $\widetilde{\wedge}$, $\widetilde{\vee}$, $\widetilde
{\rightarrow}$). For a reasonable QFM we should expect that the induced
operators were fuzzy valid operators.

For a DFS the next property is guaranteed\footnote{This is a resume of the
longer exposition maked in \cite[section 4.3]{Glockner06Libro}.}:

\begin{definition}
[Property of the induced truth functions]Truth operations induced by a
quantifier fuzzification mechanism must be coherent with fuzzy logic;\ i.e.,
the following must hold:\newline a. $\widetilde{id_{\mathbf{I}}}=\widetilde
{F}\left(  id_{2}\right)  $ (where $id_{2}:\mathbf{2}\rightarrow\mathbf{2}$ is
the bivalued identity truth function) is the fuzzy identity truth
function.\newline b. $\widetilde{\lnot}=\widetilde{F}\left(  \lnot\right)  $
is a strong negation operator.\newline c. $\widetilde{\wedge}=\widetilde
{F}\left(  \wedge\right)  $ is a tnorm.\newline d. $\widetilde{\vee
}=\widetilde{F}\left(  \vee\right)  $ is a tconorm.\newline e. $\widetilde
{\rightarrow}=\widetilde{F}\left(  \rightarrow\right)  $ is an implication function.
\end{definition}

In this manner it is guaranteed that the fuzzy operators that are generated
are reasonable from the perspective of fuzzy logic. For example,
for\ $\mathcal{F}\left(  \mathbf{some}\right)  \left(  tall,blond\right)  $
where $tall=\left\{  0.7/John\right\}  $ and $blond=\left\{  0.4/John\right\}
$ it is guaranteed we obtaine the result of using the\ induced $tconorm$ on
$\left(  0.7,0.4\right)  $.

\subsubsection{External negation property (P.4)}

Now we are going to present a set of three very important properties from a
linguistic point of view. The properties of\textit{ external negation},
\textit{internal negation} and \textit{duality}. We will begin defining the
external negation property \cite[section 3.5]{Glockner06Libro}:

\begin{definition}
[\textbf{External negation}]\cite[pag. 93]{Glockner06Libro}The external
negation of a semi-fuzzy quantifier $Q:\mathcal{P}\left(  E\right)
^{n}\rightarrow\mathbf{I}$ is defined by $\left(  \widetilde{\lnot}Q\right)
\left(  Y_{1},\ldots,Y_{n}\right)  =\widetilde{\lnot}\left(  Q\left(
Y_{1},\ldots,Y_{n}\right)  \right)  $ for all $Y_{1},\ldots,Y_{n}%
\in\mathcal{P}\left(  E\right)  $. The definition of $\widetilde{\lnot
}\widetilde{Q}:\widetilde{\mathcal{P}}\left(  E\right)  \rightarrow\mathbf{I}$
in the case of fuzzy quantifiers $\widetilde{Q}:\widetilde{\mathcal{P}}\left(
E\right)  \rightarrow\mathbf{I}$ is analogous\footnote{The reasonable choice
of the fuzzy negation $\widetilde{\lnot}:\mathbf{I}\rightarrow\mathbf{I}$ is
the induced negation of the QFM.}.
\end{definition}

From a linguistic point of view, the external negation of
\textit{\textquotedblleft all the students are spanish\textquotedblright} is
\textit{\textquotedblleft not all the students are spanish\textquotedblright.}

A QFM correctly generalizes the external negation property if it fulfills the
next property:\footnote{The property of external negation is one of the
initial axioms of the axiomatic framework presented in \cite[pag.
22]{Glockner97} to define the DFSs.}

\begin{definition}
[\textbf{External negation property.}]\cite[pag. 22]{Glockner97},
\cite[section 3.5]{Glockner06Libro}\label{DefPropNegExterna}\textbf{\ }Let
$Q:\mathcal{P}\left(  E\right)  ^{n}\rightarrow\mathbf{I}$ a semi-fuzzy
quantifier. $\mathcal{F}$ fulfills the property of external negation if
$\mathcal{F}\left(  \widetilde{\lnot}Q\right)  =\widetilde{\lnot}%
\mathcal{F}\left(  Q\right)  $.
\end{definition}

For example, the fulfillment of this property assures:%
\[
\mathcal{F}\left(  \text{\textbf{at most 10}}\right)  \left(  X_{1}%
,X_{2}\right)  =\mathcal{F}\left(  \widetilde{\lnot}\text{\textbf{at least
11}}\right)  \left(  X_{1},X_{2}\right)  =\widetilde{\lnot}\mathcal{F}\left(
\text{\textbf{at least 11}}\right)  \left(  X_{1},X_{2}\right)
\]
That is, the equivalence between the expressions \textit{\textquotedblleft at
most ten rich students are intelligent\textquotedblright} and
\textit{\textquotedblleft no more than eleven rich students are
intelligent\textquotedblright} is assured in the fuzzy case.

\subsubsection{Internal negation property (P.5)}

The internal negation or antonym of a semi-fuzzy quantifier is defined as:

\begin{definition}
[\textbf{Internal negation.}]\cite[pag. 93]{Glockner06Libro} Let a semi-fuzzy
quantifier $Q:\mathcal{P}\left(  E\right)  ^{n}\rightarrow\mathbf{I}$ of arity
$n>0$ be given. The internal negation $Q\lnot:\mathcal{P}\left(  E\right)
^{n}\rightarrow\mathbf{I}$ of $Q$ is defined by
\[
Q\lnot\left(  Y_{1},\ldots,Y_{n}\right)  =Q\lnot\left(  Y_{1},\ldots,\lnot
Y_{n}\right)
\]
for all $Y_{1},\ldots,Y_{n}\in\mathcal{P}\left(  E\right)  $. The internal
negation $\widetilde{Q}\widetilde{\lnot}:\widetilde{\mathcal{P}}\left(
E\right)  ^{n}\rightarrow\mathbf{I}$ of a fuzzy quantifier $\widetilde
{Q}:\widetilde{\mathcal{P}}\left(  E\right)  ^{n}\rightarrow\mathbf{I}$ is
defined analogously, based on the given fuzzy complement $\widetilde{\lnot}$.
\end{definition}

For example, the internal negation of $\mathbf{all:}\mathcal{P}\left(
E\right)  ^{2}\rightarrow\mathbf{I}$ is $\mathbf{no}:\mathcal{P}\left(
E\right)  ^{2}\rightarrow\mathbf{I}$ because%
\[
\mathbf{all}\left(  Y_{1},Y_{2}\right)  \lnot=\mathbf{all}\left(  Y_{1},\lnot
Y_{2}\right)  =\mathbf{no}\left(  Y_{1},Y_{2}\right)
\]

The definition of the property of internal negation is:\footnote{The property
of internal negation is one of the initial axioms of the axiomatic framework
presented in \cite[pag. 22]{Glockner97} to define the DFSs.}

\begin{definition}
[\textbf{Internal negation property}]\cite[pag. 22]{Glockner97}\cite[section
3.5]{Glockner06Libro}\label{DefPropNegInterna}\textbf{ }Let $Q:\widetilde
{\mathcal{P}}\left(  E\right)  ^{n}\rightarrow\mathbf{I}$ be a semi-fuzzy
quantifier of arity $n>0$. A QFM $\mathcal{F}$ fulfills the property of
internal negation if $\mathcal{F}\left(  Q\lnot\right)  =\mathcal{F}\left(
Q\right)  \widetilde{\lnot}$.
\end{definition}

For example, this property assures%
\[
\mathcal{F}\left(  \mathbf{all}\right)  \left(  X_{1},X_{2}\right)
=\mathcal{F}\left(  \mathbf{all}\lnot\right)  \left(  X_{1},\widetilde{\lnot
}X_{2}\right)  =\mathcal{F}\left(  \mathbf{no}\right)  \left(  X_{1}%
,\widetilde{\lnot}X_{2}\right)
\]

That is, the equivalence between the expressions \textit{\textquotedblleft all
big houses are overvaluated\textquotedblright} and \textit{\textquotedblleft
no big houses are undervaluated\textquotedblright} is assured in the fuzzy case.

\subsubsection{Duality property (P.6)}

This property is a consequence of the fulfillment of the external and internal
negation properties. In \cite{Glockner06Libro} is one of the axioms used to
define the DFSs.

\begin{definition}
[\textbf{Dual quantifier.}]\cite[pag. 99]{Glockner04Libro}The dual
$Q\widetilde{\square}:\mathcal{P}\left(  E\right)  ^{n}\rightarrow\mathbf{I}$
of a semi-fuzzy quantifier $Q\widetilde{\square}:\mathcal{P}\left(  E\right)
^{n}\rightarrow\mathbf{I}$, $n>0$ is defined by%
\[
Q\widetilde{\square}\left(  Y_{1},\ldots,Y_{n}\right)  =\widetilde{\lnot
}Q\left(  Y_{1},\ldots,\lnot Y_{n}\right)
\]
for all $Y_{1},\ldots,Y_{n}\in\mathcal{P}\left(  E\right)  $. The dual
$\widetilde{Q}\widetilde{\square}=\widetilde{\lnot}\widetilde{Q}%
\widetilde{\lnot}$ of a fuzzy quantifier $\widetilde{Q}$ is defined analogously.
\end{definition}

For example, the dual of $\mathbf{all:}\mathcal{P}\left(  E\right)
^{2}\rightarrow\mathbf{I}$ is%
\[
\mathbf{all}\widetilde{\square}\left(  Y_{1},Y_{2}\right)  =\widetilde{\lnot
}\mathbf{all}\left(  Y_{1},\lnot Y_{2}\right)  =\mathbf{some}\left(
Y_{1},Y_{2}\right)
\]

Using the axiom of duality \cite[pag. 94-96]{Glockner06Libro} the duality
property can be defined:

\begin{definition}
[\textbf{Duality property}]\label{DefPropDualidad}\textbf{\ }We say that a QFM
$\mathcal{F}$ fulfills the property of duality if for all semi-fuzzy
quantifiers $Q:\widetilde{\mathcal{P}}\left(  E\right)  ^{n}\rightarrow
\mathbf{I}$ of arity $n>0$ $\mathcal{F}\left(  Q\widetilde{\square}\right)
=\mathcal{F}\left(  Q\right)  \widetilde{\square}$.
\end{definition}

For example this property assures that%
\[
=\mathcal{F}\left(  \mathbf{all}\right)  \widetilde{\square}\left(
X_{1},X_{2}\right)  =\mathcal{F}\left(  \mathbf{some}\right)  \left(
X_{1},X_{2}\right)
\]
that is, the equivalence of the sentences \textit{\textquotedblleft not all
the expensives cars are not good\textquotedblright} and
\textit{\textquotedblleft some expensive car is good\textquotedblright} is
assured in the fuzzy case.

\subsubsection{Internal meets property
(P.7)\label{PropiedadEncuentrosInternos}}

In combination with negation properties, this property assures boolean
combination of arguments are mapped to the fuzzy case.

First, we show the \textquotedblleft union\textquotedblright\ and
\textquotedblleft intersection\textquotedblright\ quantifiers:

\begin{definition}
[\textbf{Union quantifier}]\cite[section 3.7]{Glockner06Libro} Let
$Q:\mathcal{P}\left(  E\right)  ^{n}\rightarrow\mathbf{I}$ be a semi-fuzzy
quantifier, $n>0$, be given. We define the fuzzy quantifier $Q\cup
:\mathcal{P}\left(  E\right)  ^{n+1}\rightarrow\mathbf{I}$ as
\[
Q\cup\left(  Y_{1},\ldots,Y_{n},Y_{n+1}\right)  =Q\left(  Y_{1},\ldots
,Y_{n-1},Y_{n}\cup Y_{n+1}\right)
\]
for all $Y_{1},\ldots,Y_{n+1}\in\mathcal{P}\left(  E\right)  $. In the case of
fuzzy quantifiers $\widetilde{Q}\widetilde{\cup}$ is defined analogously,
based on a fuzzy definition of $\widetilde{\cup}$.
\end{definition}

\begin{definition}
[\textbf{Intersection quantifier}]Let $Q:\mathcal{P}\left(  E\right)
^{n}\rightarrow\mathbf{I}$ a semi-fuzzy quantifier, $n>0$, be given. We define
the semi-fuzzy quantifier $Q\cap:\mathcal{P}\left(  E\right)  ^{n+1}%
\rightarrow\mathbf{I}$ as%
\[
Q\cap\left(  Y_{1},\ldots,Y_{n},Y_{n+1}\right)  =Q\left(  Y_{1},\ldots
,Y_{n-1},Y_{n}\cap Y_{n+1}\right)
\]
for all $Y_{1},\ldots,Y_{n+1}\in\mathcal{P}\left(  E\right)  $. In the case of
fuzzy quantifiers $\widetilde{Q}\widetilde{\cap}$ is defined analogously,
based on a fuzzy definition of $\widetilde{\cap}$.
\end{definition}

Expressions like \textit{\textquotedblleft all }$Y_{1}$ are $Y_{2}$ or $Y_{2}%
$\textquotedblright\ where $Y_{1},Y_{2},Y_{3}$ are crisp can be evaluated by
means of less arity quantifiers with these constructions:%

\[
\mathbf{all}\cup\left(  Y_{1},Y_{2},Y_{3}\right)  =\mathbf{all}\left(
Y_{1},Y_{2}\cup Y_{3}\right)
\]

The definition of the property is:

\begin{definition}
[\textbf{Internal meets property}]\label{DefPropInterUnion}\cite[pag.
97]{Glockner06Libro}\textbf{\ }Let $Q:\mathcal{P}\left(  E\right)
^{n}\rightarrow\mathbf{I}$ a semi-fuzzy quantifier, $n>0$. We will say a QFM
$\mathcal{F}$ preserves the property of internal meets if:%
\begin{align*}
\mathcal{F}\left(  Q\cup\right)   &  =\mathcal{F}\left(  Q\right)
\widetilde{\cup}\\
\mathcal{F}\left(  Q\cap\right)   &  =\mathcal{F}\left(  Q\right)
\widetilde{\cap}%
\end{align*}
\smallskip
\end{definition}

As a consequence,%
\begin{align*}
\mathcal{F}\left(  \mathbf{\exists}\right)  \left(  X_{1}\widetilde
{\mathbf{\cap}}X_{2}\right)   &  =\mathcal{F}\left(  \mathbf{\exists}\right)
\widetilde{\mathbf{\cap}}\left(  X_{1},X_{2}\right) \\
&  =\mathcal{F}\left(  \mathbf{\exists\cap}\right)  \left(  X_{1},X_{2}\right)
\\
&  =\mathcal{F}\left(  \text{\textbf{some}}\right)  \left(  X_{1}%
,X_{2}\right)
\end{align*}

\subsubsection{Monotonicity in arguments property (P.8)
\label{PropiedadMonotoniaArgumentos}}

In this section we present the property of monotonicity in arguments. This
property is one of the axioms used to define the DFSs.

\begin{definition}
[\textbf{Monotonicity}]\cite[pag. 98]{Glockner06Libro} A semi-fuzzy quantifier
$Q:\mathcal{P}\left(  E\right)  ^{n}\rightarrow\mathbf{I}$ is said to be
nondecreasing in its i-th argument, $i\in\left\{  1,\ldots,n\right\}  $ if%
\[
Q\left(  Y_{1},\ldots,Y_{i},\ldots,Y_{n}\right)  \leq Q\left(  Y_{1}%
,\ldots,Y_{i-1},Y_{i}^{\prime},Y_{i+1},\ldots,Y_{n}\right)
\]
whenever the involved arguments $Y_{1},\ldots,Y_{n},Y_{i}^{\prime}%
\in\mathcal{P}\left(  E\right)  $ satisfy $Y_{i}\subseteq Y_{i}^{\prime}$. $Q$
is said to be nonincreasing in the i-th argument if under the same conditions,
it always holds that
\[
Q\left(  Y_{1},\ldots,Y_{i},\ldots,Y_{n}\right)  \geq Q\left(  Y_{1}%
,\ldots,Y_{i-1},Y_{i}^{\prime},Y_{i+1},\ldots,Y_{n}\right)
\]
The corresponding definitions for fuzzy quantifiers $Q:\widetilde{\mathcal{P}%
}\left(  E\right)  ^{n}\rightarrow\mathbf{I}$ are entirely analogous. In this
case, the arguments range over $\widetilde{\mathcal{P}}\left(  E\right)  $,
and `$\subseteq$' is the usual fuzzy inclusion relation ($X_{1}\subseteq
X_{2}$ if $\mu_{X_{1}}\left(  e\right)  \leq\mu_{X_{2}}\left(  e\right)  $ for
all $e\in E$).
\end{definition}

For example, the semi-fuzzy quantifier $\mathbf{some}:\mathcal{P}\left(
E\right)  ^{2}\rightarrow\mathbf{I}$ is monotonic nondecreasing in both arguments.

The next property guarantees the extension of the monotonicity to fuzzy quantifiers:

\begin{definition}
[\textbf{Monotonicity property}]\label{DefPropMonotoniaArg}\cite[pag.
100]{Glockner06Libro}A QFM $\mathcal{F}$ is said to preserve monotonicity in
the arguments if semi-fuzzy quantifiers $Q:\mathcal{P}\left(  E\right)
^{n}\rightarrow\mathbf{I}$ which are nondecreasing (nonincreasing) in their
i-th argument $i\in\left\{  1,\ldots,n\right\}  $ are mapped to fuzzy
quantifiers $\mathcal{F}$ which are also nondecreasing (nonincreasing) in
their $i$-ih argument.
\end{definition}

For example, if a QFM $\mathcal{F}$ guarantees this property then
$\mathcal{F}\left(  \mathbf{some}\right)  :\widetilde{\mathcal{P}}\left(
E\right)  ^{2}\rightarrow\mathbf{I}$ is monotonic non-decreasing in both arguments.

\subsubsection{Monotonicity in quantifiers property
(P.9)\ \label{PropiedadMonotoniaCuantificadores}}

The \textit{property of monotonicity in quantifiers} is a very important
consequence of the DFS axioms \cite{Glockner97,Glockner06Libro}.
Independently, this property has also been defined in \cite[pag.
73]{Sanchez99},\cite{Delgado00} for unary quantifiers with the name of
\textit{property of inclusion of quantifiers}.

This property establishes that if a semi-fuzzy quantifier $Q$ is included in
other semi-fuzzy quantifier $Q^{\prime}$ (i.e., the results of $Q$ are smaller
than the results of $Q^{\prime}$ for all the selections of crisp arguments
$Y_{1},\ldots,Y_{n}\in\mathcal{P}\left(  E\right)  $) then the fuzzy extension
$\mathcal{F}\left(  Q\right)  $ is also included in $\mathcal{F}\left(
Q^{\prime}\right)  $.

\begin{definition}
[\textbf{Monotonicity in the quantifiers}]\cite[pag. 128]{Glockner06Libro}
Suppose $Q,Q^{\prime}:\mathcal{P}\left(  E\right)  ^{n}\rightarrow\mathbf{I}$
are semi-fuzzy quantifiers. Let us write $Q\leq Q^{\prime}$ if for all
$Y_{1},\ldots,Y_{n}\in\mathcal{P}\left(  E\right)  $, $Q\left(  Y_{1}%
,\ldots,Y_{n}\right)  \leq Q^{\prime}\left(  Y_{1},\ldots,Y_{n}\right)  $. On
fuzzy quantifiers we define $\leq$ analogously, based on arguments in
$\widetilde{\mathcal{P}}\left(  E\right)  $.
\end{definition}

For example, for the following semi-fuzzy quantifiers%
\begin{align}
Q\left(  X_{1},X_{2}\right)   &  =\left\{
\begin{array}
[c]{cc}%
S_{0.5,0.7}\left(  \frac{\left\vert X_{1}\cap X_{2}\right\vert }{\left\vert
X_{1}\right\vert }\right)  & X_{1}\neq\varnothing\\
1 & X_{1}=\varnothing
\end{array}
\right. \label{EqMonotoniaCuantificador_1}\\
Q^{\prime}\left(  X_{1},X_{2}\right)   &  =\left\{
\begin{array}
[c]{cc}%
S_{0.3,0.5}\left(  \frac{\left\vert X_{1}\cap X_{2}\right\vert }{\left\vert
X_{1}\right\vert }\right)  & X_{1}\neq\varnothing\\
1 & X_{1}=\varnothing
\end{array}
\right. \nonumber
\end{align}
it holds that $Q\leq Q^{\prime}$.

The next property is defined based on the Theorem 4.32 in \cite[pag.
128]{Glockner06Libro}.

\begin{definition}
[\textbf{Property of monotonicity in quantifiers}]%
\label{DefPropMonotoniaCuant}\textbf{\ }Suppose $\mathcal{F}$ is a QFM, and
$Q,Q^{\prime}:\mathcal{P}\left(  E\right)  ^{n}\rightarrow\mathbf{I}$ are
semi-fuzzy quantifiers. We say that $\mathcal{F}$ fulfills the property of
monotonicity in quantifiers if and only if $\mathcal{F}\left(  Q\right)
\leq\mathcal{F}\left(  Q^{\prime}\right)  $.
\end{definition}

This property guarantees that $\mathcal{F}\left(  Q\right)  \leq
\mathcal{F}\left(  Q^{\prime}\right)  $ for the semi-fuzzy quantifiers defined
on the expression \ref{EqMonotoniaCuantificador_1}.

\subsubsection{Property of functional application (P.10)}

The \textit{property of compatibility with functional application} forms part
of the axioms that are used to define the DFSs \cite{Glockner06Libro}. This
property requires that a QFM must be compatible with its induced extension principle.

\begin{definition}
[\textbf{Extension of a function to sets}]\label{DefPropCuantCuantitavos_1}
Let us consider $\beta:E\rightarrow S$ function. Function $\widehat{\beta
}:\mathcal{P}\left(  E\right)  \rightarrow\mathcal{P}\left(  S\right)  $ is
defined in the following way: $\widehat{\beta}\left(  Y\right)  =\left\{
\beta\left(  e\right)  :e\in Y\right\}  $.
\end{definition}

The extension principle induced by a QFM is defined as:

\begin{definition}
[Induced extension principle]\label{DefPropFuncAplPrinExtension}\cite[p\'{a}g.
101]{Glockner06Libro} All QFM $\mathcal{F}$ induce an extension principle
$\widehat{\mathcal{F}}$ that to each function $f:E\rightarrow E^{\prime}$
(where $E,E^{\prime}\neq\varnothing$) assigns a function $\widehat
{\mathcal{F}}\left(  f\right)  :\widetilde{\mathcal{P}}\left(  E\right)
\rightarrow\widetilde{\mathcal{P}}\left(  E^{\prime}\right)  $ defined by
$\mu_{\widehat{\mathcal{F}}\left(  f\right)  \left(  X\right)  }\left(
e^{\prime}\right)  =\mathcal{F}\left(  \chi_{\widehat{f}\left(  \cdot\right)
}\left(  e^{\prime}\right)  \right)  \left(  X\right)  $ for all
$X\in\widetilde{\mathcal{P}}\left(  E\right)  $, $e^{\prime}\in E^{\prime}$.
\end{definition}

It should be noted that in $\chi_{\widehat{f}\left(  \cdot\right)  }\left(
e^{\prime}\right)  $ the function $\widehat{f}:\mathcal{P}\left(  E\right)
\rightarrow\mathcal{P}\left(  E^{\prime}\right)  $ is the extension to sets of
the function $f$ and then $\chi_{\widehat{f}\left(  \cdot\right)  }\left(
e^{\prime}\right)  $ is the characteristic function of this extension; that
is, $\chi_{\widehat{f}\left(  \cdot\right)  }\left(  e^{\prime}\right)  $ is a
semi-fuzzy quantifier that for a set $Y\in\mathcal{P}\left(  E\right)  $
returns $1$ if $e^{\prime}\in\widehat{f}\left(  Y\right)  $ and $0$ in other case.

The property of compatibility with functional application is defined as:

\begin{proposition}
[Compatibility with functional application]\cite[P\'{a}g. 104]%
{Glockner06Libro} Let $\mathcal{F}$ a given QFM. We will say that
$\mathcal{F}$ is compatible with its induced extension principle if
$\mathcal{F}\left(  Q\circ\underset{i=1}{\overset{n}{\times}}\widehat{f_{i}%
}\right)  =\mathcal{F}\left(  Q\right)  \circ\underset{i=1}{\overset{n}%
{\times}}\widehat{\mathcal{F}}\left(  f_{i}\right)  $ or equivalently
\[
\mathcal{F}\left(  Q\circ\underset{i=1}{\overset{n}{\times}}\widehat
{\mathcal{F}}\left(  f_{i}\right)  \right)  \left(  X_{1}^{\prime}%
,\ldots,X_{n}^{\prime}\right)  =\mathcal{F}\left(  Q\right)  \left(
\widehat{\mathcal{F}}\left(  f_{1}\right)  \left(  X_{1}^{\prime}\right)
,\ldots,\widehat{\mathcal{F}}\left(  f_{n}\right)  \left(  X_{n}^{\prime
}\right)  \right)
\]
is valid for all semi-fuzzy quantifier $Q:\mathcal{P}\left(  E\right)
^{n}\rightarrow\mathbf{I}$ and all the function $f_{1},\ldots,f_{n}:E^{\prime
}\rightarrow E$ with domain $E^{\prime}\neq\varnothing$, $X_{1}^{\prime
},\ldots,X_{n}^{\prime}\in\widetilde{\mathcal{P}}\left(  E^{\prime}\right)  $.
\end{proposition}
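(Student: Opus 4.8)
The plan is to read the two displayed identities as two presentations of the \emph{same} assertion, and to prove their equivalence purely by unfolding the definition of composition with a Cartesian product of maps together with the extensionality of quantifiers. First I would pin down the type of every object in the statement. Since each $f_{i}:E^{\prime}\rightarrow E$, Definition \ref{DefPropCuantCuantitavos_1} gives $\widehat{f_{i}}:\mathcal{P}\left(E^{\prime}\right)\rightarrow\mathcal{P}\left(E\right)$, so that $Q\circ\underset{i=1}{\overset{n}{\times}}\widehat{f_{i}}:\mathcal{P}\left(E^{\prime}\right)^{n}\rightarrow\mathbf{I}$ is again a \emph{semi}-fuzzy quantifier, now on the base set $E^{\prime}$; applying $\mathcal{F}$ therefore produces a genuine fuzzy quantifier $\mathcal{F}\left(Q\circ\underset{i=1}{\overset{n}{\times}}\widehat{f_{i}}\right):\widetilde{\mathcal{P}}\left(E^{\prime}\right)^{n}\rightarrow\mathbf{I}$. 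On the other side, Definition \ref{DefPropFuncAplPrinExtension} gives $\widehat{\mathcal{F}}\left(f_{i}\right):\widetilde{\mathcal{P}}\left(E^{\prime}\right)\rightarrow\widetilde{\mathcal{P}}\left(E\right)$, while $\mathcal{F}\left(Q\right):\widetilde{\mathcal{P}}\left(E\right)^{n}\rightarrow\mathbf{I}$, so $\mathcal{F}\left(Q\right)\circ\underset{i=1}{\overset{n}{\times}}\widehat{\mathcal{F}}\left(f_{i}\right)$ is also a fuzzy quantifier of arity $n$ on $E^{\prime}$. Hence both sides of the first identity inhabit the same function space $\widetilde{\mathcal{P}}\left(E^{\prime}\right)^{n}\rightarrow\mathbf{I}$, and it is meaningful to compare them.

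Second, I would invoke extensionality: two fuzzy quantifiers on $E^{\prime}$ coincide if and only if they agree on every argument tuple $\left(X_{1}^{\prime},\ldots,X_{n}^{\prime}\right)\in\widetilde{\mathcal{P}}\left(E^{\prime}\right)^{n}$. Evaluating the right-hand composition at such a tuple and using the definition of composition with a product map yields $\left(\mathcal{F}\left(Q\right)\circ\underset{i=1}{\overset{n}{\times}}\widehat{\mathcal{F}}\left(f_{i}\right)\right)\left(X_{1}^{\prime},\ldots,X_{n}^{\prime}\right)=\mathcal{F}\left(Q\right)\left(\widehat{\mathcal{F}}\left(f_{1}\right)\left(X_{1}^{\prime}\right),\ldots,\widehat{\mathcal{F}}\left(f_{n}\right)\left(X_{n}^{\prime}\right)\right)$, which is exactly the right-hand side of the pointwise identity; the left-hand composition evaluated at the same tuple is by definition $\mathcal{F}\left(Q\circ\underset{i=1}{\overset{n}{\times}}\widehat{f_{i}}\right)\left(X_{1}^{\prime},\ldots,X_{n}^{\prime}\right)$. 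Comparing these evaluations term by term over all tuples shows that the operator identity of the first line is logically equivalent to the pointwise identity of the second line, which is precisely what the proposition asserts with its \textquotedblleft or equivalently\textquotedblright.

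Finally, the only real care needed — the point I would flag as the principal pitfall rather than a genuine analytic obstacle — is the bookkeeping of which extension operator sits inside $\mathcal{F}$: on the left one must compose $Q$ with the \emph{crisp} set extensions $\widehat{f_{i}}$, so that $Q\circ\underset{i=1}{\overset{n}{\times}}\widehat{f_{i}}$ stays a semi-fuzzy quantifier to which $\mathcal{F}$ may legitimately be applied, whereas the induced fuzzy extensions $\widehat{\mathcal{F}}\left(f_{i}\right)$ appear only \emph{outside}, in the argument slots of the already-fuzzified $\mathcal{F}\left(Q\right)$. Once these types are fixed, no estimation or limiting argument is required: the equivalence is purely a matter of unfolding the definitions of $\widehat{f_{i}}$, of $\widehat{\mathcal{F}}\left(f_{i}\right)$, and of composition with $\underset{i=1}{\overset{n}{\times}}$, together with extensionality of maps into $\mathbf{I}$.
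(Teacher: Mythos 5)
Your proposal is correct and matches the paper's treatment: the statement is really the \emph{definition} of compatibility with the induced extension principle, its only mathematical content being the equivalence of the operator identity $\mathcal{F}\left(Q\circ\underset{i=1}{\overset{n}{\times}}\widehat{f_{i}}\right)=\mathcal{F}\left(Q\right)\circ\underset{i=1}{\overset{n}{\times}}\widehat{\mathcal{F}}\left(f_{i}\right)$ with its pointwise form, which is exactly the extensionality-plus-unfolding argument you give, and the paper (following Gl\"{o}ckner) supplies no further proof beyond the informal paragraph that restates it. Your closing remark is also the right catch: in the paper's second display the expression $Q\circ\underset{i=1}{\overset{n}{\times}}\widehat{\mathcal{F}}\left(f_{i}\right)$ appearing inside $\mathcal{F}$ is a misprint for $Q\circ\underset{i=1}{\overset{n}{\times}}\widehat{f_{i}}$, since only the latter is a semi-fuzzy quantifier (on $E^{\prime}$) to which $\mathcal{F}$ can legitimately be applied.
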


That is, if a QFM $\mathcal{F}$ fulfills the property of functional
application, the same results are obtained when we first apply the induced
extension principle to the argument sets $X_{1}^{\prime},\ldots,X_{n}^{\prime
}\in\widetilde{\mathcal{P}}\left(  E^{\prime}\right)  $ and then we apply the
quantifier $\mathcal{F}\left(  Q\right)  $, and when we first apply the
semi-fuzzy quantifier $Q\circ\underset{i=1}{\overset{n}{\times}}\widehat
{f_{i}}$ (that to the crisp sets $Y_{1}^{\prime},\ldots,Y_{n}^{\prime}%
\in\mathcal{P}\left(  E^{\prime}\right)  $ apply the function $\underset
{i=1}{\overset{n}{\times}}\widehat{f_{i}}$, and then evaluates $Q:\mathcal{P}%
^{n}\left(  E\right)  \rightarrow\mathbf{I}$), and then we apply $\mathcal{F}$
to compute the function $\mathcal{F}\left(  Q\circ\underset{i=1}{\overset
{n}{\times}}\widehat{f_{i}}\right)  $ on $X_{1}^{\prime},\ldots,X_{n}^{\prime
}\in\widetilde{\mathcal{P}}\left(  E^{\prime}\right)  $.

This propery is very important in union with the rest of the axioms used to
define the QFMs because all toghether assures the fulfillment of a very
important and intuitive set of properties.

\subsection{The DFS axiomatic framework}

We now present the DFS axiomatic framework. In \cite{Glockner06Libro} the
author dedicates the whole 4 chapter to describe the properties that are
consequence of the axiomatic framework. For the sake of brevity, we have only
described the set of properties we have consider more relevant. Other
important properties the author describes in \cite{Glockner06Libro} are
argument permutations (the QFMs are compatible with the trasposition of
arguments), cylindrical extensions (that guaratees vacuous arguments are
irrelevant), quantitativity (QFMs guarantees that quantitative semi-fuzzy
quantifiers are mapped to quantitative fuzzy quantifiers), etc.

The framework the author sets out in \cite[section 3.9]{Glockner06Libro} is a
refinement of the original framework defined on \cite[pag. 22]{Glockner97}
that it was composed by 9 interdependent axioms. The two frameworks are
equivalent. We present now the definition of the DFS framework:

\begin{definition}
A QFM $\mathcal{F}$ is called a determiner fuzzification scheme (DFS) if the
following conditions are satisfied for all semi-fuzzy quantifiers
$Q:\mathcal{P}\left(  E\right)  ^{n}\rightarrow\mathbf{I}$.\newline%
\begin{tabular}
[c]{|l|l|l|}\hline
Correct generalisation & $\mathcal{U}\left(  \mathcal{F}\left(  Q\right)
\right)  =Q$\quad if $n\leq1$ & (Z-1)\\\hline
Projection quantifiers & $\mathcal{F}\left(  Q\right)  =\widetilde{\pi_{e}}%
$\quad if $Q=\pi_{e}$ for some $e\in E$ & (Z-2)\\\hline
Dualisation & $\mathcal{F}\left(  Q\widetilde{\square}\right)  =\mathcal{F}%
\left(  Q\right)  \widetilde{\square}$\quad$n>0$ & (Z-3)\\\hline
Internal joins & $\mathcal{F}\left(  Q\cup\right)  =\mathcal{F}\left(
Q\right)  \widetilde{\cup}$\quad$n>0$ & (Z-4)\\\hline
Preservation of monotonicity &
\begin{tabular}
[c]{l}%
If $Q$ is nonincreasing in the $n$-th arg, then\\
$\mathcal{F}\left(  Q\right)  $ is nonincreasing in $n$-th arg, $n>0$%
\end{tabular}
& (Z-5)\\\hline
Functional application &
\begin{tabular}
[c]{l}%
$\mathcal{F}\left(  Q\circ\underset{i=1}{\overset{n}{\times}}\widehat{f_{i}%
}\right)  =\mathcal{F}\left(  Q\right)  \circ\underset{i=1}{\overset{n}%
{\times}}\widehat{\mathcal{F}}\left(  f_{i}\right)  $\\
where $f_{1},\ldots,f_{n}:E^{\prime}\rightarrow E,E^{\prime}\neq\varnothing$%
\end{tabular}
& (Z-6)\\\hline
\end{tabular}

\end{definition}

In the previous definition $\mathcal{U}:\left(  \widetilde{Q}:\widetilde
{\mathcal{P}}\left(  E\right)  ^{n}\rightarrow\mathbf{I}\right)
\rightarrow\left(  Q:\mathcal{P}\left(  E\right)  ^{n}\rightarrow
\mathbf{I}\right)  $ is \textit{the underlying semi-fuzzy quantifier}
\cite[pag. 75]{Glockner06Libro}; that is, the semi-fuzzy quantifier
$Q:\mathcal{P}\left(  E\right)  ^{n}\rightarrow\mathbf{I}$ defined as:%
\[
\mathcal{U}\left(  \widetilde{Q}\right)  \left(  Y_{1},\ldots,Y_{n}\right)
=\widetilde{Q}\left(  Y_{1},\ldots,Y_{n}\right)
\]
for all crisp $Y_{1},\ldots,Y_{n}\in\mathcal{P}\left(  E\right)  $. The axiom
1 is equivalent to the fulfillment of the correct generalization property in
the unary case.

\subsection{Some properties that are not a consequence of the DFS axioms}

Now we will describe some adequacy properties that are not guaranteed by the
DFS framework because they impose an excesive restriction on the class of
plausible models. In \cite[chapter 6]{Glockner06Libro} a detailed exposition
considering these and other properties can be consulted.

\subsubsection{Property of continuity in arguments (P.11)}

Continuity properties are fundamental. Models that do not fulfil these
properties generally will not be valid from a practical viewpoint. One reason
is that it is impossible to avoid measure errors and, as a consequence, errors
in data measures could cause completely different analysis. Other reason is
that from a user viewpoint, it would be very difficult to understand why no
significant differences produce different results. Continutiy is also
necessary from an application view (for example, imagine we need to use fuzzy
quantifiers in a control system).

In this section we will explain the continuity in arguments property
\cite[Section 6.2]{Glockner06Libro}. The definition of this property is based
on the next metric to measure the difference between two pairs of fuzzy sets
$\left(  X_{1},\ldots,X_{n}\right)  ,\left(  X_{1}^{\prime},\ldots
,X_{n}^{\prime}\right)  \in\widetilde{\mathcal{P}}\left(  E\right)  $:

\begin{definition}
[$d\left(  \left(  X_{1},\ldots,X_{n}\right)  ,\left(  X_{1}^{\prime}%
,\ldots,X_{n}^{\prime}\right)  \right)  $]\cite[pag. 162]{Glockner06Libro} For
all base sets $E\neq\varnothing$ and all $n\in\mathbb{N}$ the metric
$d:\widetilde{\mathcal{P}}\left(  E\right)  ^{n}\times\widetilde{\mathcal{P}%
}\left(  E\right)  ^{n}\rightarrow\mathbf{I}$ is defined by%
\[
d\left(  \left(  X_{1},\ldots,X_{n}\right)  ,\left(  X_{1}^{\prime}%
,\ldots,X_{n}^{\prime}\right)  \right)  =\max_{i=1}^{n}\sup\left\{  \left\vert
\mu_{X_{i}}\left(  e\right)  -\mu_{X_{i}^{\prime}}\left(  e\right)  :e\in
E\right\vert \right\}
\]
for all $X_{1},\ldots,X_{n},X_{1}^{\prime},\ldots,X_{n}^{\prime}\in
\widetilde{\mathcal{P}}\left(  E\right)  $.
\end{definition}

Using this metric the property of continuity in arguments is defined:

\begin{definition}
[\textbf{Continuity in arguments property}]\label{DefPropContinuidadArg}%
\textbf{\ }\cite[pag. 163]{Glockner06Libro} We say that a QFM $\mathcal{F}$ is
arg-continuous if and only if $\mathcal{F}$ maps all semi-fuzzy quantifiers to
continuous fuzzy quantifiers $\mathcal{F}\left(  Q\right)  $; i.e. for all
$X_{1},\ldots,X_{n}\in\widetilde{\mathcal{P}}\left(  E\right)  $ and
$\varepsilon>0$ there exists $\delta>0$ such that $d\left(  \mathcal{F}\left(
Q\right)  \left(  X_{1},\ldots,X_{n}\right)  ,\mathcal{F}\left(  Q\right)
\left(  X_{1}^{^{\prime}},\ldots,X_{n}^{^{\prime}}\right)  \right)
<\varepsilon$ for all $X_{1}^{^{\prime}},\ldots,X_{n}^{\prime}\in
\mathcal{P}\left(  E\right)  $ with $d\left(  \left(  X_{1},\ldots
,X_{n}\right)  ,\left(  X_{1}^{^{\prime}},\ldots,X_{n}^{^{\prime}}\right)
\right)  <\delta$
\end{definition}

\subsubsection{Property of continuity in quantifiers (P.12)}

In the same way we require continuity on argument sets, we also require
continuity in quantifiers.\ That is, we do not expect big differences in
results when we modify slightly the quantifiers.

The distance between two semi-fuzzy quantifiers is defined as:

\begin{definition}
[$d\left(  Q,Q^{\prime}\right)  $]\cite[pag. 163]{Glockner06Libro} For all
semi-fuzzy quantifiers $Q,Q^{\prime}:\mathcal{P}\left(  E\right)
^{n}\rightarrow\mathbf{I}$ the distance between $Q$ and $Q^{\prime}$ is
defined as:%
\[
d\left(  Q,Q^{^{\prime}}\right)  =\sup\left\{  \left\vert Q\left(
Y_{1},\ldots,Y_{n}\right)  -Q^{^{\prime}}\left(  Y_{1},\ldots,Y_{n}\right)
\right\vert :Y_{1},\ldots,Y_{n}\in\mathcal{P}\left(  E\right)  ^{n}\right\}
\]
and similarity for all fuzzy quantifiers $\widetilde{Q},\widetilde{Q^{\prime}%
}:\widetilde{\mathcal{P}}\left(  E\right)  ^{n}\rightarrow\mathbf{I}$%
\[
d\left(  \widetilde{Q},\widetilde{Q^{\prime}}\right)  =\sup\left\{  \left\vert
\mathcal{F}\left(  Q\right)  \left(  X_{1},\ldots,X_{n}\right)  -\mathcal{F}%
\left(  Q^{^{\prime}}\right)  \left(  X_{1},\ldots,X_{n}\right)  \right\vert
:X_{1},\ldots,X_{n}\in\widetilde{\mathcal{P}}\left(  E\right)  \right\}
\]

\end{definition}

$Q$-continuity is defined as:

\begin{definition}
[\textbf{Continuity in quantifiers property}]\cite[pag. 163]{Glockner06Libro}%
\label{DefPropContinuidadCuant} We say that a QFM $\mathcal{F}$ is
$Q$-continuous if and only if for each semi-fuzzy quantifier $Q:\mathcal{P}%
\left(  E\right)  ^{n}\rightarrow\mathbf{I}$ and all $\varepsilon>0$, there
exists $\delta>0$ such that $d\left(  \mathcal{F}\left(  Q\right)
,\mathcal{F}\left(  Q^{^{\prime}}\right)  \right)  <\varepsilon$ whenever
$Q^{^{\prime}}:\mathcal{P}\left(  E\right)  ^{n}\rightarrow\mathbf{I}$
satisfies $d\left(  Q,Q^{^{\prime}}\right)  <\delta$.
\end{definition}

\subsubsection{Property of the fuzzy argument insertion (P.13)}

The property of fuzzy argument insertion is the fuzzy generalization of the
crisp argument insertion \cite[section 4.10]{Glockner06Libro}. Let
$Q:\mathcal{P}^{n}\left(  E\right)  \rightarrow\mathbf{I}$ a semi-fuzzy
quantifier $n>0$, and $A\in\mathcal{P}\left(  E\right)  $. By
$Q\vartriangleleft A:\mathcal{P}^{n-1}\left(  E\right)  \rightarrow\mathbf{I}$
we will denote the semi-fuzzy quantifier defined as
\[
Q\vartriangleleft A\left(  Y_{1},\ldots,Y_{n-1}\right)  =Q\left(  Y_{1}%
,\ldots,Y_{n-1},A\right)
\]
for all $Y_{1},\ldots,Y_{n-1}\in\mathcal{P}\left(  E\right)  $. As a
consequence of the DFS axioms it is fulfilled that
\[
\mathcal{F}\left(  Q\vartriangleleft A\right)  =\mathcal{F}\left(  Q\right)
\vartriangleleft A
\]
for all semi-fuzzy quantifier $Q$ of arity $n>0$, and all crisp $A\in
\mathcal{P}\left(  E\right)  $.

Fuzzy argument insertion cannot be modeled directly, because a semi-fuzzy
quantifier $Q:\mathcal{P}^{n}\left(  E\right)  \rightarrow\mathbf{I,}$ $n>0$
only accepts crisp arguments; that is, for all $A\in\widetilde{\mathcal{P}%
}\left(  E\right)  $ fuzzy only $\mathcal{F}\left(  Q\right)  \vartriangleleft
A$ is defined and no $Q\vartriangleleft A$. But as is explained in
\cite[secci\'{o}n 6.8]{Glockner06Libro}, a QFM $\mathcal{F}$ and a semi-fuzzy
quantifier $Q:\mathcal{P}^{n}\left(  E\right)  \rightarrow\mathbf{I}$ we can
study if there exists a semi-fuzzy quantifier $Q^{\prime}:\mathcal{P}%
^{n-1}\left(  E\right)  \rightarrow\mathbf{I}$ fulfilling
\begin{equation}
\mathcal{F}\left(  Q\right)  \vartriangleleft A=\mathcal{F}\left(  Q^{\prime
}\right)  \label{EqInsercionArgumentosBorrosa_1}%
\end{equation}
for all $A\in\widetilde{\mathcal{P}}\left(  E\right)  $.

The reasonable election $Q^{\prime}$ is the following:

\begin{definition}
\cite[pag. 172]{Glockner06Libro}Let $\mathcal{F}$ a QFM, $Q:\mathcal{P}\left(
E\right)  ^{n+1}\rightarrow\mathbf{I}$ a semi-fuzzy quantifier and
$A\in\widetilde{\mathcal{P}}\left(  E\right)  $ a fuzzy set. Then
$Q\widetilde{\vartriangleleft}A:\mathcal{P}\left(  E\right)  ^{n}%
\rightarrow\mathbf{I}$ is defined as%
\[
Q\widetilde{\vartriangleleft}A=\mathcal{U}\left(  \mathcal{F}\left(  Q\right)
\vartriangleleft A\right)
\]
that is, $Q\widetilde{\vartriangleleft}A\left(  Y_{1},\ldots,Y_{n}\right)
=\mathcal{F}\left(  Q\right)  \left(  Y_{1},\ldots,Y_{n},A\right)  $ for all
crisp sets $Y_{1},\ldots,Y_{n}\in\mathcal{P}\left(  E\right)  $.
\end{definition}

In \cite[secci\'{o}n 6.8]{Glockner06Libro} the author mentions $Q^{\prime
}=Q\widetilde{\vartriangleleft}A$ is the unique election fo $Q^{\prime}$ that
could satisfy \ref{EqInsercionArgumentosBorrosa_1}. It should be noted that if
$Q^{\prime}$ satisfies $\mathcal{F}\left(  Q\right)  \vartriangleleft
A=\mathcal{F}\left(  Q^{\prime}\right)  $ then also satisfies%
\[
Q^{\prime}=\mathcal{U}\left(  \mathcal{F}\left(  Q^{\prime}\right)  \right)
=\mathcal{U}\left(  \mathcal{F}\left(  Q\right)  \vartriangleleft A\right)
=Q\widetilde{\vartriangleleft}A
\]

The next property resumes the fulfillent of the fuzzy argument insertion in
the fuzzy case:

\begin{definition}
\label{DefPropInserArgBorrosa}\cite[pag. 172]{Glockner06Libro}Let
$\mathcal{F}$ be a QFM. We will say $\mathcal{F}$ fulfills fuzzy argument
insertion if for all semi-fuzzy quantifier $Q:\mathcal{P}\left(  E\right)
^{n}\rightarrow\mathbf{I}$ of artity $n>0$ and all $A\in\widetilde
{\mathcal{P}}\left(  E\right)  $ fuzzy is fulfilled%
\[
\mathcal{F}\left(  Q\right)  \vartriangleleft A=\mathcal{F}\left(
Q\widetilde{\vartriangleleft}A\right)
\]

\end{definition}

This property has a very strong relation with \textit{nested quantification}.
Althoug the sufficiency of this property for a DFS to adequate model nested
quantifiers, in \cite[section 12.6]{Glockner06Libro} the author has state the
necessity of fulfilling this property. Moreover, the fulfillment of this
property for standard DFSs is only achieved by the $\mathcal{M}_{CX}$, a
paradigmatic example of good theoretical behavior.

\subsection{Some probabilistic properties}

Now, we will present two properties of probabilistic nature that are fulfilled
by a number of probabilistic models
\cite{Delgado00,DiazHermida02-FuzzySets,DiazHermida06Tesis}.

\subsubsection{Property of averaging for the identity quantifier
(P.14)\label{SubSubSubPropMedia}}

The fulfillment of this property for a QFM $\mathcal{F}$ assures that when we
apply the model to the unary semi-fuzzy quantifier \textbf{identity}%
$:\mathcal{P}\left(  E\right)  \rightarrow\mathbf{I}$ we obtain the average of
the membership grades. First of all, the definition of this semi-fuzzy
quantifer is:

\begin{definition}
The unary semi-fuzzy quantifier \textbf{identity}$:\mathcal{P}\left(
E\right)  \rightarrow\mathbf{I}$\textit{ is defined as}%
\[
\mathbf{identity}\left(  Y\right)  =\frac{\left\vert Y\right\vert }{\left\vert
E\right\vert },Y\in\mathcal{P}\left(  E\right)
\]

\end{definition}

It should be noted that for the \textbf{identity} semi-fuzzy quantifier the
addition of one element improves the result in $\frac{1}{m}$. That is, the
improvement obtained with the addition of elements to the argument set is
linear. We can interpret the meaning of this semi-fuzzy quantifier as
\textquotedblleft as many as possible\textquotedblright.

The definition of the property is:

\begin{definition}
[Property of averaging for the identity quantifier]We will say that a QFM
$\mathcal{F}$ fulfills the property of averaging for the identity quantifier
if:%
\[
\mathcal{F}\left(  \mathbf{identity}\right)  \left(  X\right)  =\frac{1}%
{m}\sum_{j=1}^{m}\mu_{X}\left(  e_{j}\right)
\]

\end{definition}

As a result of the fulfillment of the property of averaging for the identity
quantifier, the improvement obtained in $\mathcal{F}^{A}\left(
\mathbf{identity}\right)  \left(  X\right)  $ is linear with respect to the
increase of the membership grades of the argument fuzzy set.

\subsubsection{Property of the probabilistic interpretation of quantifiers
(P.15)\label{SubSubSubPropRecubProbab}}

Let us suppose we use a set of semi-fuzzy quantifiers
(\textit{\textquotedblleft at most about 20\%\textquotedblright},
\textit{\textquotedblleft about between 20\% and 80\%\textquotedblright},
\textit{\textquotedblleft at least about 80\%\textquotedblright}) to split the
quantification universe. Then, if semi-fuzzy quantifiers can be interpreted in
a probabilistic way, the fulfillment of this property guarantees that fuzzy
quantifiers also can be interpreted in a probabilistic way.

\begin{definition}
We will say that a set of semi-fuzzy quantifiers $Q_{1},\ldots,Q_{r}%
:\mathcal{P}^{n}\left(  E\right)  \rightarrow\mathbf{I}$ forms a probabilistic
Ruspini partition of the quantification universe if for all $Y_{1}%
,\ldots,Y_{n}\in\mathcal{P}\left(  E\right)  $ it holds that%
\[
Q_{1}\left(  Y_{1},\ldots,Y_{n}\right)  +\ldots+Q_{r}\left(  Y_{1}%
,\ldots,Y_{n}\right)  =1
\]

\end{definition}

\begin{example}
The next set of quantifiers forms a probabilistic Ruspini partition of the
quantification universe:
\end{example}

\begin{align}
\text{\textbf{at most about }}20\%\left(  Y_{1},Y_{2}\right)   &  =\left\{
\begin{array}
[c]{cc}%
T_{-\infty,0,0.2,0.4}\left(  \frac{\left\vert Y_{1}\cap Y_{2}\right\vert
}{\left\vert Y_{1}\right\vert }\right)  & Y_{1}\neq\varnothing\\
\frac{1}{3} & Y_{1}=\varnothing
\end{array}
\right. \label{ParticionCuantificadores}\\
\text{\textbf{about between }}20\%\text{ \textbf{and} }80\%\left(  Y_{1}%
,Y_{2}\right)   &  =\left\{
\begin{array}
[c]{cc}%
T_{0.2,0.4,0.6,0.8}\left(  \frac{\left\vert Y_{1}\cap Y_{2}\right\vert
}{\left\vert Y_{1}\right\vert }\right)  & Y_{1}\neq\varnothing\\
\frac{1}{3} & Y_{1}=\varnothing
\end{array}
\right. \nonumber\\
\text{\textbf{at least about }}80\%\left(  Y_{1},Y_{2}\right)   &  =\left\{
\begin{array}
[c]{cc}%
T_{0.6,0.8,1,\infty}\left(  \frac{\left\vert Y_{1}\cap Y_{2}\right\vert
}{\left\vert Y_{1}\right\vert }\right)  & Y_{1}\neq\varnothing\\
\frac{1}{3} & Y_{1}=\varnothing
\end{array}
\right. \nonumber
\end{align}
because%
\[
\text{\textbf{at most about }}20\%\left(  Y_{1},Y_{2}\right)
+\text{\textbf{about between }}20\%\text{ \textbf{and} }80\%\left(
Y_{1},Y_{2}\right)  +
\]%
\[
\text{\textbf{at least about }}80\%\left(  Y_{1},Y_{2}\right)  =1
\]
for all $Y_{1},Y_{2}\mathcal{P}\left(  E\right)  $.

\begin{definition}
[\textbf{Property of probabilistic interpretation of quantifiers}]We will say
that a QFM $\mathcal{F}$ fulills the property of probabilistic interpretation
of quantifiers if for all probabilistic Ruspini partitions of the
quantification universe $Q_{1},\ldots,Q_{r}:\mathcal{P}\left(  E\right)
^{n}\rightarrow\mathbf{I}$ it holds that
\[
\mathcal{F}\left(  Q_{1}\right)  \left(  X_{1},\ldots,X_{n}\right)
+\ldots+\mathcal{F}\left(  Q_{r}\right)  \left(  X_{1},\ldots,X_{n}\right)
=1
\]

\end{definition}

This property is very interesting because let us to interpret the result of
evaluating a fuzzy expression as a probability distribution on the labels
related to the quantifiers.\footnote{In \cite{Lawry01-Computing} a
probabilistic interpretation of quantifiers is also used under the label
semantics interpretation of fuzzy sets.
\par
{}}

\section{Probabilistic interpretation of fuzzy sets based on likelihood
functions\label{SubSecInterpretacionVerosimilitudes}}

In this section we use the interpretation of fuzzy sets based on likelihood
functions to establish the necessary background to define the $\mathcal{F}%
^{A}$ model. In \cite{DiazHermida02-FuzzySets,DiazHermida06Tesis} another
probabilistic view of fuzzy sets have been used to define a probabilistic
framework for the definition of QFMs, and some models in this framework have
been presented.

The semantic interpretation of fuzzy sets based on likelihood functions
\cite{Mabuchi92,Thomas95,Tursken2000Fundamentals,Dubois2000Fundamentals}
interprets vagueness in the data as a consequence of making a random
experiment in which a set of individuals are asked about the fulfillment of a
certain property. Let us consider the following example:

\begin{example}
\label{EjemInterpret_1}To decide if the height value $185cm.$ is considered
\textquotedblleft tall for male adults\textquotedblright\ a random experiment
is performed in which four individuals (henceforth voters) are asked about
their opinion. Let us denote by $P$ the statement \textquotedblleft the value
$185cm.$ is tall for male adults\textquotedblright, by $V=\left\{  v_{1}%
,v_{2},v_{3},v_{4}\right\}  $ the set of voters and by $C\left(  v,P\right)
\in\mathbf{2=}\left\{  0,1\right\}  $, $v\in V$ the answer for each voter. If%
\[
C\left(  v_{1},P\right)  =1,C\left(  v_{2},P\right)  =0\mathbf{,}C\left(
v_{3},P\right)  =1,C\left(  v_{4},P\right)  =1
\]
then we can define the degree of fulfillment of the statement $P$ as%
\[
\mu\left(  P\right)  =\frac{\left\vert v\in V:C\left(  v,P\right)
=1\right\vert }{\left\vert V\right\vert }=\frac{3}{4}%
\]

\end{example}

The above experiment can be extended to the height values of the universe. Let
be $h\in\mathbb{R}$. We can define the degree of fulfillment of the statement
\textit{\textquotedblleft the value of height }$h$\textit{\ is
tall\textquotedblright} as:%
\[
\mu\left(  \text{\textquotedblleft}h\text{ is }tall\text{\textquotedblright%
}\right)  =\Pr\left(  \text{\textquotedblleft}h\text{ is }%
tall\text{\textquotedblright}\right)  =\frac{\left\vert v\in V:C\left(
v,\text{\textquotedblleft}h\text{ is }tall\text{\textquotedblright}\right)
=1\right\vert }{\left\vert V\right\vert }%
\]
In this way we can assign a degree of fulfillment to the reference universe.
In the common notation of fuzzy sets we assign to the label
\textit{\textquotedblleft tall\textquotedblright\ }the fuzzy set
$tall\in\widetilde{\mathcal{P}}\left(  \mathbb{R}\right)  $ defined as:
$\mu_{tall}\left(  h\right)  =\mu\left(  \text{\textquotedblleft}h\text{ is
}tall\text{\textquotedblright}\right)  $.

Under this view of fuzzy sets, $\mu_{tall}\left(  h\right)  >\mu_{tall}\left(
h^{\prime}\right)  $ indicates that is more probable\ that
\textit{\textquotedblleft}$h$\textit{ is tall\textquotedblright}\ than
\textit{\textquotedblleft}$h^{\prime}$\textit{ is tall\textquotedblright}.

One of the accepted suppositions of this view is to assume that the answer of
one voter for a certain value $h\in\mathbb{R}$ does not constrain his answer
for other element $h^{\prime}$\footnote{The situation in which the answer of
one voter for a value of the universe constrains its answer for other values
is related to the interpretation of fuzzy sets based on random sets. This view
is used in \cite{DiazHermida02-FuzzySets,DiazHermida06Tesis} for proposing a
probabilistic framework to define models of fuzzy quantification.}. Let us
suppose that the universe $E$ is finite. As we are interpreting that $\mu
_{X}\left(  e\right)  =\Pr\left(  \text{\textquotedblleft}e\text{ is
}X\text{\textquotedblright}\right)  $ then under the independence assumption
we have:%
\begin{equation}
\Pr\left(  \text{\textquotedblleft}e\text{ is }X\text{\textquotedblright%
}\wedge\text{\textquotedblleft}e^{\prime}\text{ is }X\text{\textquotedblright%
}\right)  =\Pr\left(  \text{\textquotedblleft}e\text{ is }%
X\text{\textquotedblright}\right)  \cdot\Pr\left(  \text{\textquotedblleft%
}e^{\prime}\text{ is }X\text{\textquotedblright}\right)  =\mu_{X}\left(
e\right)  \cdot\mu_{X}\left(  e^{\prime}\right) \nonumber
\end{equation}

We can apply the same idea to compute the probability that a crisp set
$Y\in\mathcal{P}\left(  E\right)  $ was a representative of a fuzzy set
$X\in\widetilde{\mathcal{P}}\left(  E\right)  $ when we suppose the base set
$E$ finite. The intuition is that this probability is the probability that
only the elements in $Y$ belongs to $X$:

\begin{definition}
[$\Pr\left(  representative_{X}=Y\right)  $]\label{DefInterpretProbConj}Let
$X\in\widetilde{\mathcal{P}}\left(  E\right)  $ be a fuzzy set, $E$ finite.
The probability of the crisp set $Y\in\mathcal{P}\left(  E\right)  $ to be a
representative of the fuzzy set $X\in\widetilde{\mathcal{P}}\left(  E\right)
$ is defined as%
\[
\Pr\left(  representative_{X}=Y\right)  =%
{\displaystyle\prod\limits_{e\in Y}}
\mu_{X}\left(  e\right)
{\displaystyle\prod\limits_{e\in E\backslash Y}}
\left(  1-\mu_{X}\left(  e\right)  \right)
\]

\end{definition}

It should be pointed out that in the previous definition the probability
points are the subsets of $E$. In this way the $\sigma$-algebra on which the
probability is defined is $\mathcal{P}\left(  E\right)  $.

It is worthy to note that definition \ref{DefInterpretProbConj}\ can be
explained without mention to probability theory. If we consider the product
tnorm ($\wedge\left(  x_{1},x_{2}\right)  =x_{1}\cdot x_{2}$) and the
Lukasiewicz implication then $\Pr\left(  representative_{X}=Y\right)  $ is the
\textit{equipotence} \cite{Bandler80}\ between $Y$ and $X$:%
\[
Eq\left(  Y,X\right)  =\wedge_{e\in E}\left(  \mu_{X}\left(  e\right)
\rightarrow\mu_{Y}\left(  e\right)  \right)  \wedge\left(  \mu_{Y}\left(
e\right)  \rightarrow\mu_{X}\left(  e\right)  \right)
\]

If $e\in E$ then $\mu_{Y}\left(  e\right)  =1$ and%
\begin{align*}
\left(  \mu_{X}\left(  e\right)  \rightarrow\mu_{Y}\left(  e\right)  \right)
\wedge\left(  \mu_{Y}\left(  e\right)  \rightarrow\mu_{X}\left(  e\right)
\right)   &  =\left(  \mu_{X}\left(  e\right)  \rightarrow1\right)
\wedge\left(  1\rightarrow\mu_{X}\left(  e\right)  \right) \\
&  =1\wedge\left(  1-1+\mu_{X}\left(  e\right)  \right) \\
&  =\mu_{X}\left(  e\right)
\end{align*}

If $e\notin E$ then $\mu_{Y}\left(  e\right)  =0$ and%
\begin{align*}
\left(  \mu_{X}\left(  e\right)  \rightarrow\mu_{Y}\left(  e\right)  \right)
\wedge\left(  \mu_{Y}\left(  e\right)  \rightarrow\mu_{X}\left(  e\right)
\right)   &  =\left(  \mu_{X}\left(  e\right)  \rightarrow0\right)
\wedge\left(  0\rightarrow\mu_{X}\left(  e\right)  \right) \\
&  =\left(  1-\mu_{X}\left(  e\right)  \right)  \wedge1\\
&  =1-\mu_{X}\left(  e\right)
\end{align*}

And then:%
\begin{align*}
Eq\left(  Y,X\right)   &  =\left(  \wedge_{e\in E}\mu_{X}\left(  e\right)
\right)  \wedge\left(  \wedge_{e\notin E}\left(  1-\mu_{X}\left(  e\right)
\right)  \right) \\
&  =%
{\displaystyle\prod\limits_{e\in Y}}
\mu_{X}\left(  e\right)
{\displaystyle\prod\limits_{e\in E\backslash Y}}
\left(  1-\mu_{X}\left(  e\right)  \right)
\end{align*}

The next notation will be used for $\Pr\left(  \text{representative}%
_{X}=Y\right)  $ in the rest of the paper:

\begin{notation}
[$m_{X}\left(  Y\right)  $]\label{NotacionVerosimilitudes_1}Let $X\in
\widetilde{\mathcal{P}}\left(  E\right)  $ be a fuzzy set and $Y\in
\mathcal{P}\left(  E\right)  $ a crisp set. We will denote $m_{X}\left(
Y\right)  =\Pr\left(  representative_{X}=Y\right)  $.
\end{notation}

Let us see now an example in which this probability is computed:

\begin{example}
Let be $E=\left\{  e_{1},e_{2},e_{3}\right\}  $ and $X\in\widetilde
{\mathcal{P}}\left(  E\right)  $ the fuzzy set defined as: $X=\left\{
0.8/e_{1},0.2/e_{2},0.6/e_{3}\right\}  $. Then%
\begin{align*}
m_{X}\left(  \left\{  e_{1},e_{3}\right\}  \right)   &  =%
{\displaystyle\prod\limits_{e\in Y}}
\mu_{X}\left(  e\right)
{\displaystyle\prod\limits_{e\in E\backslash Y}}
\left(  1-\mu_{X}\left(  e\right)  \right) \\
&  =\mu_{X}\left(  e_{1}\right)  \times\mu_{X}\left(  e_{3}\right)
\times\left(  1-\mu_{X}\left(  e_{2}\right)  \right)  =0.384
\end{align*}

\end{example}

Sometimes, we will need to restrict the probability of a fuzzy set
$X^{E^{\prime}}\in\widetilde{\mathcal{P}}\left(  E\right)  $ to a subset
$E^{\prime}\subseteq E$ of the referential. Let $X^{E^{\prime}}\in
\widetilde{\mathcal{P}}\left(  E^{\prime}\right)  $ be the projection of $X$
in $E^{\prime}$; that is, $X^{\prime}$ is the fuzzy set on $E^{\prime}$
defined as: $\mu_{X^{\prime}}\left(  e\right)  =\mu_{X}\left(  e\right)  ,e\in
E^{\prime}$.

In this case, the probability of $X^{\prime}$ on $E^{\prime}$ is denoted as:

\begin{notation}
[$m_{X}^{E^{\prime}}\left(  Y\right)  $]Let $X\in\widetilde{\mathcal{P}%
}\left(  E\right)  $ be a fuzzy set and $E^{\prime}\subseteq E$ a restriction
of the base set, and $Y\in\mathcal{P}\left(  E^{\prime}\right)  $ a crisp set
on $E^{\prime}$. We will denote $m_{X}^{E^{\prime}}\left(  Y\right)
=m_{X^{\prime}}\left(  Y\right)  $ where $X^{\prime}$ is the projection of $X$
on $E^{\prime}$; that is, the fuzzy set defined as $\mu_{X^{\prime}}\left(
e\right)  =\mu_{X}\left(  e\right)  ,e\in E^{\prime}$.
\end{notation}

It should be noted that%
\begin{align*}
\sum_{Y\in\mathcal{P}\left(  E\right)  |e\in Y}m_{X}\left(  Y\right)   &
=\sum_{\left\{  e\right\}  \subseteq Y\subseteq E}m_{X}\left(  Y\right)
=\sum_{\left\{  e\right\}  \subseteq Y\subseteq E}\mu_{X}\left(  e\right)
m_{X}^{E\backslash\left\{  e\right\}  }\left(  Y\backslash\left\{  e\right\}
\right) \\
&  =\mu_{X}\left(  e\right)  \sum_{\varnothing\subseteq Y\subseteq
E\backslash\left\{  e\right\}  }m_{X}^{E\backslash\left\{  e\right\}  }\left(
Y\right)  =\mu_{X}\left(  e\right)
\end{align*}

We consider now the situation in which we want to compute the probability of
two crisp sets $Y_{1},Y_{2}\in\mathcal{P}\left(  E\right)  $ to be
respectively the representatives of the fuzzy sets $X_{1},X_{2}\in
\widetilde{\mathcal{P}}\left(  E\right)  $. That is, we consider the
computation of the probability of the event \textquotedblleft%
$representative_{X_{1}}=Y_{1}\wedge representative_{X_{2}}=Y_{2}%
$\textquotedblright.

If the two fuzzy sets $X_{1}$, $X_{2}$ are related to different reference
universes (i.e., intelligence and height) it is reasonable to suppose that the
probability of $Y_{1}$ to be representative of $X_{1}$ is independent of the
probability of $Y_{2}$ to be representative of $X_{2}$\footnote{In the work
\cite{Thomas95} is analyzed deeply the interpretation of fuzzy sets based on
likelihood functions. In \cite[P\'{a}g. 95]{Thomas95} the author argue that
the most reasonable is to suppose independence between different universes.}.
It should be noted that once we have assumed independency between the voter
decisions for elements related to same property is natural to assume
independency for elements related to different properties. Then we define:

\begin{definition}
[$\Pr\left(  representative_{X_{1}}=Y_{1}\wedge representative_{X_{2}}%
=Y_{2}\right)  $]\label{DefInterpretProbConj_2}Let $X_{1},X_{2}\in
\widetilde{\mathcal{P}}\left(  E\right)  $ be fuzzy sets, $Y_{1},Y_{2}%
\in\widetilde{\mathcal{P}}\left(  E\right)  $ crisp sets, and $E$ a finite
referential. Under the independence assumption for properties the probability
that $Y_{1}$ to be a representative of $X_{1}$ and $Y_{2}$ to be a
representative of $X_{2}$ is:%
\[
\Pr\left(  representative_{X_{1}}=Y_{1}\wedge representative_{X_{2}}%
=Y_{2}\right)  =m_{X_{1}}\left(  Y_{1}\right)  \cdot m_{X_{2}}\left(
Y_{2}\right)
\]
where we have assumed that the sets $X_{1}$ and $X_{2}$ are based on
independent concepts.
\end{definition}

In the definition of the model $\mathcal{F}^{A}$ we assumme always the
independence hypothesis. Even this could not seem appropiate in some cases
\cite{Mabuchi92,Thomas95} \cite{Mabuchi92,Thomas95}, this hypothesis simplify
considerably the definition of the models, it allows us a relative
straigforwardly algebraic manipulation, and the definition of efficient algorithms.

\section{The QFM $\mathcal{F}^{A}$\label{SubSectionModeloFA}}

In this section we define the finite QFM $\mathcal{F}^{A}$
\cite{DiazHermida04IPMU,DiazHermida04-IEEE,DiazHermida06Tesis}. This model is
based on the probabilistic interpretation of fuzzy sets previously explained.

Using expressions \ref{DefInterpretProbConj} and \ref{DefInterpretProbConj_2}
the definition of the QFM $\mathcal{F}^{A}$ is easily made:

\begin{definition}
[$\mathcal{F}^{A}$]\cite[pag. 1359]{DiazHermida04IPMU}Let $Q:\mathcal{P}%
\left(  E\right)  ^{n}\rightarrow\mathbf{I}$ be a semi-fuzzy quantifier, $E$
finite. The QFM $\mathcal{F}^{A}$ is defined as%
\begin{equation}
\mathcal{F}^{A}\left(  Q\right)  \left(  X_{1},\ldots,X_{n}\right)
=\sum_{Y_{1}\in\mathcal{P}\left(  E\right)  }\ldots\sum_{Y_{n}\in
\mathcal{P}\left(  E\right)  }m_{X_{1}}\left(  Y_{1}\right)  \ldots m_{X_{n}%
}\left(  Y_{n}\right)  Q\left(  Y_{1},\ldots,Y_{n}\right)
\label{ModeloVerosimilitudes}%
\end{equation}
for all $X_{1},\ldots,X_{n}\in\widetilde{\mathcal{P}}\left(  E\right)  $.
\end{definition}

In expression \ref{ModeloVerosimilitudes} we are assuming that the probability
of being $Y_{i}$ a representative of the fuzzy set $X_{i}$ is independent of
the probability of being $Y_{j}$ a representative of the fuzzy set $X_{j}$ for
$i\neq j$. $\mathcal{F}^{A}\left(  Q\right)  \left(  X_{1},\ldots
,X_{n}\right)  $ can be interpreted as the average opinion of voters.

The next expression is an alternative definition of the model $\mathcal{F}%
^{A}$:%
\[
\mathcal{F}^{A}\left(  X_{1},\ldots,X_{n}\right)  =%
{\displaystyle\bigvee\limits_{Y_{1}\in\mathcal{P}\left(  E\right)  }}
\ldots%
{\displaystyle\bigvee\limits_{Y_{n}\in\mathcal{P}\left(  E\right)  }}
Eq\left(  Y_{1},X_{1}\right)  \wedge\ldots\wedge Eq\left(  Y_{n},X_{n}\right)
\wedge Q\left(  Y_{1},\ldots,Y_{n}\right)
\]
where $\vee$ the Lukasiewicz tconorm ($\vee\left(  x_{1},x_{2}\right)
=\min\left(  x_{1}+x_{2},1\right)  $), $\wedge$ is the product tnorm
($\wedge\left(  x_{1},x_{2}\right)  =x_{1}\cdot x_{2}$) and $Eq\left(
Y,X\right)  $ is the equipotence between the crisp set $Y$ and the fuzzy set
$X$, that it was defined in previous section. In this way, the model can be
defined without mention to probability theory.

The following example shows the application of the QFM $\mathcal{F}^{A}$:

\begin{example}
\label{EjemVerosimilitudes}Let us consider the sentence%
\[
\text{\textquotedblleft Nearly all the intelligent workers are well
paid\textquotedblright}%
\]
where the semi-fuzzy quantifier $Q=$\textbf{\textquotedblleft nearly
all\textquotedblright}, and the fuzzy sets \textbf{\textquotedblleft
intelligent workers\textquotedblright} and \textquotedblleft\textbf{well
paid\textquotedblright} take the following values:%
\begin{align*}
\mathbf{intelligent}\text{ }\mathbf{workers}  &  =\left\{  0.8/e_{1}%
,0.9/e_{2},1/e_{3},0.2/e_{4}\right\} \\
\mathbf{well\ paid}  &  =\left\{  1/e_{1},0.8/e_{2},0.3/e_{3},0.1/e_{4}%
\right\} \\
Q\left(  X_{1},X_{2}\right)   &  =\left\{
\begin{array}
[c]{cc}%
\max\left\{  2\left(  \frac{\left\vert X_{1}\cap X_{2}\right\vert }{\left\vert
X_{1}\right\vert }\right)  -1,0\right\}  & X_{1}\neq\varnothing\\
1 & X_{1}=\varnothing
\end{array}
\right.
\end{align*}
We compute the probabilities of the representatives of the fuzzy sets
\textbf{\textquotedblleft intelligent workers\textquotedblright} and
\textquotedblleft\textbf{well paid\textquotedblright}:%
\begin{align*}
m_{\mathbf{intelligent}\text{ }\mathbf{workers}}\left(  \varnothing\right)
&  =\left(  1-0.8\right)  \left(  1-0.9\right)  \left(  1-1\right)  \left(
1-0.2\right)  =0\\
m_{\mathbf{intelligent}\text{ }\mathbf{workers}}\left(  \left\{
e_{1}\right\}  \right)   &  =0.8\left(  1-0.9\right)  \left(  1-1\right)
\left(  1-0.2\right)  =0\\
&  \ldots\\
m_{\mathbf{intelligent}\text{ }\mathbf{workers}}\left(  \left\{  e_{1}%
,e_{2},e_{3},e_{4}\right\}  \right)   &  =0.8\cdot0.9\cdot1\cdot0.2=0.144\\
m_{\mathbf{well\ paid}}\left(  \varnothing\right)   &  =\left(  1-1\right)
\left(  1-0.8\right)  \left(  1-0.3\right)  \left(  1-0.1\right)  =0\\
m_{\mathbf{well\ paid}}\left(  \left\{  e_{1}\right\}  \right)   &
=1\cdot\left(  1-0.8\right)  \left(  1-0.3\right)  \left(  1-0.1\right)
=0.126\\
&  \ldots\\
m_{\mathbf{well\ paid}}\left(  \left\{  e_{1},e_{2},e_{3},e_{4}\right\}
\right)   &  =0.8\cdot0.9\cdot1\cdot0.2=0.144
\end{align*}
And using expression \ref{ModeloVerosimilitudes}:
\begin{align*}
&  \mathcal{F}^{A}\left(  Q\right)  \left(  \mathbf{intelligent}\text{
}\mathbf{workers},\mathbf{well\ paid}\right) \\
&  =\sum_{Y_{1}\in\mathcal{P}\left(  E\right)  }\sum_{Y_{2}\in\mathcal{P}%
\left(  E\right)  }m_{X_{1}}\left(  Y_{1}\right)  m_{X_{2}}\left(
Y_{2}\right)  Q\left(  Y_{1},Y_{2}\right)  =0.346
\end{align*}

\end{example}

The QFM $\mathcal{F}^{A}$\ fulfills the DFS axiomatic framework; that is, the
QFM $\mathcal{F}^{A}$ is a finite DFS. Moreover, the QFM $\mathcal{F}^{A}$
fulfills the additional properties we have presented in this technical
report\footnote{With respect to the additional properties defined in
\cite[Chapter 6]{Glockner06Libro}. we mention that the $\mathcal{F}^{A}$ model
does not fulfill the \textquotedblleft propagation of fuzziness
property\textquotedblright\ (basically because is not fulfilled by the induced
operators).\ Other reasonable properties, as conservativity, are in
contradiction with the DFS framework, and cannot be fulfilled.}. The analysis
of properties of the model is made in the appendix.

\section{Some additional results about the $\mathcal{F}^{A}$ model}

\subsection{Limit case approximation of the $\mathcal{F}^{A}$ model}

In this section we will prove that the asintotic behavior of the
$\mathcal{F}^{A}$ model for unary proportional quantifiers is the Zadeh's
model. As a practical consequence, the $\mathcal{F}^{A}$ model can be
approximated in linear time when the base set is composed of a large number of elements.

To prove this property we will use a particular result derived of the central
limit theorem \cite[page 263]{Degroot88}.

\begin{theorem}
\textbf{Central limit theorem applied to Bernoulli variables}. Let
$X_{1},\ldots,X_{m}$ be independent random variables, each $X_{i}$ following a
Bernoulli distribution with parameter $p_{i}$. Moreover, let us suppose that
the infinite sum $\sum_{i=1}^{\infty}p_{i}\left(  1-p_{i}\right)  $ is
divergent and let $Y_{n}$ be%
\[
Y_{m}=\frac{\sum_{i=1}^{m}X_{i}-\sum_{i=1}^{n}p_{i}}{\left(  \sum_{i=1}%
^{m}p_{i}q_{i}\right)  ^{1/2}}%
\]
Then
\[
\lim_{n\rightarrow\infty}\Pr\left(  Y_{m}\leq x\right)  =\Phi\left(  x\right)
\]
where $\Phi\left(  x\right)  $ is the standard normal distribution function.
\end{theorem}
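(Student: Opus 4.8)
The plan is to recognize this as the Lyapunov (equivalently Lindeberg--Feller) central limit theorem for a sequence of independent but \emph{not} identically distributed summands, and to verify the relevant moment condition directly, exploiting the boundedness of Bernoulli variables. Writing $s_m^2 = \sum_{i=1}^m p_i q_i$ for the variance of $\sum_{i=1}^m X_i$ (with $q_i = 1 - p_i$), the variable $Y_m$ is exactly the centered sum $\sum_{i=1}^m (X_i - p_i)$ divided by $s_m$. Because the summands are not identically distributed, the elementary i.i.d.\ CLT does not apply, so the first step is to invoke a triangular-array version; I would use the Lyapunov criterion with third absolute moments.

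The key computation is the third absolute central moment of a single Bernoulli variable. A direct evaluation gives
\[
E\,|X_i - p_i|^3 = q_i\,p_i^3 + p_i\,q_i^3 = p_i q_i\,(p_i^2 + q_i^2).
\]
Since $p_i + q_i = 1$, we have $p_i^2 + q_i^2 = 1 - 2 p_i q_i \le 1$, hence $E\,|X_i - p_i|^3 \le p_i q_i$. Summing over $i$ yields $\sum_{i=1}^m E\,|X_i - p_i|^3 \le s_m^2$, so the Lyapunov ratio satisfies
\[
\frac{1}{s_m^{3}}\sum_{i=1}^m E\,|X_i - p_i|^3 \le \frac{s_m^2}{s_m^3} = \frac{1}{s_m}.
\]
The divergence hypothesis $\sum_{i=1}^{\infty} p_i q_i = \infty$ is precisely what forces $s_m \to \infty$, and therefore $1/s_m \to 0$. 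Thus the Lyapunov condition (with exponent $\delta = 1$) holds, and the Lyapunov CLT gives $\lim_{m\to\infty}\Pr\left(Y_m \le x\right) = \Phi\left(x\right)$ for every $x$, which is the claimed statement.

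I do not expect a genuine analytic obstacle here: the boundedness of $\{0,1\}$-valued variables makes every moment finite and the estimate $E\,|X_i-p_i|^3 \le p_i q_i$ immediate, so the only substantive ingredient is the divergence assumption, which enters solely to guarantee $s_m \to \infty$. The one point deserving care is purely bookkeeping --- the statement mixes the indices $m$ and $n$ in the normalization and in the limit, and one should read throughout $Y_m = \left(\sum_{i=1}^m X_i - \sum_{i=1}^m p_i\right)/s_m$ with the limit taken as $m \to \infty$. An alternative, equally short route would verify the Lindeberg condition directly: for fixed $\varepsilon>0$ we have $|X_i - p_i| \le 1$, so the events $\{|X_i - p_i| > \varepsilon s_m\}$ are all empty once $s_m > 1/\varepsilon$, whence the Lindeberg truncated sum vanishes for large $m$; this again uses only $s_m \to \infty$, and confirms that the result is really a soft consequence of the variance diverging.
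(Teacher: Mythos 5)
Your proof is correct, but note that the paper itself does not prove this statement at all: the theorem is imported as a known result, with a citation to DeGroot's textbook (\cite[page 263]{Degroot88}), and is then used as a black box in the subsequent limit analysis of $\mathcal{F}^{A}$ for proportional unary quantifiers. Your argument supplies exactly what that citation replaces. The computation $E\,|X_i-p_i|^3 = p_iq_i^3+q_ip_i^3 = p_iq_i\left(p_i^2+q_i^2\right)\leq p_iq_i$ is right, the resulting bound $s_m^{-3}\sum_{i=1}^m E\,|X_i-p_i|^3\leq 1/s_m$ is right, and you correctly isolate the role of the hypothesis: divergence of $\sum_{i=1}^{\infty}p_iq_i$ enters only to force $s_m\rightarrow\infty$. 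Your alternative Lindeberg argument (the truncation events are empty once $\varepsilon s_m>1$, by boundedness of Bernoulli variables) is if anything cleaner, since it needs no moment computation at all. You also appropriately repair the statement's index typos ($n$ versus $m$ in the normalization and in the limit), which is the reading the paper's subsequent use of the theorem confirms. The one caveat is that your proof, like the paper's, ultimately rests on an external reference --- you invoke the Lyapunov/Lindeberg--Feller theorem rather than prove it --- but that is the standard and appropriate level of granularity here; the difference is that your treatment makes the reduction to that classical theorem explicit, whereas the paper delegates the entire statement to DeGroot.
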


Using this result we will prove $\mathcal{F}^{A}$ approximation for
proportional unary quantifiers.

\begin{theorem}
Let $Q:\mathcal{P}\left(  E^{m}\right)  \rightarrow\mathbf{I}$ be a unary
proportional semi-fuzzy quantifier defined by means of a continuous
proportional fuzzy number $\mu_{Q}:\left[  0,1\right]  \rightarrow\mathbf{I}$%
\[
Q\left(  Y\right)  =\mu_{Q}\left(  \frac{\left\vert Y\right\vert }{\left\vert
E\right\vert }\right)
\]
for $Y\in\mathcal{P}\left(  E\right)  $. Let $e_{1,\ldots},e_{m}$ a succession
and $X_{m}\in\widetilde{\mathcal{P}}\left(  E\right)  $ be a fuzzy set
constructed on such sucession. Let $Y_{m}=\left\{  \mu_{X}\left(  e\right)
:\mu_{X}\left(  e\right)  =1\vee\mu_{X}\left(  e\right)  =0\right\}  $ the
crisp set constructed with the crisp elements of $X$. If the following limit
there exists:%
\[
\lim_{m\rightarrow\infty}\frac{\left\vert Y_{m}\right\vert }{\left\vert
E_{m}\right\vert }%
\]
\newline Then, when the size of the base set $E$ tends to infinite,
$\mathcal{F}^{A}\left(  Q\right)  \left(  X\right)  $ tends to:%
\[
\lim_{\left\vert E\right\vert \rightarrow\infty}\mathcal{F}^{A}\left(
Q\right)  \left(  X\right)  =fn\left(  \frac{\sum_{e\in E}\mu_{X}\left(
e\right)  }{\left\vert E\right\vert }\right)
\]
for $X\in\widetilde{\mathcal{P}}\left(  E\right)  $.
\end{theorem}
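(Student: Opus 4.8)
The plan is to read the definition of $\mathcal{F}^{A}$ probabilistically and then exploit concentration of the random cardinality. Recall from Definition~\ref{DefInterpretProbConj} that $m_{X}$ is the product measure on $\mathcal{P}\left(E\right)$ under which the events ``$e\in Y$'' are independent with $\Pr\left(e\in Y\right)=\mu_{X}\left(e\right)$. Writing $Z_{i}$ for the indicator of ``$e_{i}\in Y$'', each $Z_{i}$ is an independent Bernoulli variable with parameter $p_{i}=\mu_{X}\left(e_{i}\right)$, and
\[
\mathcal{F}^{A}\left(Q\right)\left(X\right)=\sum_{Y\in\mathcal{P}\left(E\right)}m_{X}\left(Y\right)\,Q\left(Y\right)=\mathbb{E}\left[Q\left(Y\right)\right].
\]
Since $Q$ is proportional, $Q\left(Y\right)=\mu_{Q}\left(\left\vert Y\right\vert/m\right)$ with $\left\vert Y\right\vert=\sum_{i=1}^{m}Z_{i}$, so $\mathcal{F}^{A}\left(Q\right)\left(X\right)=\mathbb{E}\left[\mu_{Q}\left(S_{m}\right)\right]$ where $S_{m}=\frac{1}{m}\sum_{i=1}^{m}Z_{i}$ and $\mathbb{E}\left[S_{m}\right]=\bar{p}_{m}:=\frac{1}{m}\sum_{e\in E}\mu_{X}\left(e\right)$. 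As $\bar{p}_{m}$ is exactly the argument of the Zadeh $\sum Count$ value on the right-hand side, the theorem reduces to showing $\mathbb{E}\left[\mu_{Q}\left(S_{m}\right)\right]-\mu_{Q}\left(\bar{p}_{m}\right)\to 0$.

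Next I would establish that $S_{m}$ concentrates at $\bar{p}_{m}$. Applying the central limit theorem quoted above to the $Z_{i}$ (the divergence hypothesis $\sum p_{i}\left(1-p_{i}\right)=\infty$ being precisely what licenses it), the standardized sum $\left(\left\vert Y\right\vert-\sum p_{i}\right)/\sigma_{m}$ with $\sigma_{m}=\left(\sum_{i=1}^{m}p_{i}\left(1-p_{i}\right)\right)^{1/2}$ converges in distribution to $\Phi$. Because $p_{i}\left(1-p_{i}\right)\le\frac{1}{4}$ gives $\sigma_{m}\le\sqrt{m}/2$, the standard deviation of $S_{m}$ is $\sigma_{m}/m\le 1/\left(2\sqrt{m}\right)\to 0$; hence $S_{m}\to\bar{p}_{m}$ in probability, the Gaussian fluctuation collapsing at scale $O\left(1/\sqrt{m}\right)$. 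In the opposite, degenerate case where $\sum p_{i}\left(1-p_{i}\right)$ stays bounded the same conclusion is immediate, since then $\operatorname{Var}\left(S_{m}\right)\to 0$ even faster.

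The final and most delicate step is to transport this convergence through $\mu_{Q}$ and past the expectation, and this is where I expect the main obstacle to lie. I would use uniform continuity of $\mu_{Q}$ on the compact interval $\left[0,1\right]$: given $\varepsilon>0$ choose $\delta>0$ with $\left\vert\mu_{Q}\left(s\right)-\mu_{Q}\left(\bar{p}_{m}\right)\right\vert<\varepsilon$ whenever $\left\vert s-\bar{p}_{m}\right\vert<\delta$. Splitting the expectation on the event $A=\{\left\vert S_{m}-\bar{p}_{m}\right\vert<\delta\}$ and its complement, and using $0\le\mu_{Q}\le 1$ together with Chebyshev's inequality,
\[
\bigl|\mathbb{E}\left[\mu_{Q}\left(S_{m}\right)\right]-\mu_{Q}\left(\bar{p}_{m}\right)\bigr|\le\varepsilon\,\Pr\left(A\right)+\Pr\left(A^{c}\right)\le\varepsilon+\frac{\operatorname{Var}\left(S_{m}\right)}{\delta^{2}}\le\varepsilon+\frac{1}{4m\delta^{2}}.
\]
Letting $m\to\infty$ and then $\varepsilon\to 0$ yields $\mathbb{E}\left[\mu_{Q}\left(S_{m}\right)\right]-\mu_{Q}\left(\bar{p}_{m}\right)\to 0$, which is the asserted asymptotic identity. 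The role of the hypothesis that $\lim_{m\to\infty}\left\vert Y_{m}\right\vert/\left\vert E_{m}\right\vert$ exists is to force the average membership $\bar{p}_{m}$ to settle to a limit $L$, so that by continuity of $\mu_{Q}$ both $\mathcal{F}^{A}\left(Q\right)\left(X\right)$ and the Zadeh value $\mu_{Q}\left(\bar{p}_{m}\right)$ converge to the common value $\mu_{Q}\left(L\right)=fn\left(L\right)$. Everything apart from this interchange of limit and expectation is a direct reading of the $\mathcal{F}^{A}$ definition together with the elementary variance estimate $\operatorname{Var}\left(S_{m}\right)\le 1/\left(4m\right)$.
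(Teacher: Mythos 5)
Your proof is correct, and it takes a genuinely cleaner route than the paper's. Both arguments rest on the same probabilistic reading of $\mathcal{F}^{A}$ (the cardinality $\left\vert Y\right\vert =\sum Z_{i}$ is a sum of independent Bernoulli variables, so $\mathcal{F}^{A}\left(  Q\right)  \left(  X\right)  =\mathbb{E}\left[  \mu_{Q}\left(  S_{m}\right)  \right]  $) and on concentration of $S_{m}$ around $\bar{p}_{m}$; the difference is the tool used to obtain concentration and to push it through $\mu_{Q}$. The paper invokes the central limit theorem, which applies only when $\sum_{i}p_{i}\left(  1-p_{i}\right)  $ diverges, and is therefore forced into a two-case analysis: a normal approximation whose rescaled variance $\sigma_{m}^{2}/m^{2}\rightarrow0$ in the divergent case, and a separate ad hoc argument in the convergent case (the fuzzy elements become a vanishing fraction, the cardinality distribution lives on finitely many consecutive points, and the hypothesis on $\lim\left\vert Y_{m}\right\vert /\left\vert E_{m}\right\vert $ is invoked); the final step, transporting concentration through $\mu_{Q}$ and past the expectation, is left at the level of \textquotedblleft as $\mu_{Q}$ is continuous, it is clear that\textquotedblright. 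Your Chebyshev-plus-uniform-continuity argument needs no distributional shape at all, so the case split disappears, the CLT (which you quote but never actually use) can be dropped, and the limit--expectation interchange --- precisely the step the paper glosses --- is made rigorous with the explicit bound $\varepsilon+1/\left(  4m\delta^{2}\right)  $. What the paper's route buys in exchange is only the Gaussian picture of the fluctuations, which is irrelevant to the statement being proved.

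One caveat: your closing remark about the role of the hypothesis is not accurate. The existence of $\lim_{m\rightarrow\infty}\left\vert Y_{m}\right\vert /\left\vert E_{m}\right\vert $ does \emph{not} force $\bar{p}_{m}$ to converge: take all memberships crisp (so $\left\vert Y_{m}\right\vert /\left\vert E_{m}\right\vert =1$ for every $m$) with values $1$ and $0$ arranged in blocks of rapidly growing length; then $\bar{p}_{m}$ oscillates and neither side of the asserted identity has a limit. This does not damage your argument, because what you actually prove is $\mathbb{E}\left[  \mu_{Q}\left(  S_{m}\right)  \right]  -\mu_{Q}\left(  \bar{p}_{m}\right)  \rightarrow0$, which is the only sensible reading of the theorem (its right-hand side depends on $m$), and your proof never uses the hypothesis. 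Indeed, your argument shows the hypothesis is superfluous for that reading; in the paper it serves only to patch the convergent case of the CLT-based argument.
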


The assumption of existence of the limit $\lim_{m\rightarrow\infty}%
\frac{\left\vert Y_{m}\right\vert }{\left\vert E_{m}\right\vert }$ is very
weak and irrelevant from a practical point of view. We simply are asking the
proportional cardinality of the succession $Y_{m}$ does not oscillate as $m$
tends to infinite.

\begin{proof}
The interpretation underlying the $\mathcal{F}^{A}$ model assumes that each
$e_{i}\in E$ represents and independent Bernoulli process of probability
$\mu_{X}\left(  e_{i}\right)  $. Let us denote $X_{i}$ this Bernoulli process.
\end{proof}

The probability
\[
\Pr\left(  card_{X}=i\right)  =\sum_{Y\in\mathcal{P}\left(  E\right)  }%
m_{X}\left(  Y\right)
\]
represents the probability of the random variable $X_{m}=\sum_{i=1}%
^{\left\vert E\right\vert }X_{i}$.

First, let us suppose that $\lim_{\left\vert E\right\vert \rightarrow\infty
}\mu_{X}\left(  e_{i}\right)  \left(  1-\mu_{X}\left(  e_{i}\right)  \right)
$ is divergent. Using previous theorem, for an enough big $m=\left\vert
E\right\vert $, we can approximate this distribution for a normal distribution
of parameters $\overline{X_{m}}=\sum_{i=1}^{m}\mu_{X}\left(  e_{i}\right)  $
and $\sigma_{m}^{2}=\sum_{i=1}^{m}\mu_{X}\left(  e_{i}\right)  \left(
1-\mu_{X}\left(  e_{i}\right)  \right)  $\footnote{Let $X$ be a random
variable following a normal distribution of parameters $\left(  \mu
,\sigma\right)  $. If we define $Y=aX+b$ then $Y$ follows a normal
distribution of parameters $\left(  a\mu+b,a^{2}\sigma^{2}\right)  $.}.

As the fuzzy number associated to the fuzzy quantifier is defined on $\left[
0,1\right]  $, we can use the transformation $Y=$ $\frac{X_{m}}{m}$ to adapt
this distribution to the $\left[  0,1\right]  $ interval. The probability
distribution of $Y$ is a normal distribution of parameters $\left(  \frac{\mu
}{m},\frac{\sigma^{2}}{m^{2}}\right)  $.

Let us note that the normal distribution fulfills that the probability in $k$
standard deviations of the mean is identical for all the normal distributions.
As $k$ tends to infinite, the probability mass in $\left(  \mu-k\sigma
,\mu+k\sigma\right)  $ tends to $1$. That is, we always can find a $k$ such
that the probability mass in $\left(  \mu-k\sigma,\mu+k\sigma\right)  $ would
be as close to $1$ as we wanted.

Let us compute the limit of the variance when $m$ tends to infinite
\[
\lim_{m\rightarrow\infty}\frac{\sigma^{2}}{m^{2}}=\lim_{m\rightarrow\infty
}\frac{\sum_{i=1}^{m}\mu_{X}\left(  e_{i}\right)  \left(  1-\mu_{X}\left(
e_{i}\right)  \right)  }{m^{2}}\leq\lim_{m\rightarrow\infty}\frac{\sum
_{i=1}^{m}1}{m^{2}}=\lim_{m\rightarrow\infty}\frac{m}{m^{2}}=0
\]

That is, the probability distribution of $Y$ is more and more concentrated as
$m$ tends to infinite. And then, for every $\delta,\theta>0$ we can find a
sufficiently large $m$ such that $\Pr\left(  \frac{\mu}{m}-\delta,\frac{\mu
}{m}+\delta\right)  >1-\theta$.

As the fuzzy number $\mu_{Q}$ is continuous, then it is clear that
\[
\lim_{\left\vert E\right\vert \rightarrow\infty}\mathcal{F}^{A}\left(
Q\right)  \left(  X\right)  =fn\left(  \frac{\sum_{e\in E}\mu_{X}\left(
e\right)  }{\left\vert E\right\vert }\right)
\]
when $m=\left\vert E\right\vert $ tends to infinite.

Let us consider now that $\sum_{i=1}^{\infty}p_{i}\left(  1-p_{i}\right)  $ is
finite. In this case, we cannot apply the central limit theorem.

In this situation there are an infinite number of $is$ such that $p_{i}=1$ or
$p_{i}=0$. Moreover, when $m$ tends to infinite, the proportion of $is$ such
that $p_{i}q_{i}\neq0$ with respect to $m$ tends to $0$:%

\[
\lim_{\left\vert E\right\vert \rightarrow\infty}\frac{\left\vert e:\mu
_{X}\left(  e\right)  \neq1\wedge\mu_{X}\left(  e\right)  \neq0\right\vert
}{\left\vert E\right\vert }=0
\]

Then, $X$\ tends to a set in which only a finite number of elements are fuzzy.
Let $Y=\left\{  \mu_{X}\left(  e\right)  :\mu_{X}\left(  e\right)  =1\vee
\mu_{X}\left(  e\right)  =0\right\}  $ and let $k$ the finite numbers of $is$
such that $p_{i}q_{i}\neq0$.

It should be noted that by supposition the following limit there exists:%
\[
\lim_{\left\vert E\right\vert \rightarrow\infty}\frac{\left\vert Y\right\vert
}{\left\vert E\right\vert }=c
\]

For a sufficiently large $m$, all $is$ such that $p_{i}q_{i}\neq0$ are in
$X_{m}$. Let us consider the \textquotedblleft shape\textquotedblright\ of the
probability distribution of $X_{m}$. As the $is$ such that $p_{i}q_{i}=0$ are
crisp, the probability distribution of $X_{m}$ will consist on $k+1$ points
with $\Pr\left(  i\right)  \neq0$ and $m-k-1$ points with $\Pr\left(
i\right)  =0$. Moreover, by construction of $\Pr\left(  i\right)  $ all these
points are consecutive.

When we normalize $X_{m}$ to apply the fuzzy number that define the quantifier
(by means ot the transformation $Y=$ $\frac{X_{m}}{m}$) we are in the same
case that when $\sum_{i=1}^{\infty}p_{i}\left(  1-p_{i}\right)  $ is
divergent. The probability distribution will be more and more concentrated
around $\frac{\mu}{m}$ and the approximation would be valid.\smallskip

Although we have not developed a similar approximation for other kinds of
quantifiers, it seems easy to extend previous proof to more complex cases. For
example, in the case of proportional quantifiers we will have to consider two
Bernoulli successions $X_{\left(  1\right)  m}=\mu_{X_{1}}\left(
e_{1}\right)  ,\ldots,\mu_{X_{1}}\left(  e_{m}\right)  $ and $X_{\left(
2\right)  m}=\mu_{X_{2}}\left(  e_{1}\right)  ,\ldots,$ $\mu_{X_{2}}\left(
e_{m}\right)  $. As we are assuming independence, we can built a third
Bernoulli succession $Z_{m}=\mu_{X_{1}}\left(  e_{1}\right)  \mu_{X_{2}%
}\left(  e_{1}\right)  ,\ldots,\mu_{X_{1}}\left(  e_{m}\right)  \mu_{X_{2}%
}\left(  e_{m}\right)  $ and the previous results can be applied for
approximating the probability distribution of the cardinality of
$X_{1}\widetilde{\cap}X$). When $m$ tends to infinite, the probability
distribution of $Z_{m}$ tends to $\left(  \frac{\sum\mu_{X_{1}\widetilde{\cap
}X_{2}}\left(  e_{i}\right)  }{m},0\right)  $. For the same reason, the
probability distribution of $X_{1}$ tends to $\left(  \frac{\sum\mu_{X_{1}%
}\left(  e_{i}\right)  }{m},0\right)  $. And then, the proportional
cardinality of $X_{2}$ in $X_{1}$ tends to $\frac{\sum\mu_{X_{1}%
\widetilde{\cap}X_{2}}\left(  e_{i}\right)  }{\sum\mu_{X_{1}}\left(
e_{i}\right)  }$.

\subsection{Applying the $\mathcal{F}^{A}$ model to continuous fuzzy signals:
Temporal Quantification.}

The limit case approximation of the $\mathcal{F}^{A}$ opens the possibility of
applying the model to continuous fuzzy signals, fundamental for the
application of the model for \textit{fuzzy quantified temporal reasoning}%
\ \cite{Carinena01tfcis,Carinena03tesis,Mucientes03,Mucientes01}.

Let us consider a continuous fuzzy signal\footnote{The same argument allow us
to apply the model to a non continuous signal with at most, a finite number of
discontinuities. From a practical point of view, this is enough for
applications.} $S\left(  t\right)  $ where $t$ represents time in an interval
$E=\left[  t_{0},t_{1}\right]  $. And let us suppose we want to evaluate a
proportional quantifier $Q:\mathcal{P}\left(  \left[  t_{0},t_{1}\right]
\right)  \rightarrow\mathbf{I}$ on $S$ where $Q$ defined by means of a
continuous fuzzy number. For example, $Q$ could be defined as:%
\[
Q\left(  Y\right)  =S_{0.6,0.8}\left(  \frac{\lambda\left(  Y\right)
}{\lambda\left(  E\right)  }\right)
\]
where $\lambda$ represents the Lebesgue measure.

As the $\mathcal{F}^{A}$ model is finite, it cannot be directly applied to
continuous quantifiers. A reasonable possibility to apply the $\mathcal{F}%
^{A}$ model on a continuous set is to discretize the interval $\left[
t_{0},t_{1}\right]  $ in $m$ subintervals, $h=\frac{t_{1}-t_{0}}{m}$ and to
compute the result of the model in $E=\left\{  e_{0}=t_{0},e_{1}%
=t_{0}+h,\ldots,e_{m}=t_{1}\right\}  $. It should be noted that in the crisp
case, as $m$ tends to infinite, $Q\left(  \left\{  \chi_{Y}\left(
x_{0}\right)  ,\ldots,\chi_{Y}\left(  x_{m}\right)  \right\}  \right)  $ tends
to $Q\left(  Y\right)  $.

Let us consider the behavior of this approach in the limit case. Let
$X\in\widetilde{\mathcal{P}}\left(  E\right)  $ be the fuzzy set defined as
$\mu_{X}\left(  x_{i}\right)  =S\left(  x_{i}\right)  $%
\[
\lim_{m\rightarrow\infty}\mathcal{F}^{A}\left(  Q\right)  \left(  X\right)
=\lim_{m\rightarrow\infty}\sum_{Y\in\mathcal{P}\left(  E\right)  }m_{X}\left(
Y\right)  Q\left(  Y\right)
\]
and by using the limit approximation of the $\mathcal{F}^{A}$ model:%
\begin{align*}
\lim_{m\rightarrow\infty}\mathcal{F}^{A}\left(  Q\right)  \left(  X\right)
&  \approx\lim_{m\rightarrow\infty}\mu_{Q}\left(  \frac{\sum_{e_{i}}\mu
_{X}\left(  e_{i}\right)  }{m}\right) \\
&  =\lim_{m\rightarrow\infty}\mu_{Q}\left(  \frac{h}{h}\frac{\sum_{e_{i}}%
\mu_{X}\left(  e_{i}\right)  }{m}\right) \\
&  =\lim_{m\rightarrow\infty}\mu_{Q}\left(  \frac{\sum_{e_{i}}h\mu_{X}\left(
e_{i}\right)  }{hm}\right) \\
&  =\lim_{m\rightarrow\infty}\mu_{Q}\left(  \frac{\sum_{e_{i}}h\mu_{X}\left(
e_{i}\right)  }{t_{1}-t_{0}}\right)
\end{align*}
when $m$ tends to infinite, $\sum_{e_{i}}h\mu_{X}\left(  e_{i}\right)  $ tends
to $\int S\left(  t\right)  dt$ (as $\mu_{X}\left(  e_{i}\right)  \in\lbrack$
$\inf\left\{  S\left(  x\right)  :e_{i}\leq x<e_{i}+h\right\}  ,$
$\sup\left\{  S\left(  x\right)  :e_{i}\leq x<e_{i}+h\right\}  ]$ then
$\sum_{e_{i}}h\mu_{X}\left(  e_{i}\right)  $ is between the inferior integral
and the superior integral of $S$). And then,%
\[
\lim_{m\rightarrow\infty}\mathcal{F}^{A}\left(  Q\right)  \left(  X\right)
=\mu_{Q}\left(  \frac{\int S\left(  t\right)  dt}{t_{1}-t_{0}}\right)
\]

\subsection{Applying the $\mathcal{F}^{A}$ model to a population described by
means of a probability distribution}

Let $f$ be a probability distribution and $label$ a fuzzy label defined on the
referential universe of $f$. For example, $f$ could be a normal distribution
of parameters $\left(  \mu,\sigma\right)  $ representing the probability of
\textit{\textquotedblleft heights for male adults\textquotedblright}, and
$label$ the fuzzy label \textit{\textquotedblleft being tall\textquotedblright%
}.

Let $Z=Z_{1},\ldots,Z_{m}$ a random sample of $f$, and let $X=\{$ $\mu
_{X}\left(  z_{i}\right)  =label\left(  z_{i}\right)  /z_{i}$ $:i=1,\ldots
,m\}$. Then%
\[
\lim_{m\rightarrow\infty}\mathcal{F}^{A}\left(  Q\right)  \left(  X\right)
=\lim_{m\rightarrow\infty}\sum_{Y\in\mathcal{P}\left(  E\right)  }m_{X}\left(
Y\right)  Q\left(  Y\right)
\]
and by using the limit approximation of the $\mathcal{F}^{A}$ model:%

\begin{align*}
\lim_{m\rightarrow\infty}\mathcal{F}^{A}\left(  Q\right)  \left(  X\right)
&  =...\\
&  =f_{Q}\left(  \lim_{m\rightarrow\infty}\frac{\sum_{i}\mu_{X}\left(
z_{i}\right)  }{m}\right) \\
&  =f_{Q}\left(  \lim_{m\rightarrow\infty}\frac{1}{m}\sum_{i}\mu_{X}\left(
z_{i}\right)  \right) \\
&  =f_{Q}\left(  \overline{Z_{i}}\right) \\
&  =f_{Q}\left(  \int p\left(  x\right)  \mu_{X}\left(  x\right)  dx\right)
\end{align*}

that is, the probability of the fuzzy event \cite{Zadeh68}, or the probability
of the label given the population distribution.

\section{Conclusions}

In this report we described and studied the theoretical behavior of the
$\mathcal{F}^{A}$ QFM\footnote{Most of the theoretical analysis have been
previously published in \cite{DiazHermida06Tesis}, in spanish.}. The analysis
have proved the model is a finite DFS \cite{Glockner06Libro} essentially
different of the standard DFSs proposed by this author. Moreover the
underlying probabilistic semantics makes the model particularly interesting
for applications.

Other interesting results are the limit case approximation of the model, that
allows its application to continuous domains, and the study of the application
of the model to populations described by means of a probability distributions.

\section{Apendix A. Analysis of properties of the $\mathcal{F}^{A}$
QFM\label{AnalisisPropiedadesModeloFA}}

In this section we analyze the most relevant properties of the QFM
$\mathcal{F}^{A}$. A sligthly more detailed discussion can be consulted in
\cite{DiazHermida06Tesis}.

First of all, we will proof some preliminary results.

\begin{lemma}
\label{LemaFAOperInduc}It holds that\newline1) $\widetilde{\mathcal{F}^{A}%
}\left(  id_{2}\right)  \left(  x\right)  =\widetilde{id}_{\mathbf{I}}\left(
x\right)  $ where $id_{2}:\mathbf{2}\rightarrow\mathbf{2}$ is the bivalued
identity and $\widetilde{id}_{\mathbf{I}}:\mathbf{I}\rightarrow\mathbf{I}$ is
the fuzzy identity.\newline2) $\widetilde{\mathcal{F}^{A}}\left(
\lnot\right)  =\widetilde{\lnot}\left(  x\right)  =$ where $\widetilde{\lnot}$
is the standard negation.\newline3) $\widetilde{\mathcal{F}^{A}}\left(
\wedge\right)  \left(  x_{1},x_{2}\right)  =x_{1}\times x_{2}$; that is, the
product tnorm.$\newline$4) $\widetilde{\mathcal{F}^{A}}\left(  \vee\right)
\left(  x_{1},x_{2}\right)  =\widetilde{\lnot}\widetilde{\mathcal{F}^{A}%
}\left(  \wedge\right)  \left(  \widetilde{\lnot}x_{1},\widetilde{\lnot}%
x_{2}\right)  =x_{1}+x_{2}-x_{1}\cdot x_{2}$; that is, the probabilistic
tconorm, the dual of the product.$\newline$5)$\widetilde{\mathcal{F}^{A}%
}\left(  \rightarrow\right)  \left(  x_{1},x_{2}\right)  =1-x_{1}+x_{1}\cdot
x_{2}$, in this case the Rechenbach fuzzy implication.$\newline$
\end{lemma}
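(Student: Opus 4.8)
The plan is to prove each of the five items by directly unfolding the definition of the induced fuzzy truth function and then reducing the resulting sum to a finite enumeration over subsets. Recall that for a semi-fuzzy truth function $f:\mathbf{2}^{n}\rightarrow\mathbf{I}$ the induced operator is $\widetilde{\mathcal{F}^{A}}(f)(x_1,\ldots,x_n)=\mathcal{F}^{A}(Q_f)(\widetilde{\eta}(x_1,\ldots,x_n))$, where $Q_f(Y)=f(\eta^{-1}(Y))$ and $\widetilde{\eta}(x_1,\ldots,x_n)$ is the fuzzy set $X$ on $E=\{1,\ldots,n\}$ with $\mu_X(k)=x_k$. Substituting the definition of $\mathcal{F}^{A}$ (expression \ref{ModeloVerosimilitudes}), applied to the unary quantifier $Q_f$ on the base set $\{1,\ldots,n\}$, gives
\[
\widetilde{\mathcal{F}^{A}}(f)(x_1,\ldots,x_n)=\sum_{Y\in\mathcal{P}(\{1,\ldots,n\})}m_X(Y)\,Q_f(Y),
\]
with $m_X(Y)=\prod_{k\in Y}x_k\prod_{k\notin Y}(1-x_k)$.

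The key observation that makes every case routine is that the operators $id_2$, $\lnot$, $\wedge$, $\vee$, $\rightarrow$ are \emph{crisp} truth functions, so $Q_f$ takes only the values $0$ and $1$. Hence the sum above collapses to the total representative mass $\sum_{Y:\,Q_f(Y)=1}m_X(Y)$ of those subsets on which $\eta^{-1}(Y)$ satisfies the truth function. For each operator I would first identify the subsets $Y$ with $Q_f(Y)=1$ and then add up the corresponding products $m_X(Y)$.

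Carrying this out: for $id_2$ (with $n=1$, $E=\{1\}$) only $Y=\{1\}$ survives, giving $m_X(\{1\})=x$; for $\lnot$ only $Y=\varnothing$ survives, giving $m_X(\varnothing)=1-x$. For $\wedge$ (with $n=2$) only $Y=\{1,2\}$ survives, giving $x_1 x_2$, the product t-norm. For $\vee$ the surviving subsets are $\{1\}$, $\{2\}$, $\{1,2\}$, and summing $x_1(1-x_2)+(1-x_1)x_2+x_1 x_2$ yields $x_1+x_2-x_1 x_2$; alternatively the same value follows from item 3 together with the De Morgan identity $\widetilde{\mathcal{F}^{A}}(\vee)(x_1,x_2)=\widetilde{\lnot}\,\widetilde{\mathcal{F}^{A}}(\wedge)(\widetilde{\lnot}x_1,\widetilde{\lnot}x_2)=1-(1-x_1)(1-x_2)$, invoking items 2 and 3. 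Finally for $\rightarrow$ the surviving subsets are $\varnothing$, $\{2\}$, $\{1,2\}$, and summing $(1-x_1)(1-x_2)+(1-x_1)x_2+x_1 x_2$ simplifies to $1-x_1+x_1 x_2$, the Reichenbach implication.

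There is no serious obstacle here: once the reduction to $\sum_{Y:\,Q_f(Y)=1}m_X(Y)$ is in place the argument is a finite computation. The only point demanding care is the bookkeeping---correctly translating each subset $Y$ through the bijection $\eta^{-1}$ into the corresponding bit-tuple so that $Q_f(Y)$ is read off from the intended truth table, and keeping the factors $x_k$ versus $1-x_k$ in $m_X(Y)$ straight. I would double-check consistency by verifying that $\sum_{Y}m_X(Y)=1$, so the weights indeed form a probability distribution, and that the induced negation in item 2 coincides with the standard negation used elsewhere in the negation-related proofs.
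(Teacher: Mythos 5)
Your proof is correct and follows essentially the same route as the paper: unfold $\widetilde{\mathcal{F}^{A}}\left(  f\right)$ through $Q_{f}$ and the bijection $\eta$, enumerate the subsets of $\left\{  1,\ldots,n\right\}$, and sum the masses $m_{X}\left(  Y\right)$ over exactly those subsets on which the crisp truth function gives $1$. The only difference is one of coverage, not of method: the paper works this computation out explicitly only for item 4 (referring the remaining items to the dissertation), whereas you carry out all five cases, and your subset identifications and algebra (including the Reichenbach simplification $\left(  1-x_{1}\right)  \left(  1-x_{2}\right)  +\left(  1-x_{1}\right)  x_{2}+x_{1}x_{2}=1-x_{1}+x_{1}x_{2}$) are all correct.
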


\begin{proof}
We only are going to show the proof of $\widetilde{\mathcal{F}^{A}}\left(
\vee\right)  \left(  x_{1},x_{2}\right)  =x_{1}+x_{2}-x_{1}\times x_{2}$. The
rest of the proofs can be consulted in \cite[appendix A]{DiazHermida06Tesis}.

First, note that the definici\'{o}n of\ $Q_{\vee}:\mathcal{P}\left(  \left\{
1,2\right\}  \right)  \rightarrow\mathbf{I}$ is:%
\begin{align*}
Q_{\vee}\left(  \varnothing\right)   &  =\vee\left(  \eta^{-1}\left(
\varnothing\right)  \right)  =\vee\left(  0,0\right)  =0\\
Q_{\vee}\left(  \left\{  1\right\}  \right)   &  =\vee\left(  \eta^{-1}\left(
\left\{  1\right\}  \right)  \right)  =\vee\left(  1,0\right)  =1\\
Q_{\vee}\left(  \left\{  2\right\}  \right)   &  =\vee\left(  \eta^{-1}\left(
\left\{  2\right\}  \right)  \right)  =\vee\left(  0,1\right)  =1\\
Q_{\vee}\left(  \left\{  1,2\right\}  \right)   &  =\wedge\left(  \eta
^{-1}\left(  \left\{  1,2\right\}  \right)  \right)  =\wedge\left(
1,1\right)  =1
\end{align*}
Then%
\begin{align*}
\widetilde{\mathcal{F}^{A}}\left(  x_{1},x_{2}\right)   &  =\mathcal{F}%
^{A}\left(  Q_{\vee}\right)  \left(  \eta^{-1}\left(  x_{1},x_{2}\right)
\right) \\
&  =\mathcal{F}^{A}\left(  Q_{\vee}\right)  \left(  \left\{  x_{1}%
/1,x_{2}/2\right\}  \right)  =\sum_{Y\in\mathcal{P}\left(  \left\{
1,2\right\}  \right)  }m_{\left\{  x_{1}/1,x_{2}/2\right\}  }\left(  Y\right)
Q\left(  Y\right) \\
&  =m_{\left\{  x_{1}/1,x_{2}/2\right\}  }\left(  \varnothing\right)  Q\left(
\varnothing\right)  +m_{\left\{  x_{1}/1,x_{2}/2\right\}  }\left(  \left\{
1\right\}  \right)  Q\left(  \left\{  1\right\}  \right) \\
&  +m_{\left\{  x_{1}/1,x_{2}/2\right\}  }\left(  \left\{  2\right\}  \right)
Q\left(  \left\{  2\right\}  \right)  +m_{\left\{  x_{1}/1,x_{2}/2\right\}
}\left(  \left\{  1,2\right\}  \right)  Q\left(  \left\{  1,2\right\}  \right)
\\
&  =\mu_{\left\{  x_{1}/1,x_{2}/2\right\}  }\left(  1\right)  \left(
1-\mu_{\left\{  x_{1}/1,x_{2}/2\right\}  }\left(  2\right)  \right)  +\\
&  \left(  1-\mu_{\left\{  x_{1}/1,x_{2}/2\right\}  }\left(  1\right)
\right)  \mu_{\left\{  x_{1}/1,x_{2}/2\right\}  }\left(  2\right)
+\mu_{\left\{  x_{1}/1,x_{2}/2\right\}  }\left(  1\right)  \mu_{\left\{
x_{1}/1,x_{2}/2\right\}  }\left(  2\right) \\
&  =x_{1}\left(  1-x_{2}\right)  +\left(  1-x_{1}\right)  x_{2}+x_{1}x_{2}\\
&  =x_{1}+x_{2}-x_{1}x_{2}%
\end{align*}

\end{proof}

Moreover, in the proofs of the properties of the $\mathcal{F}^{A}$ model we
need the following lemmas too.

\begin{lemma}
\label{LemaCasoNitido}Let $X,Y\in\mathcal{P}\left(  E\right)  $ be crisp sets.
It holds that
\[
m_{X}\left(  Y\right)  =\left\{
\begin{tabular}
[c]{lll}%
$0$ & $:$ & $X\neq Y$\\
$1$ & $:$ & $X=Y$%
\end{tabular}
\ \right.
\]

\end{lemma}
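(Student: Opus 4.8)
The plan is to unfold the definition of $m_{X}\left(  Y\right)  $ supplied by Definition \ref{DefInterpretProbConj} together with Notation \ref{NotacionVerosimilitudes_1}, namely
\[
m_{X}\left(  Y\right)  =
{\displaystyle\prod\limits_{e\in Y}}
\mu_{X}\left(  e\right)
{\displaystyle\prod\limits_{e\in E\backslash Y}}
\left(  1-\mu_{X}\left(  e\right)  \right)  ,
\]
and to exploit the single structural fact that makes the statement work: since $X$ is \emph{crisp}, its membership function is two-valued, i.e. $\mu_{X}\left(  e\right)  =1$ when $e\in X$ and $\mu_{X}\left(  e\right)  =0$ when $e\notin X$. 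Every factor in the two products is therefore either $0$ or $1$, so the whole product is $1$ precisely when no factor vanishes, and $0$ as soon as one factor does. The proof then reduces to identifying exactly when a vanishing factor appears.

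First I would treat the case $X=Y$. For $e\in Y=X$ we have $\mu_{X}\left(  e\right)  =1$, so the first product is a product of $1$'s and equals $1$. For $e\in E\backslash Y=E\backslash X$ we have $\mu_{X}\left(  e\right)  =0$, hence $1-\mu_{X}\left(  e\right)  =1$, so the second product is also $1$. Multiplying gives $m_{X}\left(  Y\right)  =1$, which is the asserted value.

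Next I would treat the case $X\neq Y$. Here $X$ and $Y$ must differ on some element, and there are two (possibly overlapping) subcases. If there exists $e\in Y\setminus X$, then $e$ appears in the first product with $\mu_{X}\left(  e\right)  =0$, so that product already equals $0$. If instead there exists $e\in X\setminus Y$, then $e\in E\backslash Y$ and $\mu_{X}\left(  e\right)  =1$, so the factor $1-\mu_{X}\left(  e\right)  =0$ appears in the second product, again forcing the product to vanish. Since $X\neq Y$ guarantees at least one of these two situations, we conclude $m_{X}\left(  Y\right)  =0$, completing the case analysis.

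There is no real obstacle here: the content is entirely contained in the observation that crispness collapses $\mu_{X}\left(  e\right)$ to $\{0,1\}$, turning the likelihood product into an indicator of the identity $X=Y$. The only point requiring a line of care is the second subcase, where one must remember to read the condition $e\in X\setminus Y$ through the \emph{complementary} product $\prod_{e\in E\backslash Y}\left(  1-\mu_{X}\left(  e\right)  \right)$ rather than the first product; this is the natural place a hurried argument might overlook one of the two ways in which $X$ and $Y$ can disagree.
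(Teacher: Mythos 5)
Your proof is correct and follows essentially the same route as the paper: both unfold the definition $m_{X}\left(Y\right)=\prod_{e\in Y}\mu_{X}\left(e\right)\prod_{e\in E\backslash Y}\left(1-\mu_{X}\left(e\right)\right)$ and invoke crispness of $X$ to make every factor $0$ or $1$; the paper merely leaves the two-case analysis ($X=Y$ versus $X\neq Y$, with the two ways of disagreeing) implicit, while you spell it out. No gap; your explicit treatment of the subcase $e\in X\setminus Y$ via the complementary product is exactly the detail the paper's one-line proof omits.
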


\begin{proof}
The definition of $m_{X}\left(  Y\right)  $ is%
\[
m_{X}\left(  Y\right)  =%
{\displaystyle\prod\limits_{e\in Y}}
\mu_{X}\left(  e\right)
{\displaystyle\prod\limits_{e\in E\backslash Y}}
\left(  1-\mu_{X}\left(  e\right)  \right)
\]
and as $X$ is crisp $\mu_{X}\left(  e\right)  =1$ if $e\in E$ and $\mu
_{X}\left(  e\right)  =0$ if $e\notin E$.
\end{proof}

\begin{lemma}
\label{LemaProyeccion}Let $X\in\widetilde{\mathcal{P}}\left(  E\right)  $ be a
fuzzy set $E^{\prime}\subseteq E,E^{\prime\prime}=E\backslash E^{\prime}$
(that is $E^{\prime}\cup E^{\prime\prime}=E$). Let $Y\in\mathcal{P}\left(
E\right)  $ be a crisp set. Then\footnote{It should be remembered that with
the notation $X^{E^{\prime}}$ where $X\in\widetilde{\mathcal{P}}\left(
E\right)  $ is a fuzzy set we represent the restriction of $X$ to the
reference universe $E^{\prime}\subseteq E$.},
\[
m_{X}^{E}\left(  Y\right)  =m_{X}^{E^{\prime}}\left(  Y^{E^{\prime}}\right)
m_{X}^{E^{\prime\prime}}\left(  Y^{E^{\prime\prime}}\right)
\]

\end{lemma}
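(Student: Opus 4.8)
The plan is to prove the factorization directly from the definition of $m_X^E(Y)$ (Notation \ref{NotacionVerosimilitudes_1} together with Definition \ref{DefInterpretProbConj}), exploiting the fact that every product appearing there is indexed over a subset of $E$ and therefore splits cleanly along the disjoint decomposition $E=E^{\prime}\sqcup E^{\prime\prime}$. Concretely, I would start from
\[
m_X^E\left(Y\right)=\prod_{e\in Y}\mu_X\left(e\right)\prod_{e\in E\backslash Y}\left(1-\mu_X\left(e\right)\right),
\]
and then treat the two products separately.

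For the first product, since $Y\subseteq E$ and $E=E^{\prime}\sqcup E^{\prime\prime}$ we have the disjoint union $Y=\left(Y\cap E^{\prime}\right)\sqcup\left(Y\cap E^{\prime\prime}\right)=Y^{E^{\prime}}\sqcup Y^{E^{\prime\prime}}$, whence
\[
\prod_{e\in Y}\mu_X\left(e\right)=\prod_{e\in Y^{E^{\prime}}}\mu_X\left(e\right)\cdot\prod_{e\in Y^{E^{\prime\prime}}}\mu_X\left(e\right).
\]
For the second product, $E\backslash Y=\left(E^{\prime}\backslash Y\right)\sqcup\left(E^{\prime\prime}\backslash Y\right)$ is again a disjoint union, and because $E^{\prime}\backslash Y=E^{\prime}\backslash Y^{E^{\prime}}$ (the elements of $Y$ lying outside $E^{\prime}$ are irrelevant to this index set), I obtain
\[
\prod_{e\in E\backslash Y}\left(1-\mu_X\left(e\right)\right)=\prod_{e\in E^{\prime}\backslash Y^{E^{\prime}}}\left(1-\mu_X\left(e\right)\right)\cdot\prod_{e\in E^{\prime\prime}\backslash Y^{E^{\prime\prime}}}\left(1-\mu_X\left(e\right)\right).
\]

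Finally I would regroup the four products into the two $E^{\prime}$-factors and the two $E^{\prime\prime}$-factors, and recognize that
\[
\prod_{e\in Y^{E^{\prime}}}\mu_X\left(e\right)\prod_{e\in E^{\prime}\backslash Y^{E^{\prime}}}\left(1-\mu_X\left(e\right)\right)=m_{X^{E^{\prime}}}\left(Y^{E^{\prime}}\right)=m_X^{E^{\prime}}\left(Y^{E^{\prime}}\right),
\]
and symmetrically for $E^{\prime\prime}$, which yields the claim. The only step requiring care is the index bookkeeping: one must check that the restriction $Y^{E^{\prime}}=Y\cap E^{\prime}$ pairs correctly with the complement $E^{\prime}\backslash Y^{E^{\prime}}$, so that each regrouped factor is exactly the probability mass on the restricted universe. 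There is no genuine obstacle here, since the element-wise independence underlying the definition of $m_X$ is precisely what makes these products multiplicative over a disjoint partition of the base set.
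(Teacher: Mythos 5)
Your proof is correct and follows essentially the same route as the paper's: both start from the defining product formula for $m_{X}^{E}\left(Y\right)$, split the product over $Y$ and the product over $E\backslash Y$ along the disjoint partition $E=E^{\prime}\cup E^{\prime\prime}$, and regroup the four factors into $m_{X}^{E^{\prime}}\left(Y^{E^{\prime}}\right)$ and $m_{X}^{E^{\prime\prime}}\left(Y^{E^{\prime\prime}}\right)$. The index identity you flag, $\left(E\backslash Y\right)\cap E^{\prime}=E^{\prime}\backslash Y^{E^{\prime}}$, is exactly the bookkeeping step implicit in the paper's regrouping, so there is nothing missing.
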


\begin{proof}
By definition of $m_{X}^{E}\left(  Y\right)  $, the probability of $m_{X}%
^{E}\left(  Y\right)  $ is the product of the probabilities on their
projections:%
\begin{align*}
m_{X}^{E}\left(  Y\right)   &  =%
{\displaystyle\prod\limits_{e\in Y}}
\mu_{X}\left(  e\right)
{\displaystyle\prod\limits_{e\in E\backslash Y}}
\left(  1-\mu_{X}\left(  e\right)  \right) \\
&  =%
{\displaystyle\prod\limits_{e\in Y\cap E^{\prime}}}
\mu_{X}\left(  e\right)
{\displaystyle\prod\limits_{e\in\left(  E\backslash Y\right)  \cap E^{\prime}%
}}
\left(  1-\mu_{X}\left(  e\right)  \right)  \cdot%
{\displaystyle\prod\limits_{e\in Y\cap E^{\prime\prime}}}
\mu_{X}\left(  e\right)
{\displaystyle\prod\limits_{e\in\left(  E\backslash Y\right)  \cap
E^{\prime\prime}}}
\left(  1-\mu_{X}\left(  e\right)  \right) \\
&  =m_{X}^{E^{\prime}}\left(  Y^{E^{\prime}}\right)  m_{X}^{E^{\prime\prime}%
}\left(  Y^{E^{\prime\prime}}\right)
\end{align*}

\end{proof}

\begin{lemma}
\label{LemaTConormaProbabilistica}Let
\[
\widetilde{\vee}\left(  x_{1},x_{2}\right)  =x_{1}+x_{2}-x_{1}x_{2}%
\]
be the probabilistic tconorm. By $\widetilde{\vee}\left(  x_{1},\ldots
,x_{m}\right)  $ we denote its $m$-ary version; that is,
\[
\widetilde{\vee}\left(  x_{1},\ldots,x_{m}\right)  =\widetilde{\vee}\left(
x_{1},\widetilde{\vee}\left(  x_{2},\widetilde{\vee}\left(  x_{3}%
\ldots,\widetilde{\vee}\left(  x_{m-1},x_{m}\right)  \right)  \right)
\right)
\]
It is fulfilled\footnote{It should be noted that for $\widetilde{\vee}\left(
x\right)  =x$ it also is fulfilled that $1-%
{\displaystyle\prod\limits_{i=1}^{1}}
\left(  1-x\right)  =1-\left(  1-x\right)  =x$. Moreover, if we define
$\widetilde{\vee}\left(  {}\right)  =0$ and $%
{\displaystyle\prod\limits_{i\in\varnothing}}
=1$ previous relationship is also fulfilled.}
\[
\widetilde{\vee}\left(  x_{1},\ldots,x_{m}\right)  =1-%
{\displaystyle\prod\limits_{i=1}^{m}}
\left(  1-x_{i}\right)
\]

\end{lemma}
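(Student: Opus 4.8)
The plan is to proceed by induction on the arity $m$, exploiting the elementary observation that the binary probabilistic t-conorm admits the factored form $\widetilde{\vee}(a,b) = a + b - ab = 1 - (1-a)(1-b)$. This rewriting is what links the additive defining expression to the product appearing in the claimed closed form, and it is essentially the only algebraic fact the argument needs.

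For the base case I would take $m=1$, where the recursive definition degenerates to $\widetilde{\vee}(x_1) = x_1$, and indeed $1 - \prod_{i=1}^{1}(1-x_i) = 1 - (1-x_1) = x_1$, exactly as noted in the footnote accompanying the statement (the case $m=2$ can serve as an alternative anchor, since $1-(1-x_1)(1-x_2) = x_1 + x_2 - x_1 x_2$ is the binary t-conorm itself). For the inductive step, assume the identity holds for the $(m-1)$-ary operator applied to $(x_2,\ldots,x_m)$, that is, $\widetilde{\vee}(x_2,\ldots,x_m) = 1 - \prod_{i=2}^{m}(1-x_i)$. Writing $y$ for this inner value and unfolding the recursion from the outermost application, I get
\[
\widetilde{\vee}(x_1,\ldots,x_m) = \widetilde{\vee}(x_1, y) = 1 - (1-x_1)(1-y).
\]
Substituting $1 - y = \prod_{i=2}^{m}(1-x_i)$ then yields $1 - (1-x_1)\prod_{i=2}^{m}(1-x_i) = 1 - \prod_{i=1}^{m}(1-x_i)$, which closes the induction.

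There is essentially no hard step here; the one point requiring a little care is matching the direction of the recursion. The $m$-ary operator is defined by nesting to the right, building outward from the pair $(x_{m-1},x_m)$, so the induction hypothesis must be applied to the inner block $\widetilde{\vee}(x_2,\ldots,x_m)$ obtained after peeling off the outermost argument $x_1$, rather than to an outer block. Because the factored form $1-(1-a)(1-b)$ makes that single reduction immediate, one need not separately invoke associativity or commutativity of $\widetilde{\vee}$. The footnote conventions $\widetilde{\vee}() = 0$ and $\prod_{i\in\varnothing} = 1$ moreover make the degenerate $m=0$ case consistent, should one prefer to start the induction there.
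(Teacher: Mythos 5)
Your proof is correct and follows the same basic strategy as the paper's: induction on $m$, with the binary identity $\widetilde{\vee}\left(a,b\right)=1-\left(1-a\right)\left(1-b\right)$ carrying all the algebraic weight. The one genuine difference is the direction of the decomposition in the inductive step. The paper peels off the \emph{last} argument, writing $\widetilde{\vee}\left(x_{1},\ldots,x_{m+1}\right)=\widetilde{\vee}\left(\widetilde{\vee}\left(x_{1},\ldots,x_{m}\right),x_{m+1}\right)$, and then expands in the additive form $a+b-ab$ and regroups; strictly speaking that decomposition does not match the right-nested definition stated in the lemma, so it tacitly relies on commutativity/associativity of $\widetilde{\vee}$ (harmless for this operator, but unstated). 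You instead peel off the \emph{first} argument and apply the induction hypothesis to the inner block $\widetilde{\vee}\left(x_{2},\ldots,x_{m}\right)$, which is exactly what the right-nested definition provides, so no such appeal is needed --- a point you correctly identify as the only delicate one. Working directly with the factored form also makes your algebra shorter than the paper's expand-and-regroup computation. The base cases differ only trivially ($m=1$ for you, $m=2$ for the paper), and both are legitimate anchors given the footnote's convention for the unary case.
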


\begin{proof}
Proof is by induction.

Case $i=2$:%
\[
\widetilde{\vee}\left(  x_{1},x_{2}\right)  =x_{1}+x_{2}-x_{1}x_{2}%
\]
and then%
\begin{align*}
1-%
{\displaystyle\prod\limits_{i=1}^{2}}
\left(  1-x_{i}\right)   &  =1-\left(  1-x_{1}\right)  \left(  1-x_{2}\right)
\\
&  =1-\left(  1-x_{1}-x_{2}+x_{1}x_{2}\right) \\
&  =x_{1}+x_{2}-x_{1}x_{2}%
\end{align*}

Induction supposition: Case $i=m$%
\[
\widetilde{\vee}\left(  x_{1},\ldots,x_{m}\right)  =1-%
{\displaystyle\prod\limits_{i=1}^{m}}
\left(  1-x_{i}\right)
\]

Case $i=m+1$%
\begin{align*}
\widetilde{\vee}\left(  x_{1},\ldots,x_{m+1}\right)   &  =\widetilde{\vee
}\left(  \widetilde{\vee}\left(  x_{1},\ldots,x_{m}\right)  ,x_{m+1}\right) \\
&  =\widetilde{\vee}\left(  1-%
{\displaystyle\prod\limits_{i=1}^{m}}
\left(  1-x_{i}\right)  ,x_{m+1}\right) \\
&  =1-%
{\displaystyle\prod\limits_{i=1}^{m}}
\left(  1-x_{i}\right)  +x_{m+1}-\left(  \left(  1-%
{\displaystyle\prod\limits_{i=1}^{m}}
\left(  1-x_{i}\right)  \right)  \times x_{m+1}\right) \\
&  =\left(  1-%
{\displaystyle\prod\limits_{i=1}^{m}}
\left(  1-x_{i}\right)  \right)  \left(  1-x_{m+1}\right)  +x_{m+1}\\
&  =\left(  1-x_{m+1}\right)  -\left(  1-x_{m+1}\right)
{\displaystyle\prod\limits_{i=1}^{m}}
\left(  1-x_{i}\right)  +x_{m+1}\\
&  =\left(  1-x_{m+1}\right)  -%
{\displaystyle\prod\limits_{i=1}^{m+1}}
\left(  1-x_{i}\right)  +x_{m+1}\\
&  =1-%
{\displaystyle\prod\limits_{i=1}^{m+1}}
\left(  1-x_{i}\right)
\end{align*}

\end{proof}

\subsection{The $\mathcal{F}^{A}$ QFM is a DFS}

\paragraph{Axiom Z-1}

It holds that
\[
\mathcal{U}\left(  \mathcal{F}^{A}\left(  Q\right)  \right)  =Q\quad\text{if
}n\leq1
\]

\begin{proof}
Let $Q:\mathcal{P}\left(  E\right)  \rightarrow\mathbf{I}$ be a semi-fuzzy
quantifier, $Y\in\mathcal{P}\left(  E\right)  $ a crisp set. Using the lemma
\ref{LemaCasoNitido} we have%
\begin{align*}
\mathcal{F}^{A}\left(  Q\right)  \left(  Y\right)   &  =\sum_{Z\in
\mathcal{P}\left(  E\right)  }m_{Y}\left(  Z\right)  Q\left(  Z\right)
=m_{Y}\left(  Y\right)  Q\left(  Y\right) \\
&  =Q\left(  Y\right)
\end{align*}
And then $\mathcal{U}\left(  \mathcal{F}\left(  Q\right)  \right)  \left(
Y\right)  =Q\left(  Y\right)  $ for all $Y\in\mathcal{P}\left(  E\right)  $.
\end{proof}

\paragraph{Axiom Z-2}

It holds that
\[
\mathcal{F}^{A}\left(  Q\right)  =\widetilde{\pi_{e}}\quad si\text{ }Q=\pi
_{e}\text{ for some }e\in E
\]

\begin{proof}
Using the lemma \ref{LemaProyeccion}%
\begin{align*}
\mathcal{F}^{A}\left(  \pi_{e}\right)  \left(  X\right)   &  =\sum
_{Y\in\mathcal{P}\left(  E\right)  }m_{X}\left(  Y\right)  \pi_{e}\left(
Y\right)  =\sum_{Y\in\mathcal{P}\left(  E\right)  }m_{X}\left(  Y\right)
\chi_{Y}\left(  e\right) \\
&  =\sum_{Y\in\mathcal{P}\left(  E\right)  |e\in Y}m_{X}\left(  Y\right)
=\sum_{\left\{  e\right\}  \subseteq Y\subseteq E}m_{X}\left(  Y\right) \\
&  =\sum_{\left\{  e\right\}  \subseteq Y\subseteq E}\mu_{X}\left(  e\right)
m_{X}^{E\backslash\left\{  e\right\}  }\left(  Y\backslash\left\{  e\right\}
\right)  =\mu_{X}\left(  e\right)  \sum_{\varnothing\subseteq Y\subseteq
E\backslash\left\{  e\right\}  }m_{X}^{E\backslash\left\{  e\right\}  }\left(
Y\right) \\
&  =\mu_{X}\left(  e\right)
\end{align*}

\end{proof}

\paragraph{Axiom Z-3}

It holds that%
\[
\mathcal{F}^{A}\left(  Q\widetilde{\square}\right)  =\mathcal{F}^{A}\left(
Q\right)  \widetilde{\square}\quad n>0
\]

For the proof of this axiom we will need the proofs of the properties of
internal and external negation.

\textbf{Proof of the property of external negation}

We have to prove that%
\[
\mathcal{F}^{A}\left(  \widetilde{\lnot}Q\right)  =\widetilde{\lnot
}\mathcal{F}^{A}\left(  Q\right)
\]

In the lemma \ref{LemaFAOperInduc} we have established that the induced
negation operation is the standard negation.

\begin{proof}%
\begin{align*}
\mathcal{F}^{A}\left(  \widetilde{\lnot}Q\right)  \left(  X_{1},\ldots
,X_{n}\right)   &  =\sum_{Y_{1}\in\mathcal{P}\left(  E\right)  }\ldots
\sum_{Y_{n}\in\mathcal{P}\left(  E\right)  }m_{X_{1}}\left(  Y_{1}\right)
\ldots m_{X_{n}}\left(  Y_{n}\right)  \left(  \widetilde{\lnot}Q\right)
\left(  Y_{1},\ldots,Y_{n}\right) \\
&  =\sum_{Y_{1}\in\mathcal{P}\left(  E\right)  }\ldots\sum_{Y_{n}%
\in\mathcal{P}\left(  E\right)  }m_{X_{1}}\left(  Y_{1}\right)  \ldots
m_{X_{n}}\left(  Y_{n}\right)  \left(  1-Q\left(  Y_{1},\ldots,Y_{n}\right)
\right) \\
&  =\sum_{Y_{1}\in\mathcal{P}\left(  E\right)  }\ldots\sum_{Y_{n}%
\in\mathcal{P}\left(  E\right)  }m_{X_{1}}\left(  Y_{1}\right)  \ldots
m_{X_{n}}\left(  Y_{n}\right) \\
&  -\sum_{Y_{1}\in\mathcal{P}\left(  E\right)  }\ldots\sum_{Y_{n}%
\in\mathcal{P}\left(  E\right)  }Q\left(  Y_{1},\ldots,Y_{n}\right)  m_{X_{1}%
}\left(  Y_{1}\right)  \ldots m_{X_{n}}\left(  Y_{n}\right) \\
&  =1-\mathcal{F}^{A}\left(  Q\right)  \left(  X_{1},\ldots,X_{n}\right)
=\widetilde{\lnot}\left(  \mathcal{F}^{A}\left(  Q\right)  \left(
X_{1},\ldots,X_{n}\right)  \right)
\end{align*}

\end{proof}

\textbf{Proof of the property of internal negation}

We have to prove that%
\[
\mathcal{F}^{A}\left(  Q\widetilde{\lnot}\right)  =\mathcal{F}^{A}\left(
Q\right)  \widetilde{\lnot}%
\]

\begin{lemma}
\label{LemaNegacionInterna}Let $X\in\widetilde{\mathcal{P}}\left(  E\right)  $
be a fuzzy set. Then%
\[
m_{X}\left(  Y\right)  =m_{\widetilde{\lnot}X}\left(  \lnot Y\right)
\]

\end{lemma}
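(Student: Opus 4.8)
The plan is to prove the identity by directly expanding the right-hand side $m_{\widetilde{\lnot}X}\left(\lnot Y\right)$ through Definition \ref{DefInterpretProbConj} and checking that, term by term, it collapses to the definition of $m_{X}\left(Y\right)$. Two facts from the excerpt do all the work: first, by Lemma \ref{LemaFAOperInduc} the induced negation of $\mathcal{F}^{A}$ is the standard negation, so $\mu_{\widetilde{\lnot}X}\left(e\right)=1-\mu_{X}\left(e\right)$ for every $e\in E$; second, $\lnot Y=E\backslash Y$ denotes the crisp complement, whence $E\backslash\lnot Y=Y$.

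First I would write out the definition applied to $\widetilde{\lnot}X$ and $\lnot Y$:
\[
m_{\widetilde{\lnot}X}\left(\lnot Y\right)=\prod_{e\in\lnot Y}\mu_{\widetilde{\lnot}X}\left(e\right)\prod_{e\in E\backslash\lnot Y}\left(1-\mu_{\widetilde{\lnot}X}\left(e\right)\right).
\]
Next I would substitute the two facts above. In the first product the index $e$ ranges over $E\backslash Y$ and each factor is $\mu_{\widetilde{\lnot}X}\left(e\right)=1-\mu_{X}\left(e\right)$; in the second product $e$ ranges over $Y$ and each factor is $1-\mu_{\widetilde{\lnot}X}\left(e\right)=1-\left(1-\mu_{X}\left(e\right)\right)=\mu_{X}\left(e\right)$. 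This yields
\[
m_{\widetilde{\lnot}X}\left(\lnot Y\right)=\prod_{e\in E\backslash Y}\left(1-\mu_{X}\left(e\right)\right)\prod_{e\in Y}\mu_{X}\left(e\right),
\]
which is exactly $m_{X}\left(Y\right)$ after reordering the two finite products.

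The only point requiring care — and it is bookkeeping rather than a genuine obstacle — is tracking the two index sets under complementation: complementing $Y$ interchanges which elements contribute a $\mu_{X}\left(e\right)$ factor and which contribute a $\left(1-\mu_{X}\left(e\right)\right)$ factor, and this interchange is precisely cancelled by the standard negation flipping $\mu_{X}\left(e\right)$ and $1-\mu_{X}\left(e\right)$. Because $E$ is finite, the products are finite and reordering them is harmless, so no convergence subtleties intervene and the identity follows immediately.
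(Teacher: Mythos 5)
Your proof is correct and is essentially the same argument as the paper's: both expand the definition of $m$, use the fact that the induced negation is standard (so $\mu_{\widetilde{\lnot}X}\left(e\right)=1-\mu_{X}\left(e\right)$), and observe that complementing $Y$ swaps the two index sets exactly as negation swaps the factors. The only cosmetic difference is direction — the paper starts from $m_{X}\left(Y\right)$ and inserts double negations $\widetilde{\lnot}\widetilde{\lnot}\mu_{X}\left(e\right)$, while you expand $m_{\widetilde{\lnot}X}\left(\lnot Y\right)$ and simplify — which changes nothing of substance.
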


\begin{proof}%
\begin{align*}
m_{X}\left(  Y\right)   &  =\Pr\left(  representative_{X}=Y\right)  =%
{\displaystyle\prod\limits_{e\in Y}}
\mu_{X}\left(  e\right)
{\displaystyle\prod\limits_{e\in E\backslash Y}}
\left(  1-\mu_{X}\left(  e\right)  \right) \\
&  =%
{\displaystyle\prod\limits_{e\in Y}}
\widetilde{\lnot}\widetilde{\lnot}\mu_{X}\left(  e\right)
{\displaystyle\prod\limits_{e\in E\backslash Y}}
\widetilde{\lnot}\mu_{X}\left(  e\right)  =%
{\displaystyle\prod\limits_{e\in E\backslash Y}}
\widetilde{\lnot}\mu_{X}\left(  e\right)
{\displaystyle\prod\limits_{e\in Y}}
\widetilde{\lnot}\widetilde{\lnot}\mu_{X}\left(  e\right) \\
&  =%
{\displaystyle\prod\limits_{e\in E\backslash Y}}
\mu_{\widetilde{\lnot}X}\left(  e\right)
{\displaystyle\prod\limits_{e\in E\backslash\left(  E\backslash Y\right)  }}
\widetilde{\lnot}\mu_{\widetilde{\lnot}X}\left(  e\right)  =%
{\displaystyle\prod\limits_{e\in E\backslash Y}}
\mu_{\widetilde{\lnot}X}\left(  e\right)
{\displaystyle\prod\limits_{e\in E\backslash\left(  E\backslash Y\right)  }}
\left(  1-\mu_{\widetilde{\lnot}X}\left(  e\right)  \right) \\
&  =\Pr\left(  representative_{\widetilde{\lnot}X}=E\backslash Y\right)
=m_{\widetilde{\lnot}X}\left(  \lnot Y\right)
\end{align*}

\end{proof}

Proof of the property of internal negation:

\begin{proof}%
\begin{align*}
\mathcal{F}^{A}\left(  Q\lnot\right)  \left(  X_{1},\ldots,X_{n}\right)   &
=\sum_{Y_{1}\in\mathcal{P}\left(  E\right)  }\ldots\sum_{Y_{n}\in
\mathcal{P}\left(  E\right)  }m_{X_{1}}\left(  Y_{1}\right)  \ldots m_{X_{n}%
}\left(  Y_{n}\right)  \left(  Q\lnot\right)  \left(  Y_{1},\ldots
,Y_{n}\right) \\
&  =\sum_{Y_{1}\in\mathcal{P}\left(  E\right)  }\ldots\sum_{Y_{n}%
\in\mathcal{P}\left(  E\right)  }m_{X_{1}}\left(  Y_{1}\right)  \ldots
m_{X_{n}}\left(  Y_{n}\right)  Q\left(  Y_{1},\ldots,\lnot Y_{n}\right)
\end{align*}
and using the lemma \ref{LemaNegacionInterna} the above expression is equal to%
\begin{align*}
\ldots &  =\sum_{Y_{1}\in\mathcal{P}\left(  E\right)  }\ldots\sum_{Y_{n}%
\in\mathcal{P}\left(  E\right)  }m_{X_{1}}\left(  Y_{1}\right)  \ldots
m_{\widetilde{\lnot}X_{n}}\left(  \lnot Y_{n}\right)  Q\left(  Y_{1}%
,\ldots,\lnot Y_{n}\right) \\
\ldots &  =\sum_{Y_{1}\in\mathcal{P}\left(  E\right)  }\ldots\sum_{Y_{n}%
\in\mathcal{P}\left(  E\right)  }m_{X_{1}}\left(  Y_{1}\right)  \ldots
m_{\widetilde{\lnot}X_{n}}\left(  Y_{n}\right)  Q\left(  Y_{1},\ldots
,Y_{n}\right) \\
\ldots &  =\mathcal{F}^{A}\left(  Q\right)  \left(  X_{1},\ldots
,\widetilde{\lnot}X_{n}\right)
\end{align*}
\smallskip
\end{proof}

Using the properties of internal and external negation duality is trivial.%
\begin{align*}
\mathcal{F}^{A}\left(  Q\square\right)   &  =\mathcal{F}^{A}\left(
\widetilde{\lnot}Q\lnot\right)  =\widetilde{\lnot}\mathcal{F}^{A}\left(
Q\right)  \widetilde{\lnot}\\
&  =\mathcal{F}^{A}\left(  Q\right)  \widetilde{\square}%
\end{align*}

\paragraph{Axiom Z-4}

It holds that%
\begin{equation}
\mathcal{F}\left(  Q\cup\right)  =\mathcal{F}\left(  Q\right)  \widetilde
{\cup}\quad n>0 \label{EqAxiomaZ4ModeloFA_1}%
\end{equation}

In the proof of \ref{EqAxiomaZ4ModeloFA_1} we will use the following results:

\begin{lemma}
\label{LemaEncuentrosInternos} Let $X_{1},X_{2}\in\widetilde{\mathcal{P}%
}\left(  E\right)  $ be given, $R\in\mathcal{P}\left(  E\right)  $ a crisp
set. Then
\begin{equation}
\sum_{Y_{1}\in\mathcal{P}\left(  E\right)  }\sum_{Y_{2}\in\mathcal{P}\left(
E\right)  /Y_{1}\cap Y_{2}=R}m_{X_{1}}\left(  Y_{1}\right)  m_{X_{2}}\left(
Y_{2}\right)  =m_{X_{1}\widetilde{\cap}X_{2}}\left(  R\right)
\label{EqLemaEncuentrosInternos_1}%
\end{equation}
where $X_{1}\widetilde{\cap}X_{2}$ is defined by means of the induced tnorm.
\end{lemma}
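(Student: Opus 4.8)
The plan is to exploit the product structure of $m_{X_1}(Y_1)\,m_{X_2}(Y_2)$ and of the constraint $Y_1\cap Y_2=R$, both of which decompose element by element over $E$. Writing $a_e=\chi_{Y_1}(e)$ and $b_e=\chi_{Y_2}(e)$, a choice of the pair $(Y_1,Y_2)$ is nothing but an independent choice of $(a_e,b_e)\in\{0,1\}^2$ for each $e\in E$. The summand factors as $m_{X_1}(Y_1)\,m_{X_2}(Y_2)=\prod_{e\in E} f_e(a_e,b_e)$, where $f_e(a,b)=\mu_{X_1}(e)^{a}(1-\mu_{X_1}(e))^{1-a}\,\mu_{X_2}(e)^{b}(1-\mu_{X_2}(e))^{1-b}$, and the condition $Y_1\cap Y_2=R$ is equivalent to the per-element condition $a_e\,b_e=\chi_R(e)$, i.e.\ $(a_e,b_e)=(1,1)$ when $e\in R$ and $(a_e,b_e)\neq(1,1)$ when $e\notin R$.

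First I would use this independence to distribute the double sum through the product, turning the sum over all admissible $(Y_1,Y_2)$ into a product over $e\in E$ of per-element sums:
\[
\sum_{Y_1\in\mathcal{P}(E)}\ \sum_{\substack{Y_2\in\mathcal{P}(E)\\ Y_1\cap Y_2=R}} m_{X_1}(Y_1)\,m_{X_2}(Y_2)=\prod_{e\in E}\Bigg(\sum_{\text{admissible }(a,b)\text{ at }e} f_e(a,b)\Bigg).
\]
The factorization of a sum of products over a product index set into a product of sums is the one genuinely delicate point; I would either justify it directly by the distributive law (the index set of admissible $(Y_1,Y_2)$ is a Cartesian product over $e$ once the constraint is read coordinatewise), or, more formally, prove the identity by induction on $|E|$, peeling off one element $e_0$ and invoking Lemma~\ref{LemaProyeccion} to split each $m_{X_i}$ into its restrictions to $\{e_0\}$ and to $E\setminus\{e_0\}$.

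Next I would evaluate the two kinds of per-element factors. For $e\in R$ the only admissible pair is $(1,1)$, so the factor is $f_e(1,1)=\mu_{X_1}(e)\,\mu_{X_2}(e)$. For $e\notin R$ the admissible pairs are all of $\{0,1\}^2$ except $(1,1)$; since $\sum_{(a,b)\in\{0,1\}^2} f_e(a,b)=\big(\mu_{X_1}(e)+(1-\mu_{X_1}(e))\big)\big(\mu_{X_2}(e)+(1-\mu_{X_2}(e))\big)=1$, the factor equals $1-f_e(1,1)=1-\mu_{X_1}(e)\,\mu_{X_2}(e)$. By Lemma~\ref{LemaFAOperInduc} the induced t-norm is the product, so $\mu_{X_1\widetilde{\cap}X_2}(e)=\mu_{X_1}(e)\,\mu_{X_2}(e)$; hence the $e\in R$ factors are $\mu_{X_1\widetilde{\cap}X_2}(e)$ and the $e\notin R$ factors are $1-\mu_{X_1\widetilde{\cap}X_2}(e)$.

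Assembling the product over $e\in E$ then gives exactly
\[
\prod_{e\in R}\mu_{X_1\widetilde{\cap}X_2}(e)\ \prod_{e\in E\setminus R}\big(1-\mu_{X_1\widetilde{\cap}X_2}(e)\big)=m_{X_1\widetilde{\cap}X_2}(R),
\]
which is the claimed identity. The only real obstacle is the sum/product interchange in the first display; everything after it is the routine per-element bookkeeping sketched above, and the probabilistic reading --- that $\Pr(Y_1\cap Y_2=R)$ factors because the coordinates are independent Bernoulli draws, needing both draws ``on'' exactly on the elements of $R$ --- is what makes the result transparent.
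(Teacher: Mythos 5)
Your proof is correct, and it rests on the same structural fact as the paper's --- that both the weight $m_{X_{1}}\left( Y_{1}\right) m_{X_{2}}\left( Y_{2}\right)$ and the constraint $Y_{1}\cap Y_{2}=R$ decompose element by element --- but you package it differently. The paper first invokes Lemma \ref{LemaProyeccion} to peel off the factor $\prod_{e\in R}\mu_{X_{1}}\left( e\right) \mu_{X_{2}}\left( e\right)$, reindexes the remaining sum as one over pairs $C_{1}\subseteq C_{2}\subseteq E^{\prime}=E\backslash R$, and then proves the residual identity $\sum_{C_{1}}\sum_{C_{1}\subseteq C_{2}\subseteq E^{\prime}}m_{X_{1}}^{E^{\prime}}\left( C_{1}\right) m_{X_{2}}^{E^{\prime}}\left( C_{2}\backslash C_{1}\right) =\prod_{e\in E^{\prime}}\left( 1-\mu_{X_{1}}\left( e\right) \mu_{X_{2}}\left( e\right) \right)$ by induction on $\left\vert E^{\prime}\right\vert$; the three-way case split in its inductive step (the new element lies in $C_{1}$, in $C_{2}\backslash C_{1}$, or in neither) yields exactly your per-element bookkeeping for $e\notin R$, namely $\left( 1-\mu_{X_{1}}\right) \left( 1-\mu_{X_{2}}\right) +\left( 1-\mu_{X_{1}}\right) \mu_{X_{2}}+\mu_{X_{1}}\left( 1-\mu_{X_{2}}\right) =1-\mu_{X_{1}}\mu_{X_{2}}$. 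Your primary route --- noting that the admissible pairs $\left( Y_{1},Y_{2}\right)$ form a Cartesian product $\prod_{e\in E}S_{e}$ with $S_{e}=\left\{ \left( 1,1\right) \right\}$ for $e\in R$ and $S_{e}=\left\{ 0,1\right\} ^{2}\backslash\left\{ \left( 1,1\right) \right\}$ for $e\notin R$, then applying the generalized distributive law --- obtains the whole identity in one stroke and makes the probabilistic reading (independent Bernoulli coordinates, both required ``on'' exactly on $R$) transparent. What it buys in brevity it pays for in the sum/product interchange, which you correctly single out as the one delicate step; its careful justification is essentially the induction you sketch as a fallback, which is the paper's proof. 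In short, the two arguments are two presentations of the same computation: yours is shorter and more conceptual, the paper's is the fully elementary unwinding of your distributive-law step.
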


\begin{proof}
Let $E^{\prime}=E\backslash R$. We will use \ref{LemaProyeccion}.

It fulfills the following:
\begin{equation}
m_{X_{1}\widetilde{\cap}X_{2}}\left(  R\right)  =%
{\displaystyle\prod\limits_{e\in R}}
\mu_{X_{1}}\left(  e\right)  \mu_{X_{2}}\left(  e\right)
{\displaystyle\prod\limits_{e\in E\backslash R}}
\left(  1-\mu_{X_{1}}\left(  e\right)  \mu_{X_{2}}\left(  e\right)  \right)
\label{EqLemaEncuentrosInternos_2}%
\end{equation}

As the sum \ref{EqLemaEncuentrosInternos_1} is restricted to the $Y_{1}%
,Y_{2}\in\mathcal{P}\left(  E\right)  $ such that
\[
Y_{1}\cap Y_{2}=R
\]
then,%
\begin{align}
&  \sum_{Y_{1}\in\mathcal{P}\left(  E\right)  }\sum_{Y_{2}\in\mathcal{P}%
\left(  E\right)  /Y_{1}\cap Y_{2}=R}m_{X_{1}}\left(  Y_{1}\right)  m_{X_{2}%
}\left(  Y_{2}\right) \label{EqLemaEncuentrosInternos_3}\\
&  =\sum_{R\subseteq Y_{1}\subseteq E}\sum_{R\subseteq Y_{2}\subseteq
E/Y_{1}\cap Y_{2}=R}m_{X_{1}}\left(  Y_{1}\right)  m_{X_{2}}\left(
Y_{2}\right) \nonumber\\
&  =\sum_{\varnothing\subseteq C_{1}\subseteq E^{\prime}}\sum_{\varnothing
\subseteq C_{2}\subseteq E^{\prime}/C_{1}\cap C_{2}=\varnothing}m_{X_{1}%
}\left(  C_{1}\cup R\right)  m_{X_{2}}\left(  C_{2}\cup R\right) \nonumber
\end{align}
and using lemma \ref{LemaProyeccion}%
\begin{align}
&  =\sum_{\varnothing\subseteq C_{1}\subseteq E^{\prime}}\sum_{\varnothing
\subseteq C_{2}\subseteq E^{\prime}/C_{1}\cap C_{2}=\varnothing}m_{X_{1}%
}^{E^{\prime}}\left(  C_{1}\right)
{\displaystyle\prod\limits_{e\in R}}
\mu_{X_{1}}\left(  e\right)  m_{X_{2}}^{E^{\prime}}\left(  C_{2}\right)
{\displaystyle\prod\limits_{e\in R}}
\mu_{X_{2}}\left(  e\right) \nonumber\\
&  =%
{\displaystyle\prod\limits_{e\in R}}
\mu_{X_{1}}\left(  e\right)  \mu_{X_{2}}\left(  e\right)  \sum_{\varnothing
\subseteq C_{1}\subseteq E^{\prime}}\sum_{\varnothing\subseteq C_{2}\subseteq
E^{\prime}/C_{1}\cap C_{2}=\varnothing}m_{X_{1}}^{E^{\prime}}\left(
C_{1}\right)  m_{X_{2}}^{E^{\prime}}\left(  C_{2}\right) \nonumber\\
&  =%
{\displaystyle\prod\limits_{e\in R}}
\mu_{X_{1}}\left(  e\right)  \mu_{X_{2}}\left(  e\right)  \sum_{\varnothing
\subseteq C_{1}\subseteq E^{\prime}}\sum_{C_{1}\subseteq C_{2}\subseteq
E^{\prime}}m_{X_{1}}^{E^{\prime}}\left(  C_{1}\right)  m_{X_{2}}^{E^{\prime}%
}\left(  C_{2}\backslash C_{1}\right) \nonumber
\end{align}

And the equality \ref{EqLemaEncuentrosInternos_2} will be fulfilled if:%
\begin{equation}
\sum_{\varnothing\subseteq C_{1}\subseteq E^{\prime}}\sum_{C_{1}\subseteq
C_{2}\subseteq E^{\prime}}m_{X_{1}}^{E^{\prime}}\left(  C_{1}\right)
m_{X_{2}}^{E^{\prime}}\left(  C_{2}\backslash C_{1}\right)  =%
{\displaystyle\prod\limits_{e\in E^{\prime}}}
\left(  1-\mu_{X_{1}}\left(  e\right)  \mu_{X_{2}}\left(  e\right)  \right)
\label{EqLemaEncuentrosInternos_4}%
\end{equation}

The proof is by induction in the cardinality of $E^{\prime}$. We will denote
by $E^{i}=\left\{  e_{1},\ldots,e_{i}\right\}  $ a referential set with $i\,$elements.

\textbf{Case} $i=1$ ($\left\vert E^{\prime}\right\vert =1$ ($E^{\prime}%
=E^{1}=\left\{  e_{1}\right\}  $):%
\begin{align*}
&  \sum_{\varnothing\subseteq C_{1}\subseteq E^{\prime}}\sum_{C_{1}\subseteq
C_{2}\subseteq E^{\prime}}m_{X_{1}}\left(  C_{1}\right)  m_{X_{2}}\left(
C_{2}\backslash C_{1}\right) \\
&  =\sum_{\varnothing\subseteq C_{1}\subseteq\left\{  e_{1}\right\}  }%
\sum_{C_{1}\subseteq C_{2}\subseteq\left\{  e_{1}\right\}  }m_{X_{1}}\left(
C_{1}\right)  m_{X_{2}}\left(  C_{2}\backslash C_{1}\right) \\
&  =m_{X_{1}}\left(  \varnothing\right)  \left(  m_{X_{2}}\left(
\varnothing\right)  +m_{X_{2}}\left(  \left\{  e_{1}\right\}  \right)
\right)  +m_{X_{1}}\left(  \left\{  e_{1}\right\}  \right)  m_{X_{2}}\left(
\varnothing\right) \\
&  =\left(  1-\mu_{X_{1}}\left(  e_{1}\right)  \right)  \left(  1-\mu_{X_{2}%
}\left(  e_{1}\right)  +\mu_{X_{2}}\left(  e_{1}\right)  \right)  +\mu_{X_{1}%
}\left(  e_{1}\right)  \left(  1-\mu_{X_{2}}\left(  e_{1}\right)  \right) \\
&  =1-\mu_{X_{1}}\left(  e_{1}\right)  +\mu_{X_{1}}\left(  e_{1}\right)
\left(  1-\mu_{X_{2}}\left(  e_{1}\right)  \right) \\
&  =1-\mu_{X_{1}}\left(  e_{1}\right)  +\mu_{X_{1}}\left(  e_{1}\right)
-\mu_{X_{1}}\left(  e_{1}\right)  \mu_{X_{2}}\left(  e_{1}\right) \\
&  =1-\mu_{X_{1}}\left(  e_{1}\right)  \mu_{X_{2}}\left(  e_{1}\right)
\end{align*}

\textbf{Induction hypothesis: Case} $i=m$ ($\left\vert E^{\prime}\right\vert
=m$, ($E^{\prime}=E^{m}=\left\{  e_{1},\ldots,e_{m}\right\}  $). We suppose
that%
\[
\sum_{\varnothing\subseteq C_{1}\subseteq E^{\prime}}\sum_{C_{1}\subseteq
C_{2}\subseteq E^{\prime}}m_{X_{1}}\left(  C_{1}\right)  m_{X_{2}}\left(
C_{2}\backslash C_{1}\right)  =%
{\displaystyle\prod\limits_{i=1}^{m}}
\left(  1-\mu_{X_{1}}\left(  e_{i}\right)  \mu_{X_{2}}\left(  e_{i}\right)
\right)
\]

\textbf{Case} $i=m+1$: $\left\vert E^{\prime}\right\vert =m+1$, ($E^{\prime
}=E^{m+1}=\left\{  e_{1},\ldots,e_{m+1}\right\}  $)

It should be noted that if $e_{m+1}\in C$ then
\begin{align*}
m_{X_{1}}\left(  C\right)   &  =%
{\displaystyle\prod\limits_{e\in C}}
\mu_{X_{1}}\left(  e\right)
{\displaystyle\prod\limits_{e\notin C}}
\left(  1-\mu_{X_{1}}\left(  e\right)  \right) \\
&  =\mu_{X_{1}}\left(  e_{m+1}\right)
{\displaystyle\prod\limits_{e\in C\backslash\left\{  e_{m+1}\right\}  }}
\mu_{X_{1}}\left(  e\right)
{\displaystyle\prod\limits_{e\notin C}}
\left(  1-\mu_{X_{1}}\left(  e\right)  \right) \\
&  =\mu_{X_{1}}\left(  e_{m+1}\right)  m_{X_{1}}^{E^{m}}\left(  C\backslash
\left\{  e_{m+1}\right\}  \right)
\end{align*}

Whilst if $e_{m+1}\notin C$ then we have
\begin{align*}
m_{X_{1}}\left(  C\right)   &  =%
{\displaystyle\prod\limits_{e\in C}}
\mu_{X_{1}}\left(  e\right)
{\displaystyle\prod\limits_{e\notin C}}
\left(  1-\mu_{X_{1}}\left(  e\right)  \right) \\
&  =\left(  1-\mu_{X_{1}}\left(  e_{m+1}\right)  \right)
{\displaystyle\prod\limits_{e\in C}}
\mu_{X_{1}}\left(  e\right)
{\displaystyle\prod\limits_{e\notin C\cup\left\{  e_{m+1}\right\}  }}
\left(  1-\mu_{X_{1}}\left(  e\right)  \right) \\
&  =\left(  1-\mu_{X_{1}}\left(  e_{m+1}\right)  \right)  m_{X_{1}}^{E^{m}%
}\left(  C\right)
\end{align*}

By computation%
\begin{align}
&  \sum_{\varnothing\subseteq C_{1}\subseteq E^{m+1}}\sum_{C_{1}\subseteq
C_{2}\subseteq E^{m+1}}m_{X_{1}}\left(  C_{1}\right)  m_{X_{2}}\left(
C_{2}\backslash C_{1}\right) \label{EqLemaEncuentrosInternos_5}\\
&  =\sum_{\varnothing\subseteq M_{1}\subseteq E^{m}}\sum_{M_{1}\subseteq
M_{2}\subseteq E^{m}}m_{X_{1}}\left(  M_{1}\right)  m_{X_{2}}\left(
M_{2}\backslash M_{1}\right)  +\nonumber\\
&  \sum_{\varnothing\subseteq M_{1}\subseteq E^{m}}\sum_{M_{1}\subseteq
M_{2}\subseteq E^{m}}m_{X_{1}}\left(  M_{1}\right)  m_{X_{2}}\left(  M_{2}%
\cup\left\{  e_{m+1}\right\}  \backslash M_{1}\right)  +\nonumber\\
&  \sum_{\varnothing\subseteq M_{1}\subseteq E^{m}}\sum_{M_{1}\subseteq
M_{2}\subseteq E^{m}}m_{X_{1}}\left(  M_{1}\cup\left\{  e_{m+1}\right\}
\right)  m_{X_{2}}\left(  M_{2}\cup\left\{  e_{m+1}\right\}  \backslash
M_{1}\cup\left\{  e_{m+1}\right\}  \right) \nonumber
\end{align}

As $C_{1}\subseteq C_{2}\subseteq E^{m+1}$ the situation in which $e_{m+1}\in
C_{1}$ and $e_{m+1}\notin C_{2}$ is not possible.

Let we evaluate the three sums in \ref{EqLemaEncuentrosInternos_5}. In the
computation we use the induction hypothesis.

First sum:%
\begin{align}
&  \sum_{\varnothing\subseteq M_{1}\subseteq E^{m}}\sum_{M_{1}\subseteq
M_{2}\subseteq E^{m}}m_{X_{1}}\left(  M_{1}\right)  m_{X_{2}}\left(
M_{2}\backslash M_{1}\right) \label{EqLemaEncuentrosInternos_6}\\
&  =\sum_{\varnothing\subseteq M_{1}\subseteq E^{m}}\sum_{M_{1}\subseteq
M_{2}\subseteq E^{m}}\left(  1-\mu_{X_{1}}\left(  e_{m+1}\right)  \right)
m_{X_{1}}^{E^{m}}\left(  M_{1}\right)  \left(  1-\mu_{X_{2}}\left(
e_{m+1}\right)  \right)  m_{X_{2}}^{E^{m}}\left(  M_{2}\backslash M_{1}\right)
\nonumber\\
&  =\left(  1-\mu_{X_{1}}\left(  e_{m+1}\right)  \right)  \left(  1-\mu
_{X_{2}}\left(  e_{m+1}\right)  \right)  \sum_{\varnothing\subseteq
C_{1}\subseteq E^{m}}\sum_{C_{1}\subseteq C_{2}\subseteq E^{m}}m_{X_{1}%
}^{E^{m}}\left(  C_{1}\right)  m_{X_{2}}^{E^{m}}\left(  C_{2}\backslash
C_{1}\right) \nonumber\\
&  =\left(  1-\mu_{X_{1}}\left(  e_{m+1}\right)  \right)  \left(  1-\mu
_{X_{2}}\left(  e_{m+1}\right)  \right)
{\displaystyle\prod\limits_{i=1}^{m}}
\left(  1-\mu_{X_{1}}\left(  e_{i}\right)  \mu_{X_{2}}\left(  e_{i}\right)
\right) \nonumber
\end{align}

Second sum:%
\begin{align}
&  \sum_{\varnothing\subseteq M_{1}\subseteq E^{m}}\sum_{M_{1}\subseteq
M_{2}\subseteq E^{m}}m_{X_{1}}\left(  M_{1}\right)  m_{X_{2}}\left(  M_{2}%
\cup\left\{  e_{m+1}\right\}  \backslash M_{1}\right)
\label{EqLemaEncuentrosInternos_7}\\
&  =\sum_{\varnothing\subseteq M_{1}\subseteq E^{m}}\sum_{M_{1}\subseteq
M_{2}\subseteq E^{m}}\left(  1-\mu_{X_{1}}\left(  e_{m+1}\right)  \right)
m_{X_{1}}^{E^{m}}\left(  M_{1}\right)  \mu_{X_{2}}\left(  e_{m+1}\right)
m_{X_{2}}^{E^{m}}\left(  M_{2}\backslash M_{1}\right) \nonumber\\
&  =\left(  1-\mu_{X_{1}}\left(  e_{m+1}\right)  \right)  \mu_{X_{2}}\left(
e_{m+1}\right)  \sum_{\varnothing\subseteq C_{1}\subseteq E^{m}}\sum
_{C_{1}\subseteq C_{2}\subseteq E^{m}}m_{X_{1}}^{E^{m}}\left(  C_{1}\right)
m_{X_{2}}^{E^{m}}\left(  C_{2}\backslash C_{1}\right) \nonumber\\
&  =\left(  1-\mu_{X_{1}}\left(  e_{m+1}\right)  \right)  \mu_{X_{2}}\left(
e_{m+1}\right)
{\displaystyle\prod\limits_{i=1}^{m}}
\left(  1-\mu_{X_{1}}\left(  e_{i}\right)  \mu_{X_{2}}\left(  e_{i}\right)
\right) \nonumber
\end{align}

Third sum:
\begin{align}
&  \sum_{\varnothing\subseteq M_{1}\subseteq E^{m}}\sum_{M_{1}\subseteq
M_{2}\subseteq E^{m}}m_{X_{1}}\left(  M_{1}\cup\left\{  e_{m+1}\right\}
\right)  m_{X_{2}}\left(  M_{2}\cup\left\{  e_{m+1}\right\}  \backslash
M_{1}\cup\left\{  e_{m+1}\right\}  \right)  \label{EqLemaEncuentrosInternos_8}%
\\
&  =\sum_{\varnothing\subseteq M_{1}\subseteq E^{m}}\sum_{M_{1}\subseteq
M_{2}\subseteq E^{m}}\mu_{X_{1}}\left(  e_{m+1}\right)  m_{X_{1}}^{E^{m}%
}\left(  M_{1}\right)  \left(  1-\mu_{X_{2}}\left(  e_{m+1}\right)  \right)
m_{X_{2}}^{E^{m}}\left(  M_{2}\backslash M_{1}\right) \nonumber\\
&  =\mu_{X_{1}}\left(  e_{m+1}\right)  \left(  1-\mu_{X_{2}}\left(
e_{m+1}\right)  \right)
{\displaystyle\prod\limits_{i=1}^{m}}
\left(  1-\mu_{X_{1}}\left(  e_{i}\right)  \mu_{X_{2}}\left(  e_{i}\right)
\right) \nonumber
\end{align}

And using expressions \ref{EqLemaEncuentrosInternos_6},
\ref{EqLemaEncuentrosInternos_7} and \ref{EqLemaEncuentrosInternos_8}:
\begin{align*}
&  \sum_{\varnothing\subseteq C_{1}\subseteq E^{m+1}}\sum_{C_{1}\subseteq
C_{2}\subseteq E^{m+1}}m_{X_{1}}\left(  C_{1}\right)  m_{X_{2}}\left(
C_{2}\backslash C_{1}\right) \\
&  =\ldots\\
&  =(\left(  1-\mu_{X_{1}}\left(  e_{m+1}\right)  \right)  \left(
1-\mu_{X_{2}}\left(  e_{m+1}\right)  \right)  +(1-\mu_{X_{1}}\left(
e_{m+1}\right)  \mu_{X_{2}}\left(  e_{m+1}\right)  +\\
&  \mu_{X_{1}}\left(  e_{m+1}\right)  \left(  1-\mu_{X_{2}}\left(
e_{m+1}\right)  \right)  )\times%
{\displaystyle\prod\limits_{i=1}^{m}}
\left(  1-\mu_{X_{1}}\left(  e_{i}\right)  \mu_{X_{2}}\left(  e_{i}\right)
\right) \\
&  =((1-\mu_{X_{1}}\left(  e_{m+1}\right)  \left(  1-\mu_{X_{2}}\left(
e_{m+1}\right)  +\mu_{X_{2}}\left(  e_{m+1}\right)  \right) \\
&  \mu_{X_{1}}\left(  e_{m+1}\right)  \left(  1-\mu_{X_{2}}\left(
e_{m+1}\right)  \right)  )\times%
{\displaystyle\prod\limits_{i=1}^{m}}
\left(  1-\mu_{X_{1}}\left(  e_{i}\right)  \mu_{X_{2}}\left(  e_{i}\right)
\right) \\
&  =((1-\mu_{X_{1}}\left(  e_{m+1}\right)  +\mu_{X_{1}}\left(  e_{m+1}\right)
-\mu_{X_{1}}\left(  e_{m+1}\right)  \mu_{X_{2}}\left(  e_{m+1}\right)  )\\
&  \times%
{\displaystyle\prod\limits_{i=1}^{m}}
\left(  1-\mu_{X_{1}}\left(  e_{i}\right)  \mu_{X_{2}}\left(  e_{i}\right)
\right) \\
&  =\left(  1-\mu_{X_{1}}\left(  e_{m+1}\right)  \mu_{X_{2}}\left(
e_{m+1}\right)  \right)
{\displaystyle\prod\limits_{i=1}^{m}}
\left(  1-\mu_{X_{1}}\left(  e_{i}\right)  \mu_{X_{2}}\left(  e_{i}\right)
\right) \\
&  =%
{\displaystyle\prod\limits_{i=1}^{m+1}}
\left(  1-\mu_{X_{1}}\left(  e_{i}\right)  \mu_{X_{2}}\left(  e_{i}\right)
\right)
\end{align*}

In this way we have proved \ref{EqLemaEncuentrosInternos_4}, and the lemma is satisfied.
\end{proof}

\begin{lemma}
\label{LemaEncuentrosInternos2}Let $Q:\mathcal{P}\left(  E\right)
\rightarrow\mathbf{I}$ be a unary semi-fuzzy quantifier. Then it is fulfilled:%
\[
\sum_{Y_{1}\in\mathcal{P}\left(  E\right)  }\sum_{Y_{2}\in\mathcal{P}\left(
E\right)  }m_{X_{1}}\left(  Y_{1}\right)  m_{X_{2}}\left(  Y_{2}\right)
Q\left(  Y_{1}\cap Y_{2}\right)  =\sum_{Y\in\mathcal{P}\left(  E\right)
}m_{X_{1}\widetilde{\cap}X_{2}}\left(  Y\right)  Q\left(  Y\right)
\]

\end{lemma}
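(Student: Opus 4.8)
The plan is to reduce this identity directly to Lemma \ref{LemaEncuentrosInternos} by reorganizing the double sum on the left according to the value of the intersection $Y_{1}\cap Y_{2}$. The crucial observation is that the summand $m_{X_{1}}(Y_{1})m_{X_{2}}(Y_{2})Q(Y_{1}\cap Y_{2})$ factors into a part depending on the whole pair $(Y_{1},Y_{2})$ and a part $Q(Y_{1}\cap Y_{2})$ that depends on the pair only through $R=Y_{1}\cap Y_{2}$. Since the family of ``fibers''
\[
\{(Y_{1},Y_{2})\in\mathcal{P}(E)^{2}:Y_{1}\cap Y_{2}=R\},\qquad R\in\mathcal{P}(E),
\]
partitions $\mathcal{P}(E)\times\mathcal{P}(E)$ (every pair has exactly one intersection), I can split the unrestricted double sum as an outer sum over $R\in\mathcal{P}(E)$ of the inner restricted sums.

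Concretely, I would first rewrite the left-hand side as
\[
\sum_{R\in\mathcal{P}(E)}\ \sum_{\substack{Y_{1},Y_{2}\in\mathcal{P}(E)\\Y_{1}\cap Y_{2}=R}}m_{X_{1}}(Y_{1})m_{X_{2}}(Y_{2})Q(R).
\]
Because $Q(R)$ is constant on each fiber, it pulls out of the inner sum, leaving $\sum_{R}Q(R)\bigl(\sum_{Y_{1}\cap Y_{2}=R}m_{X_{1}}(Y_{1})m_{X_{2}}(Y_{2})\bigr)$. At this point I would invoke Lemma \ref{LemaEncuentrosInternos}, which states exactly that the bracketed inner sum equals $m_{X_{1}\widetilde{\cap}X_{2}}(R)$, where $X_{1}\widetilde{\cap}X_{2}$ is taken with respect to the induced (product) t-norm. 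Substituting gives $\sum_{R\in\mathcal{P}(E)}m_{X_{1}\widetilde{\cap}X_{2}}(R)Q(R)$, and renaming the dummy variable $R$ as $Y$ yields the right-hand side.

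There is essentially no hard obstacle here, as the real combinatorial work has already been absorbed into Lemma \ref{LemaEncuentrosInternos}; the present lemma is the clean ``weighting by $Q$'' corollary of that mass-transport identity. The only point requiring a line of care is the legitimacy of the regrouping: that the fibers are disjoint and exhaust $\mathcal{P}(E)^{2}$, so that interchanging the order of summation introduces no omitted or duplicated terms. Since all sums are finite (as $E$ is finite) this reindexing needs no convergence justification, and the identity follows immediately.
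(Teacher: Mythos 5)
Your proposal is correct and follows essentially the same route as the paper's own proof: both regroup the double sum over the fibers $\{(Y_{1},Y_{2}):Y_{1}\cap Y_{2}=R\}$, pull the constant factor $Q\left(R\right)$ out of each inner sum, invoke Lemma \ref{LemaEncuentrosInternos} to identify the inner sum with $m_{X_{1}\widetilde{\cap}X_{2}}\left(R\right)$, and rename the dummy variable. Nothing further is needed.
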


\begin{proof}
Using lemma \ref{LemaEncuentrosInternos}:%
\begin{align*}
&  \sum_{Y_{1}\in\mathcal{P}\left(  E\right)  }\sum_{Y_{2}\in\mathcal{P}%
\left(  E\right)  }m_{X_{1}}\left(  Y_{1}\right)  m_{X_{2}}\left(
Y_{2}\right)  Q\left(  Y_{1}\cap Y_{2}\right) \\
&  =\sum_{R\in\mathcal{P}\left(  E\right)  }\sum_{Y_{1}\in\mathcal{P}\left(
E\right)  }\sum_{Y_{2}\in\mathcal{P}\left(  E\right)  /Y_{1}\cap Y_{2}%
=R}m_{X_{1}}\left(  Y_{1}\right)  m_{X_{2}}\left(  Y_{2}\right)  Q\left(
R\right) \\
&  =\sum_{R\in\mathcal{P}\left(  E\right)  }Q\left(  R\right)  \sum_{Y_{1}%
\in\mathcal{P}\left(  E\right)  }\sum_{Y_{2}\in\mathcal{P}\left(  E\right)
/Y_{1}\cap Y_{2}=R}m_{X_{1}}\left(  Y_{1}\right)  m_{X_{2}}\left(
Y_{2}\right) \\
&  =\sum_{R\in\mathcal{P}\left(  E\right)  }m_{X_{1}\widetilde{\cap}X_{2}%
}\left(  R\right)  Q\left(  R\right) \\
&  =\sum_{Y\in\mathcal{P}\left(  E\right)  }m_{X_{1}\widetilde{\cap}X_{2}%
}\left(  Y\right)  Q\left(  Y\right)
\end{align*}

\end{proof}

And finally we prove the fulfillment of expression \ref{EqAxiomaZ4ModeloFA_1}:

\begin{proof}%
\begin{align*}
&  \mathcal{F}^{A}\left(  Q\cap\right)  \left(  X_{1},\ldots,X_{n}%
,X_{n+1}\right) \\
&  =\sum_{Y_{1}\in\mathcal{P}\left(  E\right)  }\ldots\sum_{Y_{n+1}%
\in\mathcal{P}\left(  E\right)  }m_{X_{1}}\left(  Y_{1}\right)  \ldots
m_{X_{n+1}}\left(  Y_{n+1}\right)  \left(  Q\cap\right)  \left(  Y_{1}%
,\ldots,Y_{n+1}\right) \\
&  =\sum_{Y_{1}\in\mathcal{P}\left(  E\right)  }\ldots\sum_{Y_{n+1}%
\in\mathcal{P}\left(  E\right)  }m_{X_{1}}\left(  Y_{1}\right)  \ldots
m_{X_{n+1}}\left(  Y_{n+1}\right)  Q\left(  Y_{1},\ldots,Y_{n}\cap
Y_{n+1}\right) \\
&  =\sum_{Y_{1}\in\mathcal{P}\left(  E\right)  }\ldots\sum_{Y_{n-1}%
\in\mathcal{P}\left(  E\right)  }m_{X_{1}}\left(  Y_{1}\right)  \ldots
m_{X_{n-1}}\left(  Y_{n-1}\right) \\
&  \sum_{Y_{n}\in\mathcal{P}\left(  E\right)  }\sum_{Y_{n+1}\in\mathcal{P}%
\left(  E\right)  }m_{X_{n}}\left(  Y_{n}\right)  m_{X_{n+1}}\left(
Y_{n+1}\right)  Q\left(  Y_{1},\ldots,Y_{n-1},Y_{n}\cap Y_{n+1}\right) \\
&  =\sum_{Y_{1}\in\mathcal{P}\left(  E\right)  }\ldots\sum_{Y_{n-1}%
\in\mathcal{P}\left(  E\right)  }m_{X_{1}}\left(  Y_{1}\right)  \ldots
m_{X_{n-1}}\left(  Y_{n-1}\right) \\
&  \sum_{R\in\mathcal{P}\left(  E\right)  }\sum_{Y_{n}\in\mathcal{P}\left(
E\right)  }\sum_{Y_{n+1}\in\mathcal{P}\left(  E\right)  /Y_{n}\cap Y_{n+1}%
=R}m_{X_{n}}\left(  Y_{n}\right)  m_{X_{n+1}}\left(  Y_{n+1}\right)  Q\left(
Y_{1},\ldots,Y_{n-1,},R\right) \\
&  =\sum_{Y_{1}\in\mathcal{P}\left(  E\right)  }\ldots\sum_{Y_{n-1}%
\in\mathcal{P}\left(  E\right)  }m_{X_{1}}\left(  Y_{1}\right)  \ldots
m_{X_{n-1}}\left(  Y_{n-1}\right) \\
&  \sum_{R\in\mathcal{P}\left(  E\right)  }m_{X_{n}\widetilde{\cap}X_{n+1}%
}\left(  R\right)  Q\left(  Y_{1},\ldots,Y_{n-1},R\right)
\end{align*}
Where we have using \ref{LemaEncuentrosInternos2}. And then,%
\begin{align*}
&  \mathcal{F}^{A}\left(  Q\cap\right)  \left(  X_{1},\ldots,X_{n}%
,X_{n+1}\right) \\
&  =\ldots\\
&  =\sum_{Y_{1}\in\mathcal{P}\left(  E\right)  }\ldots\sum_{Y_{n-1}%
\in\mathcal{P}\left(  E\right)  }m_{X_{1}}\left(  Y_{1}\right)  \ldots
m_{X_{n-1}}\left(  Y_{n-1}\right) \\
&  \sum_{R\in\mathcal{P}\left(  E\right)  }m_{X_{n}\widetilde{\cap}X_{n+1}%
}\left(  R\right)  Q\left(  Y_{1},\ldots,Y_{n-1},R\right) \\
&  =\mathcal{F}^{A}\left(  Q\right)  \left(  X_{1},\ldots,X_{n}\widetilde
{\cap}X_{n+1}\right)
\end{align*}

\end{proof}

Now we prove the fulfillment of the Z-4 axiom:%
\[
\mathcal{F}\left(  Q\cup\right)  =\mathcal{F}\left(  Q\right)  \widetilde
{\cup},\quad n>0
\]
\ 

\begin{proof}
For a semi-fuzzy quantifier $Q:\mathcal{P}\left(  E\right)  ^{n}%
\rightarrow\mathbf{I}$ it is fulfilled ($\tau_{n}$ represents the trasposition
of the $n$ and $n+1$ element \cite[section 4.5]{Glockner06Libro}):%
\begin{align*}
\left(  Q\lnot\cap\lnot\tau_{n}\lnot\right)   &  =\left(  f^{\prime}:\left(
Y_{1}^{\prime},\ldots,Y_{n}^{\prime}\right)  \rightarrow Q\left(
Y_{1}^{\prime},\ldots,\lnot Y_{n}^{\prime}\right)  \right)  \cap\lnot\tau
_{n}\lnot\\
&  =\left(  f^{\prime\prime}:\left(  Y_{1}^{\prime\prime},\ldots,Y_{n}%
^{\prime\prime},Y_{n+1}^{\prime\prime}\right)  \rightarrow f^{\prime}\left(
Y_{1}^{\prime\prime},\ldots,Y_{n}^{\prime\prime}\cap Y_{n+1}^{\prime\prime
}\right)  \right)  \lnot\tau_{n}\lnot\\
&  =\left(  f^{\prime\prime}:\left(  Y_{1}^{\prime\prime},\ldots,Y_{n}%
^{\prime\prime},Y_{n+1}^{\prime\prime}\right)  \rightarrow Q\left(
Y_{1}^{\prime\prime},\ldots,\lnot\left(  Y_{n}^{\prime\prime}\cap
Y_{n+1}^{\prime\prime}\right)  \right)  \right)  \lnot\tau_{n}\lnot\\
&  =\left(  f^{\prime\prime\prime}:\left(  Y_{1}^{\prime\prime\prime}%
,\ldots,Y_{n}^{\prime\prime\prime},Y_{n+1}^{\prime\prime\prime}\right)
\rightarrow f^{\prime\prime}:\left(  Y_{1}^{\prime\prime\prime},\ldots
,Y_{n}^{\prime\prime\prime},\lnot Y_{n+1}^{\prime\prime\prime}\right)
\right)  \tau_{n}\lnot\\
&  =\left(  f^{\prime\prime\prime}:\left(  Y_{1}^{\prime\prime\prime}%
,\ldots,Y_{n}^{\prime\prime\prime},Y_{n+1}^{\prime\prime\prime}\right)
\rightarrow Q:\left(  Y_{1}^{\prime\prime\prime},\ldots,\lnot\left(
Y_{n}^{\prime\prime}\cap\lnot Y_{n+1}^{\prime\prime}\right)  \right)  \right)
\tau_{n}\lnot\\
&  =\left(  f^{\prime\prime\prime\prime}:\left(  Y_{1}^{\prime\prime
\prime\prime},\ldots,Y_{n}^{\prime\prime\prime\prime},Y_{n+1}^{\prime
\prime\prime\prime}\right)  \rightarrow f^{\prime\prime\prime}:\left(
Y_{1}^{\prime\prime\prime\prime},\ldots,Y_{n+1}^{\prime\prime\prime\prime
},Y_{n}^{\prime\prime\prime\prime}\right)  \right)  \lnot\\
&  =\left(  f^{\prime\prime\prime\prime}:\left(  Y_{1}^{\prime\prime
\prime\prime},\ldots,Y_{n}^{\prime\prime\prime\prime},Y_{n+1}^{\prime
\prime\prime\prime}\right)  \rightarrow Q:\left(  Y_{1}^{\prime\prime
\prime\prime},\ldots,\lnot\left(  Y_{n+1}^{\prime\prime\prime\prime}\cap\lnot
Y_{n}^{\prime\prime\prime\prime}\right)  \right)  \right)  \lnot\\
&  =\left(  f^{\prime\prime\prime\prime\prime}:\left(  Y_{1}^{\prime
\prime\prime\prime\prime},\ldots,Y_{n}^{\prime\prime\prime\prime\prime
},Y_{n+1}^{\prime\prime\prime\prime\prime}\right)  \rightarrow f^{\prime
\prime\prime\prime}:\left(  Y_{1}^{\prime\prime\prime\prime},\ldots
,Y_{n}^{\prime\prime\prime\prime},\lnot Y_{n+1}^{\prime\prime\prime\prime
}\right)  \right) \\
&  =\left(  f^{\prime\prime\prime\prime\prime}:\left(  Y_{1}^{\prime
\prime\prime\prime\prime},\ldots,Y_{n}^{\prime\prime\prime\prime\prime
},Y_{n+1}^{\prime\prime\prime\prime\prime}\right)  \rightarrow Q\left(
Y_{1}^{\prime\prime\prime\prime\prime},\ldots,\lnot\left(  \lnot Y_{n}%
^{\prime\prime\prime\prime\prime}\cap\lnot Y_{n+1}^{\prime\prime\prime
\prime\prime}\right)  \right)  \right)
\end{align*}
and then%
\begin{align*}
\left(  Q\lnot\cap\lnot\tau_{n}\lnot\right)  \left(  Y_{1},\ldots
,Y_{n,}Y_{n+1}\right)   &  =Q\left(  Y_{1},\ldots,Y_{n-1},\lnot\left(  \lnot
Y_{n}\cap\lnot Y_{n+1}\right)  \right) \\
&  =Q\left(  Y_{1},\ldots,Y_{n-1},Y_{n}\cup Y_{n+1}\right)
\end{align*}

In this way, we can use the properties of external negation, internal negation
and trasposition of arguments (trivially fulfilled) and the expression
\ref{EqAxiomaZ4ModeloFA_1} to obtain:%
\begin{align*}
\mathcal{F}^{A}\left(  Q\cup\right)  \left(  X_{1},\ldots,X_{n},X_{n+1}%
\right)   &  =\mathcal{F}^{A}\left(  Q\lnot\cap\lnot\tau_{n}\lnot\right)
\left(  X_{1},\ldots,X_{n},X_{n+1}\right) \\
&  =\mathcal{F}^{A}\left(  Q\right)  \widetilde{\lnot}\widetilde{\cap
}\widetilde{\lnot}\tau_{n}\widetilde{\lnot}\left(  X_{1},\ldots,X_{n}%
,X_{n+1}\right) \\
&  =\mathcal{F}^{A}\left(  Q\right)  \left(  X_{1},\ldots,\widetilde{\lnot
}\left(  \widetilde{\lnot}X_{n}\widetilde{\cap}\widetilde{\lnot}%
X_{n+1}\right)  \right) \\
&  =\mathcal{F}^{A}\left(  Q\right)  \left(  X_{1},\ldots,X_{n}\widetilde
{\cup}X_{n+1}\right)
\end{align*}
where in the last step we use that $\widetilde{\cap}$ and $\widetilde{\cup}$
are constructed by means of dual operators.
\end{proof}

\paragraph{Axiom Z-5}

It holds that

\begin{quote}
If $Q$ is nonincreasing in the $n$-th arg, then $\mathcal{F}\left(  Q\right)
$ is nonincreasing in the $n$-th arg, $n>0$.
\end{quote}

\begin{proof}
We will consider first the unary case.

Let $Q:\mathcal{P}\left(  E\right)  \rightarrow\mathbf{I}$ be a nonincreasing
semi-fuzzy quantifier. We will proof that $\mathcal{F}^{A}\left(  Q\right)
\left(  X\right)  $ is nonincreasing using induction on the cardinality of the referential.

\textbf{Case} $i=1$; that is, the referential contains only one element
($E=E^{1}=\left\{  e_{1}\right\}  $).

Let $X,X^{\prime}\in\widetilde{\mathcal{P}}\left(  E\right)  $ be fuzzy sets
fulfilling $X\subseteq X^{\prime}$. Note
\[
\mu_{X}\left(  e_{1}\right)  \leq\mu_{X^{\prime}}\left(  e_{1}\right)
,1-\mu_{X}\left(  e_{1}\right)  \geq1-\mu_{X^{\prime}}\left(  e_{1}\right)
\]

As $Q:\mathcal{P}\left(  E\right)  \rightarrow\mathbf{I}$ is monotonic
nonincreasing for $h\geq0$ it holds that%
\begin{equation}
aQ\left(  \varnothing\right)  +bQ\left(  \left\{  e_{1}\right\}  \right)
\geq\left(  a-h\right)  Q\left(  \varnothing\right)  +\left(  b+h\right)
Q\left(  \left\{  e_{1}\right\}  \right)  \label{EqAxiomaZ5ModeloFA_1}%
\end{equation}
as a consequence of%
\[
\left(  a-h\right)  Q\left(  \varnothing\right)  +\left(  b+h\right)  Q\left(
\left\{  e_{1}\right\}  \right)  =aQ\left(  \varnothing\right)  +bQ\left(
\left\{  e_{1}\right\}  \right)  +h\left(  Q\left(  \left\{  e_{1}\right\}
\right)  -Q\left(  \varnothing\right)  \right)
\]
and $Q\left(  \left\{  e_{1}\right\}  \right)  -Q\left(  \varnothing\right)
\leq0$ because $Q$ is nonincreasing.

Let be $h=\mu_{X^{\prime}}\left(  e_{1}\right)  -\mu_{X}\left(  e_{1}\right)
\geq0$. By \ref{EqAxiomaZ5ModeloFA_1}%
\begin{align*}
&  \mathcal{F}^{A}\left(  Q\right)  \left(  X\right) \\
&  =\left(  1-\mu_{X}\left(  e_{1}\right)  \right)  Q\left(  \varnothing
\right)  +\mu_{X}\left(  e_{1}\right)  Q\left(  \left\{  e_{1}\right\}
\right) \\
&  \geq\left(  1-\mu_{X}\left(  e_{1}\right)  -\left(  \mu_{X^{\prime}}\left(
e_{1}\right)  -\mu_{X}\left(  e_{1}\right)  \right)  \right)  Q\left(
\varnothing\right)  +\left(  \mu_{X}\left(  e_{1}\right)  +\left(
\mu_{X^{\prime}}\left(  e_{1}\right)  -\mu_{X}\left(  e_{1}\right)  \right)
\right)  Q\left(  \left\{  e_{1}\right\}  \right) \\
&  =\left(  1-\mu_{X}\left(  e_{1}\right)  -\mu_{X^{\prime}}\left(
e_{1}\right)  +\mu_{X}\left(  e_{1}\right)  \right)  Q\left(  \varnothing
\right)  +\left(  \mu_{X}\left(  e_{1}\right)  +\mu_{X^{\prime}}\left(
e_{1}\right)  -\mu_{X}\left(  e_{1}\right)  \right)  Q\left(  \left\{
e_{1}\right\}  \right) \\
&  =\left(  1-\mu_{X^{\prime}}\left(  e_{1}\right)  \right)  Q\left(
\varnothing\right)  +\mu_{X^{\prime}}\left(  e_{1}\right)  Q\left(  \left\{
e_{1}\right\}  \right)  =\mathcal{F}^{A}\left(  Q\right)  \left(  X^{\prime
}\right)
\end{align*}
and the property is fulfilled for a one referential set

\textbf{Hypothesis of induction: Case} $i=m$ ($E=E^{m}=\left\{  e_{1}%
,\ldots,e_{m}\right\}  $).\ For $X,X^{\prime}\in\widetilde{\mathcal{P}}\left(
E^{m}\right)  $ such that $X\subseteq X^{\prime}$ it holds that $\mathcal{F}%
^{A}\left(  Q\right)  \left(  X\right)  \geq\mathcal{F}^{A}\left(  Q\right)
\left(  X^{\prime}\right)  $.

\textbf{Case }$i=m+1$ ($E=E^{m+1}=\left\{  e_{1},\ldots,e_{m+1}\right\}  $).

Based on $Q:\mathcal{P}\left(  E^{m+1}\right)  \rightarrow\mathbf{I}$ we
define the semi-fuzzy quantifiers $Q^{a}:\mathcal{P}\left(  E^{m}\right)
\rightarrow\mathbf{I}$ and $Q^{b}:\mathcal{P}\left(  E^{m}\right)
\rightarrow\mathbf{I}$ as
\begin{align*}
Q^{a}\left(  X\right)   &  =Q\left(  X\right)  ,X\in\mathcal{P}\left(
E^{m}\right) \\
Q^{b}\left(  X\right)   &  =Q\left(  X\cup\left\{  e_{m+1}\right\}  \right)
,X\in\mathcal{P}\left(  E^{m}\right)
\end{align*}

$Q^{a}$ and $Q^{b}$ are monotonic nonincreasing on $E^{m}$.

Let $h=\left(  \mu_{X^{\prime}}\left(  e_{m+1}\right)  -\mu_{X}\left(
e_{m+1}\right)  \right)  $. Then%
\begin{align*}
&  \mathcal{F}^{A}\left(  Q\right)  \left(  X\right) \\
&  =\sum_{Y\in\mathcal{P}\left(  E^{m+1}\right)  }m_{X}\left(  Y\right)
Q\left(  Y\right) \\
&  =\sum_{Y\in\mathcal{P}\left(  E^{m}\right)  }\left(  1-\mu_{X}\left(
e_{m+1}\right)  \right)  m_{X}\left(  Y\right)  Q\left(  Y\right)  +\sum
_{Y\in\mathcal{P}\left(  E^{m}\right)  }\mu_{X}\left(  e_{m+1}\right)
m_{X}\left(  Y\right)  Q\left(  Y\cup\left\{  e_{m+1}\right\}  \right) \\
&  \geq\left(  1-\mu_{X}\left(  e_{m+1}\right)  -h\right)  \sum_{Y\in
\mathcal{P}\left(  E^{m}\right)  }m_{X}\left(  Y\right)  Q\left(  Y\right)
+\left(  \mu_{X}\left(  e_{m+1}\right)  +h\right)  \sum_{Y\in\mathcal{P}%
\left(  E^{m}\right)  }m_{X}\left(  Y\right)  Q\left(  Y\cup\left\{
e_{m+1}\right\}  \right) \\
&  =\left(  1-\mu_{X}\left(  e_{m+1}\right)  -\left(  \mu_{X^{\prime}}\left(
e_{m+1}\right)  -\mu_{X}\left(  e_{m+1}\right)  \right)  \right)  \sum
_{Y\in\mathcal{P}\left(  E^{m}\right)  }m_{X}\left(  Y\right)  Q\left(
Y\right) \\
&  +\left(  \mu_{X}\left(  e_{m+1}\right)  +\left(  \mu_{X^{\prime}}\left(
e_{m+1}\right)  -\mu_{X}\left(  e_{m+1}\right)  \right)  \right)  \sum
_{Y\in\mathcal{P}\left(  E^{m}\right)  }m_{X}\left(  Y\right)  Q\left(
Y\cup\left\{  e_{m+1}\right\}  \right) \\
&  =\left(  1-\mu_{X^{\prime}}\left(  e_{m+1}\right)  \right)  \sum
_{Y\in\mathcal{P}\left(  E^{m}\right)  }m_{X}\left(  Y\right)  Q\left(
Y\right)  +\mu_{X^{\prime}}\left(  e_{m+1}\right)  \sum_{Y\in\mathcal{P}%
\left(  E^{m}\right)  }m_{X}\left(  Y\right)  Q\left(  Y\cup\left\{
e_{m+1}\right\}  \right)
\end{align*}
\newline

And using the induction hypothesis%
\begin{align*}
\sum_{Y\in\mathcal{P}\left(  E^{m}\right)  }m_{X}\left(  Y\right)
Q^{a}\left(  Y\right)   &  \geq\sum_{Y\in\mathcal{P}\left(  E^{m}\right)
}m_{X^{\prime}}\left(  Y\right)  Q^{a}\left(  Y\right) \\
\sum_{Y\in\mathcal{P}\left(  E^{m}\right)  }m_{X}\left(  Y\right)
Q^{b}\left(  Y\right)   &  \geq\sum_{Y\in\mathcal{P}\left(  E^{m}\right)
}m_{X^{\prime}}\left(  Y\right)  Q^{b}\left(  Y\right)
\end{align*}
because $Q^{a}:\mathcal{P}\left(  E^{m}\right)  \rightarrow\mathbf{I}$ and
$Q^{b}:\mathcal{P}\left(  E^{m}\right)  \rightarrow\mathbf{I}$ are monotonic
nonincreasing on a referential of $m$ elements. We continue the computation:%
\begin{align*}
&  \mathcal{F}^{A}\left(  Q\right)  \left(  X\right) \\
&  \geq\left(  1-\mu_{X^{\prime}}\left(  e_{m+1}\right)  \right)  \sum
_{Y\in\mathcal{P}\left(  E^{m}\right)  }m_{X}\left(  Y\right)  Q\left(
Y\right)  +\mu_{X^{\prime}}\left(  e_{m+1}\right)  \sum_{Y\in\mathcal{P}%
\left(  E^{m}\right)  }m_{X}\left(  Y\right)  Q\left(  Y\cup\left\{
e_{m+1}\right\}  \right) \\
&  \geq\left(  1-\mu_{X^{\prime}}\left(  e_{m+1}\right)  \right)  \sum
_{Y\in\mathcal{P}\left(  E^{m}\right)  }m_{X^{\prime}}\left(  Y\right)
Q^{a}\left(  Y\right)  +\mu_{X^{\prime}}\left(  e_{m+1}\right)  \sum
_{Y\in\mathcal{P}\left(  E^{m}\right)  }m_{X^{\prime}}\left(  Y\right)
Q^{b}\left(  Y\right) \\
&  =\sum_{Y\in\mathcal{P}\left(  E^{m}\right)  }m_{X^{\prime}}\left(
Y\right)  Q\left(  Y\right) \\
&  =\mathcal{F}^{A}\left(  Q\right)  \left(  X^{\prime}\right)
\end{align*}

Let us consider now the general case.Let $Q:\mathcal{P}\left(  E\right)
^{n}\rightarrow\mathbf{I}$ be an $n$-ary semi-fuzzy quantifier non increasing
in its $n\,$argument. Then,
\begin{align*}
&  \mathcal{F}^{A}\left(  Q\right)  \left(  X_{1},\ldots,X_{n}\right) \\
&  =\sum_{Y_{1}\in\mathcal{P}\left(  E\right)  }\ldots\sum_{Y_{n}%
\in\mathcal{P}\left(  E\right)  }m_{X_{1}}\left(  Y_{1}\right)  \ldots
m_{X_{n}}\left(  Y_{n}\right)  Q\left(  Y_{1},\ldots,Y_{n}\right) \\
&  =\sum_{Y_{1}\in\mathcal{P}\left(  E\right)  }\ldots\sum_{Y_{n-1}%
\in\mathcal{P}\left(  E\right)  }m_{X_{1}}\left(  Y_{1}\right)  \ldots
m_{X_{n-1}}\left(  Y_{n-1}\right)  \sum_{Y_{n}\in\mathcal{P}\left(  E\right)
}m_{X_{n}}\left(  Y_{n}\right)  Q\left(  Y_{1},\ldots,Y_{n}\right)
\end{align*}
In
\[
\sum_{Y_{n}\in\mathcal{P}\left(  E\right)  }m_{X_{n}}\left(  Y_{n}\right)
Q\left(  Y_{1},\ldots,Y_{n}\right)
\]
the $Y_{1},\ldots,Y_{n-1}$ are constant. The unary semi-fuzzy quantifier
$Q^{\prime}:\mathcal{P}\left(  E\right)  \rightarrow\mathbf{I}$%
\[
Q^{\prime}\left(  Y\right)  =Q\left(  Y_{1},\ldots,Y_{n-1},Y\right)
\]
is monotonic non increasing, and then $\mathcal{F}^{A}\left(  Q^{\prime
}\right)  $ is also monotonic non increasing.\ As this fact is fulfilled for
all%
\[
Y_{1},\ldots,Y_{n-1}\in\mathcal{P}\left(  E\right)
\]
then the proposition is fulfilled.
\end{proof}

\paragraph{Axiom Z-6}

It holds that%
\begin{equation}
\mathcal{F}\left(  Q\circ\underset{i=1}{\overset{n}{\times}}\widehat{f_{i}%
}\right)  =\mathcal{F}\left(  Q\right)  \circ\underset{i=1}{\overset{n}%
{\times}}\widehat{\mathcal{F}}\left(  f_{i}\right)  \text{ where }f_{1}%
,\ldots,f_{n}:E^{\prime}\rightarrow E,E^{\prime}\neq\varnothing
\label{EqAxiomaZ6ModeloFA}%
\end{equation}

To prove this property we need some previous results.

\textbf{Existential quantifier}

\begin{proposition}
[$\mathcal{F}^{A}\left(  \exists\right)  \left(  X\right)  $]%
\label{PropAxiomaZ6ModeloFA_1}Let $X\in\widetilde{\mathcal{P}}\left(
E\right)  $ a fuzzy set. Then%
\[
\mathcal{F}^{A}\left(  \exists\right)  \left(  X\right)  =\sup\left\{
\overset{m}{\underset{i=1}{\widetilde{\vee}}}\mu_{X}\left(  a_{i}\right)
:A=\left\{  a_{1},\ldots,a_{m}\right\}  \in\mathcal{P}\left(  E\right)
,a_{i}\neq a_{j}\text{ if }i\neq j\right\}
\]

\end{proposition}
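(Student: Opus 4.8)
The plan is to evaluate $\mathcal{F}^{A}(\exists)(X)$ directly from the defining sum, recognise the outcome as the probabilistic t-conorm of the whole family $\{\mu_X(e):e\in E\}$, and then argue that this coincides with the claimed supremum.

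First I would use the relative definition of the existential quantifier, namely $\exists(Y)=0$ iff $Y=\varnothing$ and $\exists(Y)=1$ otherwise. Substituting into expression \ref{ModeloVerosimilitudes} with $n=1$, every nonempty $Y$ contributes $m_X(Y)$ while $\varnothing$ contributes nothing, so
\[
\mathcal{F}^{A}(\exists)(X)=\sum_{Y\in\mathcal{P}(E),\,Y\neq\varnothing}m_{X}(Y)=1-m_{X}(\varnothing),
\]
where I use that $\sum_{Y\in\mathcal{P}(E)}m_X(Y)=1$ (the masses of Definition \ref{DefInterpretProbConj} sum to one, by the obvious expansion $\prod_{e\in E}(\mu_X(e)+(1-\mu_X(e)))=1$). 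From that same definition $m_X(\varnothing)=\prod_{e\in E}(1-\mu_X(e))$, hence $\mathcal{F}^{A}(\exists)(X)=1-\prod_{e\in E}(1-\mu_X(e))$.

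Next I would identify this value with the t-conorm taken over $A=E$. Enumerating $E=\{a_1,\ldots,a_{|E|}\}$ and invoking Lemma \ref{LemaTConormaProbabilistica} gives $\widetilde{\vee}_{i=1}^{|E|}\mu_X(a_i)=1-\prod_{i=1}^{|E|}(1-\mu_X(a_i))=\mathcal{F}^{A}(\exists)(X)$, so the full set realises the value $\mathcal{F}^{A}(\exists)(X)$ among the competitors of the supremum. It then remains to check that no proper subset does strictly better: for any $A=\{a_1,\ldots,a_m\}\subseteq E$, Lemma \ref{LemaTConormaProbabilistica} yields $\widetilde{\vee}_{i=1}^{m}\mu_X(a_i)=1-\prod_{a\in A}(1-\mu_X(a))$, and since each factor $1-\mu_X(e)$ lies in $[0,1]$, enlarging the index set from $A$ to $E$ can only shrink the product, giving $1-\prod_{a\in A}(1-\mu_X(a))\le 1-\prod_{e\in E}(1-\mu_X(e))$. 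Thus $A=E$ is a maximiser and the supremum equals $\mathcal{F}^{A}(\exists)(X)$.

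There is no genuine obstacle here; the single point requiring care is verifying that the supremum is actually attained (as a maximum at $A=E$), which is precisely the monotonicity of the probabilistic sum read off from the product-of-complements representation in Lemma \ref{LemaTConormaProbabilistica}. Everything else is a direct unfolding of the definition of $\mathcal{F}^{A}$ together with the elementary fact that the masses $m_X(\cdot)$ sum to one.
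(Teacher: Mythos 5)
Your proof is correct and follows essentially the same route as the paper's: unfold the defining sum, note that $\exists$ vanishes only on $\varnothing$, use that the masses $m_X(\cdot)$ sum to one to obtain $1-m_{X}\left(\varnothing\right)=1-\prod_{e\in E}\left(1-\mu_{X}\left(e\right)\right)$, and identify this with the probabilistic t-conorm via Lemma \ref{LemaTConormaProbabilistica}. Your additional step — using monotonicity of the product of complements to verify that the supremum over all subsets is attained at $A=E$ — is left implicit in the paper's proof, so if anything you have been slightly more careful.
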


\begin{proof}
Let $E=\left\{  e_{1},\ldots,e_{m}\right\}  $ be given. Using the lemma
\ref{LemaTConormaProbabilistica}%
\begin{align*}
\mathcal{F}^{A}\left(  \exists\right)  \left(  X\right)   &  =\sum
_{Y\in\mathcal{P}\left(  E\right)  }m_{X}\left(  Y\right)  Q\left(  Y\right)
\\
&  =\sum_{Y\in\mathcal{P}\left(  E\right)  |Y\neq\varnothing}m_{X}\left(
Y\right) \\
&  =1-m_{X}\left(  \varnothing\right) \\
&  =1-%
{\displaystyle\prod\limits_{i=1}^{m}}
\left(  1-\mu_{X}\left(  e_{i}\right)  \right) \\
&  =\overset{m}{\underset{i=1}{\widetilde{\vee}}}\mu_{X}\left(  e_{i}\right)
\end{align*}

\end{proof}

\textbf{Universal quantifier}

\begin{proposition}
[$\mathcal{F}^{A}\left(  \forall\right)  \left(  X\right)  $]%
\label{PropAxiomaZ6ModeloFA_2}Let $X\in\widetilde{\mathcal{P}}\left(
E\right)  $ a fuzzy set. Then%
\[
\mathcal{F}^{A}\left(  \forall\right)  \left(  X\right)  =\inf\left\{
\overset{m}{\underset{i=1}{\widetilde{\wedge}}}\mu_{X}\left(  a_{i}\right)
:A=\left\{  a_{1},\ldots,a_{m}\right\}  \in\mathcal{P}\left(  E\right)
,a_{i}\neq a_{j}\text{ si }i\neq j\right\}
\]

\end{proposition}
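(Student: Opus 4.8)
The plan is to mirror the proof of Proposition \ref{PropAxiomaZ6ModeloFA_1} for the existential quantifier, replacing the ``empty'' surviving event by the ``full'' one. First I would recall that, viewed as a unary semi-fuzzy quantifier, $\forall:\mathcal{P}\left( E\right) \rightarrow\mathbf{I}$ returns $1$ exactly when every element of $E$ belongs to the argument, i.e. $\forall\left( Y\right) =1$ if $Y=E$ and $\forall\left( Y\right) =0$ otherwise. Substituting this into the definition of the model, the sum
\[
\mathcal{F}^{A}\left( \forall\right) \left( X\right) =\sum_{Y\in\mathcal{P}\left( E\right) }m_{X}\left( Y\right) \forall\left( Y\right)
\]
collapses to the single nonzero term $m_{X}\left( E\right) $.

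Next I would evaluate $m_{X}\left( E\right) $ directly from Definition \ref{DefInterpretProbConj}. Since $E\backslash E=\varnothing$, the complementary factors disappear and
\[
m_{X}\left( E\right) =\prod_{e\in E}\mu_{X}\left( e\right) =\overset{m}{\underset{i=1}{\widetilde{\wedge}}}\mu_{X}\left( e_{i}\right),
\]
which is the $m$-ary product t-norm that Lemma \ref{LemaFAOperInduc} identifies as the conjunction induced by $\mathcal{F}^{A}$. This already yields the closed form $\mathcal{F}^{A}\left( \forall\right) \left( X\right) =\prod_{e\in E}\mu_{X}\left( e\right) $; note that, unlike the existential case, no auxiliary result analogous to Lemma \ref{LemaTConormaProbabilistica} is required, because the iterated product is literally $\prod_{e\in E}\mu_{X}\left( e\right) $.

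Finally I would reconcile this closed form with the stated infimum. The key observation is that the product t-norm is nonincreasing under the insertion of further factors, since every $\mu_{X}\left( e\right) \in\left[ 0,1\right] $: for any $A\subseteq E$ one has $\prod_{a\in A}\mu_{X}\left( a\right) \geq\prod_{e\in E}\mu_{X}\left( e\right) $, the right-hand side arising from the left by appending the factors indexed by $E\backslash A$, each at most $1$. Hence $\prod_{e\in E}\mu_{X}\left( e\right) $ is a lower bound for the family $\left\{ \overset{}{\underset{a\in A}{\widetilde{\wedge}}}\mu_{X}\left( a\right) :A\in\mathcal{P}\left( E\right) \right\} $, and it is attained at $A=E$, so it is exactly the infimum. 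This monotonicity step is the only place needing an argument, and it is entirely elementary, so I anticipate no real obstacle: the whole proposition is the order-theoretic dual of the existential computation, with $\prod\left( 1-\mu_{X}\right) $ and the supremum replaced by $\prod\mu_{X}$ and the infimum.
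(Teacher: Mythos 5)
Your proposal is correct and takes essentially the same route as the paper's proof: the sum defining $\mathcal{F}^{A}\left(\forall\right)\left(X\right)$ collapses to the single nonzero term $m_{X}\left(E\right)=\prod_{e\in E}\mu_{X}\left(e\right)$. The only difference is that you explicitly verify that this product equals the stated infimum (by monotonicity of the product under appending factors in $\left[0,1\right]$, with attainment at $A=E$), a step the paper leaves implicit.
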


\begin{proof}
Let $E=\left\{  e_{1},\ldots,e_{m}\right\}  $ be given. Then
\begin{align*}
\mathcal{F}^{A}\left(  \forall\right)  \left(  X\right)   &  =\sum
_{Y\in\mathcal{P}\left(  E\right)  }m_{X}\left(  Y\right)  \forall\left(
Y\right) \\
&  =m_{X}\left(  E\right) \\
&  =%
{\displaystyle\prod\limits_{i=1}^{m}}
\mu_{X}\left(  e_{i}\right)
\end{align*}

\end{proof}

\textbf{Induced extension principle.}

To compute the induce extension principle of the $\mathcal{F}^{A}$ we will use
the definition \ref{DefPropFuncAplPrinExtension}.

\begin{notation}
[$\widetilde{\vee}:\widetilde{\mathcal{P}}\left(  E\right)  \rightarrow
\mathbf{I}$]\label{NotacionAxiomaZ6GeneralizacionOR}Let $E=\left\{
e_{1},\ldots,e_{m}\right\}  $ and $X\in$ $\widetilde{\mathcal{P}}\left(
E\right)  $ a fuzzy set. By $\widetilde{\vee}:\widetilde{\mathcal{P}}\left(
E\right)  \rightarrow\mathbf{I}$ we will denote the generalization of the
induced tconorm to fuzzy sets; that is,%
\[
\widetilde{\vee}\left(  X\right)  =\left\{
\begin{array}
[c]{ccc}%
\widetilde{\vee}\left(  \mu_{X}\left(  e_{1}\right)  ,\widetilde{\vee}\left(
\ldots,\widetilde{\vee}\left(  \mu_{X}\left(  e_{m-1}\right)  ,\mu_{X}\left(
e_{m}\right)  \right)  \right)  \right)  & : & m>0\\
0 & : & m=0
\end{array}
\right.
\]

\end{notation}

\begin{proposition}
\label{PropAxiomaZ6ModeloFA_3}Let $f:E\rightarrow E^{\prime}$ (where
$E,E^{\prime}\neq\varnothing$). Let $e_{i}^{\prime}\in E^{\prime}$ be given
and let $\widehat{f^{-1}}\left(  e_{i}^{\prime}\right)  =\left\{  e_{i_{1}%
},\ldots e_{i_{k_{i}}}\right\}  $ be the inverse image of $e_{i}^{\prime}$.
Then the induced extension principle of $\mathcal{F}^{A}$ for $f$ is
\begin{align}
\mu_{\widehat{\mathcal{F}}\left(  f\right)  \left(  X\right)  }\left(
e_{i}^{\prime}\right)   &  =\left\{
\begin{array}
[c]{ccc}%
\widetilde{\vee}\left(  \left\{  \mu_{X}\left(  e_{i_{1}}\right)  /e_{i_{1}%
},\ldots,\mu_{X}\left(  e_{i_{k_{i}}}\right)  /e_{i_{k_{i}}}\right\}  \right)
& : & k_{i}\geq1\\
0 & : & \text{otherwise}%
\end{array}
\right. \label{EqAxiomaZ6ModeloFA_1}\\
&  =\mathcal{F}^{A}\left(  \exists_{\widehat{f^{-1}}\left(  e_{i}^{\prime
}\right)  }\right)  \left(  X^{\widehat{f^{-1}}\left(  e_{i}^{\prime}\right)
}\right) \nonumber
\end{align}
where $\exists_{\widehat{f^{-1}}\left(  e_{i}^{\prime}\right)  }$ represents
the existential quantifier on the base universe $\widehat{f^{-1}}\left(
e_{i}^{\prime}\right)  $ and $X^{\widehat{f^{-1}}\left(  e_{i}^{\prime
}\right)  }$ is the proyection of the fuzzy set $X$ over $\widehat{f^{-1}%
}\left(  e_{i}^{\prime}\right)  $.
\end{proposition}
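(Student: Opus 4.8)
The plan is to unfold Definition \ref{DefPropFuncAplPrinExtension} and recognise the semi-fuzzy quantifier appearing there as an existential quantifier restricted to the preimage of $e_{i}^{\prime}$. Write $A=\widehat{f^{-1}}\left(  e_{i}^{\prime}\right)  =\left\{  e_{i_{1}},\ldots,e_{i_{k_{i}}}\right\}  $ and $B=E\backslash A$. By definition, $\mu_{\widehat{\mathcal{F}}\left(  f\right)  \left(  X\right)  }\left(  e_{i}^{\prime}\right)  $ equals $\mathcal{F}^{A}\left(  Q_{e_{i}^{\prime}}\right)  \left(  X\right)  $, where $Q_{e_{i}^{\prime}}=\chi_{\widehat{f}\left(  \cdot\right)  }\left(  e_{i}^{\prime}\right)  :\mathcal{P}\left(  E\right)  \rightarrow\mathbf{I}$ returns $1$ if $e_{i}^{\prime}\in\widehat{f}\left(  Y\right)  $ and $0$ otherwise. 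First I would observe that $e_{i}^{\prime}\in\widehat{f}\left(  Y\right)  $ holds precisely when some $e\in Y$ satisfies $f\left(  e\right)  =e_{i}^{\prime}$, i.e. when $Y\cap A\neq\varnothing$. Hence $Q_{e_{i}^{\prime}}\left(  Y\right)  $ coincides with the existential quantifier evaluated on $Y\cap A$, so it depends only on the restriction of $Y$ to $A$; in particular $Q_{e_{i}^{\prime}}\equiv 0$ when $A=\varnothing$ (the case $k_{i}=0$), which already yields the ``otherwise'' branch.

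For $k_{i}\geq1$ I would compute the defining sum by passing to the complement,
\[
\mathcal{F}^{A}\left(  Q_{e_{i}^{\prime}}\right)  \left(  X\right)  =\sum_{Y:Y\cap A\neq\varnothing}m_{X}\left(  Y\right)  =1-\sum_{Y\subseteq B}m_{X}\left(  Y\right)  ,
\]
using that $\sum_{Y\in\mathcal{P}\left(  E\right)  }m_{X}\left(  Y\right)  =\prod_{e\in E}\left(  \mu_{X}\left(  e\right)  +\left(  1-\mu_{X}\left(  e\right)  \right)  \right)  =1$. For $Y\subseteq B$ I would factor $m_{X}\left(  Y\right)  =m_{X}^{A}\left(  \varnothing\right)  m_{X}^{B}\left(  Y\right)  $ by Lemma \ref{LemaProyeccion} (since then $Y\cap A=\varnothing$ and $Y\cap B=Y$), so that
\[
\sum_{Y\subseteq B}m_{X}\left(  Y\right)  =m_{X}^{A}\left(  \varnothing\right)  \sum_{Y\subseteq B}m_{X}^{B}\left(  Y\right)  =\prod_{e\in A}\left(  1-\mu_{X}\left(  e\right)  \right)  ,
\]
where the inner sum is $1$ because it is the total mass of the representatives over the reduced universe $B$. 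This leaves $\mathcal{F}^{A}\left(  Q_{e_{i}^{\prime}}\right)  \left(  X\right)  =1-\prod_{j=1}^{k_{i}}\left(  1-\mu_{X}\left(  e_{i_{j}}\right)  \right)  $.

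Finally I would invoke Lemma \ref{LemaTConormaProbabilistica} to rewrite this as $\widetilde{\vee}\left(  \mu_{X}\left(  e_{i_{1}}\right)  ,\ldots,\mu_{X}\left(  e_{i_{k_{i}}}\right)  \right)  $, which is exactly the right-hand side of \ref{EqAxiomaZ6ModeloFA_1} under Notation \ref{NotacionAxiomaZ6GeneralizacionOR}; with the conventions $\widetilde{\vee}\left(  {}\right)  =0$ and $\prod_{\varnothing}=1$ the formula also reproduces the $k_{i}=0$ branch, so the two cases unify. The second equality is then immediate, since applying Proposition \ref{PropAxiomaZ6ModeloFA_1} to the existential quantifier $\exists_{A}$ over the base universe $A$ and to the projection $X^{A}$ gives the same value $1-\prod_{e\in A}\left(  1-\mu_{X}\left(  e\right)  \right)  $. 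The only genuinely delicate points are bookkeeping ones: correctly identifying the induced quantifier as $\exists$ on the preimage, and splitting $E=A\cup B$ so that the projection lemma lets the mass over $B$ collapse to $1$; once these are in place the computation is routine.
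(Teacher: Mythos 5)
Your proposal is correct and essentially reproduces the paper's own argument: the paper likewise unfolds Definition \ref{DefPropFuncAplPrinExtension}, characterizes the induced semi-fuzzy quantifier as the indicator of $Y\cap\widehat{f^{-1}}\left(  e_{i}^{\prime}\right)  \neq\varnothing$, splits $E$ into the preimage and its complement via Lemma \ref{LemaProyeccion}, and collapses the total mass over the complement to $1$. The only difference is cosmetic: you complement first and obtain the closed form $1-\prod_{j=1}^{k_{i}}\left(  1-\mu_{X}\left(  e_{i_{j}}\right)  \right)  $ before invoking Lemma \ref{LemaTConormaProbabilistica}, whereas the paper keeps the positive sum $\sum_{\varnothing\subset M\subseteq\widehat{f^{-1}}\left(  e_{i}^{\prime}\right)  }m_{X}^{\widehat{f^{-1}}\left(  e_{i}^{\prime}\right)  }\left(  M\right)  $, recognizes it directly as $\mathcal{F}^{A}\left(  \exists_{\widehat{f^{-1}}\left(  e_{i}^{\prime}\right)  }\right)  \left(  X^{\widehat{f^{-1}}\left(  e_{i}^{\prime}\right)  }\right)  $, and only then converts to the $\widetilde{\vee}$ form via Proposition \ref{PropAxiomaZ6ModeloFA_1}.
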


\begin{proof}%
\begin{align}
\mu_{\widehat{\mathcal{F}^{A}}\left(  f\right)  \left(  X\right)  }\left(
e^{\prime}\right)   &  =\mathcal{F}^{A}\left(  \chi_{\widehat{f}\left(
\cdot\right)  }\left(  e^{\prime}\right)  \right)  \left(  X\right)
\label{EqAxiomaZ6ModeloFA_2}\\
&  =\sum_{Y\in\mathcal{P}\left(  E\right)  }m_{X}\left(  Y\right)
\chi_{\widehat{f}\left(  \cdot\right)  }\left(  e^{\prime}\right)  \left(
Y\right) \nonumber\\
&  =\sum_{Y\in\mathcal{P}\left(  E\right)  }m_{X}\left(  Y\right)
\chi_{\widehat{f}\left(  Y\right)  }\left(  e^{\prime}\right) \nonumber\\
&  =\sum_{Y\in\mathcal{P}\left(  E\right)  }m_{X}\left(  Y\right)  \left\{
\begin{array}
[c]{ccc}%
0 & : & e^{\prime}\notin\widehat{f}\left(  Y\right) \\
1 & : & e^{\prime}\in\widehat{f}\left(  Y\right)
\end{array}
\right. \nonumber\\
&  =\sum_{Y\in\mathcal{P}\left(  E\right)  |\exists e\in Y,f\left(  e\right)
=e^{\prime}}m_{X}\left(  Y\right) \nonumber
\end{align}
If $\widehat{f^{.-1}}\left(  e^{\prime}\right)  =\varnothing$ the previous sum
is $0$ and we are in the second situation of expression
\ref{EqAxiomaZ6ModeloFA_1}.

Let us suppose $\widehat{f^{.-1}}\left(  e^{\prime}\right)  \neq\varnothing$.
\end{proof}

It should be noted that for each $Y\in\mathcal{P}\left(  E\right)
\,$fulfilling $\exists e\in Y,$ $f\left(  e\right)  =e^{\prime}$ then the
intersection of $Y$ with the inverse image of $e^{\prime}$ ($\widehat{f^{.-1}%
}\left(  e^{\prime}\right)  $) is not empty. As for all $Y$ fulfilling this
condition can be decomposed in the part intersecting with $\widehat{f^{.-1}%
}\left(  e^{\prime}\right)  $, and the part that does not intersect with
$\widehat{f^{.-1}}\left(  e^{\prime}\right)  $ ($E\backslash\widehat{f}%
^{-1}\left(  e^{\prime}\right)  $) expression \ref{EqAxiomaZ6ModeloFA_2} is
equal to%
\begin{align*}
\mu_{\widehat{\mathcal{F}^{A}}\left(  f\right)  \left(  X\right)  }\left(
e^{\prime}\right)   &  =\ldots\\
&  =\sum_{\varnothing\subset M\subseteq\widehat{f}^{-1}\left(  e^{\prime
}\right)  }\sum_{\varnothing\subseteq R\subseteq E\backslash\widehat{f}%
^{-1}\left(  e^{\prime}\right)  }m_{X}\left(  M\cup R\right) \\
&  =\sum_{\varnothing\subset M\subseteq\widehat{f}^{-1}\left(  e^{\prime
}\right)  }\sum_{\varnothing\subseteq R\subseteq E\backslash\widehat{f}%
^{-1}\left(  e^{\prime}\right)  }m_{X}^{\widehat{f}^{-1}\left(  e^{\prime
}\right)  }\left(  M\right)  m_{X}^{E\backslash\widehat{f}^{-1}\left(
e^{\prime}\right)  }\left(  R\right) \\
&  =\sum_{\varnothing\subset M\subseteq\widehat{f}^{-1}\left(  e^{\prime
}\right)  }m_{X}^{\widehat{f}^{-1}\left(  e^{\prime}\right)  }\sum
_{\varnothing\subseteq R\subseteq E\backslash\widehat{f}^{-1}\left(
e^{\prime}\right)  }\left(  M\right)  m_{X}^{E\backslash\widehat{f}%
^{-1}\left(  e^{\prime}\right)  }\left(  R\right) \\
&  =\sum_{\varnothing\subset M\subseteq\widehat{f}^{-1}\left(  e^{\prime
}\right)  }m_{X_{1}}^{\widehat{f}^{-1}\left(  e^{\prime}\right)  }\left(
M\right)  \cdot1
\end{align*}
As $\varnothing\subseteq R\subseteq E\backslash\widehat{f}^{-1}\left(
e^{\prime}\right)  $ contains all the sets of $E\backslash\widehat{f}%
^{-1}\left(  e^{\prime}\right)  $. And then,
\begin{align*}
\mu_{\widehat{\mathcal{F}^{A}}\left(  f\right)  \left(  X\right)  }\left(
e^{\prime}\right)   &  =\ldots\\
&  =\sum_{\varnothing\subset M\subseteq\widehat{f}^{-1}\left(  e^{\prime
}\right)  }m_{X}^{\widehat{f}^{-1}\left(  e^{\prime}\right)  }\left(  M\right)
\\
&  =\mathcal{F}^{A}\left(  \exists_{\widehat{f}^{-1}\left(  e^{\prime}\right)
}\right)  \left(  X^{\widehat{f}^{-1}\left(  e^{\prime}\right)  }\right)
\end{align*}
If we denote $\widehat{f^{-1}}\left(  e^{\prime}\right)  =\left\{  e_{i_{1}%
},\ldots e_{i_{k}}\right\}  $ using expression \ref{PropAxiomaZ6ModeloFA_1} we
obtain%
\[
\mu_{\widehat{\mathcal{F}^{A}}\left(  f\right)  \left(  X\right)  }\left(
e^{\prime}\right)  =\widetilde{\vee}\left(  \left\{  \mu_{X}\left(  e_{i_{1}%
}\right)  /e_{i_{1}},\ldots,\mu_{X}\left(  e_{i_{k}}\right)  /e_{i_{k}%
}\right\}  \right)
\]

Now we will prove the fulfillment of
\[%
\begin{tabular}
[c]{l}%
$\mathcal{F}\left(  Q\circ\underset{i=1}{\overset{n}{\times}}\widehat{f_{i}%
}\right)  =\mathcal{F}\left(  Q\right)  \circ\underset{i=1}{\overset{n}%
{\times}}\widehat{\mathcal{F}}\left(  f_{i}\right)  $\\
where $f_{1},\ldots,f_{n}:E^{\prime}\rightarrow E,E^{\prime}\neq\varnothing$%
\end{tabular}
\ \
\]

\begin{proof}
Let $E=\left\{  e_{1},\ldots,e_{m}\right\}  $, $E^{\prime}=\left\{
e_{1},\ldots,e_{m^{\prime}}\right\}  $ finite sets, $Q:\mathcal{P}\left(
E\right)  ^{n}\rightarrow\mathbf{I}$ a semi-fuzzy quantifier, $X_{1}^{\prime
}\,\ldots,X_{n}^{\prime}\in\mathcal{P}\left(  E^{\prime}\right)  $ fuzzy sets
and $f_{1},\ldots,f_{n}:E^{\prime}\rightarrow E,E^{\prime}\neq\varnothing$. We
point that $\widehat{\mathcal{F}}\left(  f_{i}\right)  \left(  X_{i}\right)
\in\widetilde{\mathcal{P}}\left(  E\right)  $. By computation%
\begin{align}
&  \mathcal{F}^{A}\left(  Q\right)  \circ\times_{i=1}^{n}\widehat
{\mathcal{F}^{A}}\left(  f_{i}\right)  \left(  X_{1}^{\prime},\ldots
,X_{n}^{\prime}\right) \label{EqAxiomaZ6ModeloFA_3}\\
&  =\mathcal{F}^{A}\left(  Q\right)  \circ\left(  \widehat{\mathcal{F}^{A}%
}\left(  f_{1}\right)  \left(  X_{1}^{\prime}\right)  ,\ldots,\widehat
{\mathcal{F}^{A}}\left(  f_{n}\right)  \left(  X_{n}^{\prime}\right)  \right)
\nonumber\\
&  =\sum_{Y_{1}\in\mathcal{P}\left(  E\right)  }\ldots\sum_{Y_{n}%
\in\mathcal{P}\left(  E\right)  }m_{\widehat{\mathcal{F}^{A}}\left(
f_{1}\right)  \left(  X_{1}^{\prime}\right)  }\ldots m_{\widehat
{\mathcal{F}^{A}}\left(  f_{n}\right)  \left(  X_{n}^{\prime}\right)
}Q\left(  Y_{1},\ldots,Y_{n}\right) \nonumber
\end{align}

Using result \ref{PropAxiomaZ6ModeloFA_3} we can rewrite $\widehat
{\mathcal{F}^{A}}\left(  f_{i}\right)  \left(  X_{i}^{\prime}\right)
\in\widetilde{\mathcal{P}}\left(  E\right)  $ as%
\begin{align*}
&  \widehat{\mathcal{F}^{A}}\left(  f_{i}\right)  \left(  X_{i}^{\prime
}\right) \\
&  =\left\{  \mathcal{F}^{A}\left(  \exists_{\widehat{f_{i}}^{-1}\left(
e_{1}\right)  }\right)  \left(  \left(  X_{i}^{\prime}\right)  ^{\widehat
{f_{i}}^{-1}\left(  e_{1}\right)  }\right)  /e_{1},\ldots,\mathcal{F}%
^{A}\left(  \exists_{\widehat{f_{i}}^{-1}\left(  e_{m}\right)  }\right)
\left(  \left(  X_{i}^{\prime}\right)  ^{\widehat{f_{i}}^{-1}\left(
e_{m}\right)  }\right)  /e_{m}\right\}
\end{align*}

Rewriting expression \ref{EqAxiomaZ6ModeloFA_3}%
\begin{align}
\ldots &  =\sum_{Y_{1}\in\mathcal{P}\left(  E\right)  }\ldots\sum_{Y_{n}%
\in\mathcal{P}\left(  E\right)  }\label{EqAxiomaZ6ModeloFA_3p5}\\
&  m_{\left\{  \mathcal{F}^{A}\left(  \exists_{\widehat{f_{1}}^{-1}\left(
e_{1}\right)  }\right)  \left(  \left(  X_{1}^{\prime}\right)  ^{\widehat
{f_{1}}^{-1}\left(  e_{1}\right)  }\right)  /e_{1},\ldots,\mathcal{F}%
^{A}\left(  \exists_{\widehat{f_{1}}^{-1}\left(  e_{m}\right)  }\right)
\left(  \left(  X_{1}^{\prime}\right)  ^{\widehat{f_{1}}^{-1}\left(
e_{m}\right)  }\right)  /e_{m}\right\}  }\left(  Y_{1}\right)  \ldots
\nonumber\\
&  m_{\left\{  \mathcal{F}^{A}\left(  \exists_{\widehat{f_{n}}^{-1}\left(
e_{1}\right)  }\right)  \left(  \left(  X_{n}^{\prime}\right)  ^{\widehat
{f_{n}}^{-1}\left(  e_{1}\right)  }\right)  /e_{1},\ldots,\mathcal{F}%
^{A}\left(  \exists_{\widehat{f_{n}}^{-1}\left(  e_{m}\right)  }\right)
\left(  \left(  X_{n}^{\prime}\right)  ^{\widehat{f_{n}}^{-1}\left(
e_{m}\right)  }\right)  /e_{m}\right\}  }\left(  Y_{n}\right)  Q\left(
Y_{1},\ldots,Y_{n}\right) \nonumber
\end{align}

Let be$Y_{j}=\left\{  e_{j_{1}},\ldots,e_{_{j_{k}}}\right\}  \in
\mathcal{P}\left(  E\right)  ,E\backslash Y_{j}=\left\{  e_{_{j_{k+1}}}%
,\ldots,e_{_{j_{m}}}\right\}  $. We will compute the probability mass
$m_{\widehat{\mathcal{F}^{A}}\left(  f_{i}\right)  \left(  X_{i}^{\prime
}\right)  }\left(  Y_{j}\right)  $:%
\begin{align*}
&  m_{\widehat{\mathcal{F}^{A}}\left(  f_{i}\right)  \left(  X_{i}^{\prime
}\right)  }\left(  Y_{j}\right) \\
&  =m_{\left\{  \mathcal{F}^{A}\left(  \exists_{\widehat{f_{i}}^{-1}\left(
e_{1}\right)  }\right)  \left(  \left(  X_{i}^{\prime}\right)  ^{\widehat
{f_{i}}^{-1}\left(  e1\right)  }\right)  /e_{1},\ldots,\mathcal{F}^{A}\left(
\exists_{\widehat{f_{i}}^{-1}\left(  e_{m}\right)  }\right)  \left(  \left(
X_{i}^{\prime}\right)  ^{\widehat{f_{i}}^{-1}\left(  e_{m}\right)  }\right)
/e_{m}\right\}  }\left(  \left\{  e_{_{j_{1}}},\ldots,e_{_{j_{k}}}\right\}
\right) \\
&  =\mathcal{F}^{A}\left(  \exists_{\widehat{f_{i}}^{-1}\left(  e_{j_{1}%
}\right)  }\right)  \left(  \left(  X_{i}^{\prime}\right)  ^{\widehat{f_{i}%
}^{-1}\left(  e_{j_{1}}\right)  }\right)  \cdot\ldots\cdot\mathcal{F}%
^{A}\left(  \exists_{\widehat{f_{i}}^{-1}\left(  e_{j_{k}}\right)  }\right)
\left(  \left(  X_{i}^{\prime}\right)  ^{\widehat{f_{i}}^{-1}\left(  e_{j_{k}%
}\right)  }\right)  \cdot\\
&  \left(  1-\mathcal{F}^{A}\left(  \exists_{\widehat{f_{i}}^{-1}\left(
e_{j_{k+1}}\right)  }\right)  \left(  \left(  X_{i}^{\prime}\right)
^{\widehat{f_{i}}^{-1}\left(  e_{j_{k+1}}\right)  }\right)  \right)
\cdot\ldots\cdot\left(  1-\mathcal{F}^{A}\left(  \exists_{\widehat{f_{i}}%
^{-1}\left(  e_{j_{m}}\right)  }\right)  \left(  \left(  X_{i}^{\prime
}\right)  ^{\widehat{f_{i}}^{-1}\left(  e_{j_{m}}\right)  }\right)  \right)
\end{align*}

And by duality we now that $\widetilde{\lnot}\mathcal{F}^{A}\left(
\exists\right)  =\mathcal{F}^{A}\left(  \forall\right)  \widetilde{\lnot}$.
Then,,%
\begin{align}
&  m_{\widehat{\mathcal{F}^{A}}\left(  f_{i}\right)  \left(  X_{i}^{\prime
}\right)  }\left(  Y_{j}\right) \label{EqAxiomaZ6ModeloFA_4}\\
&  =\ldots\nonumber\\
&  =\mathcal{F}^{A}\left(  \exists_{\widehat{f_{i}}^{-1}\left(  e_{j_{1}%
}\right)  }\right)  \left(  \left(  X_{i}^{\prime}\right)  ^{\widehat{f_{i}%
}^{-1}\left(  e_{j_{1}}\right)  }\right)  \cdot\ldots\cdot\mathcal{F}%
^{A}\left(  \exists_{\widehat{f_{i}}^{-1}\left(  e_{j_{k}}\right)  }\right)
\left(  \left(  X_{i}^{\prime}\right)  ^{\widehat{f_{i}}^{-1}\left(  e_{j_{k}%
}\right)  }\right)  \cdot\nonumber\\
&  \mathcal{F}\left(  \forall_{\widehat{f_{i}}^{-1}\left(  e_{j_{k+1}}\right)
}\right)  \left(  \widetilde{\lnot}\left(  X_{i}^{\prime}\right)
^{\widehat{f_{i}}^{-1}\left(  e_{j_{k+1}}\right)  }\right)  \cdot\text{\ldots
}\cdot\mathcal{F}^{A}\left(  \forall_{\widehat{f_{i}}^{-1}\left(  e_{j_{m}%
}\right)  }\right)  \left(  \widetilde{\lnot}\left(  X_{i}^{\prime}\right)
^{\widehat{f_{i}}^{-1}\left(  e_{j_{m}}\right)  }\right) \nonumber
\end{align}

As%
\begin{align*}
\mathcal{F}^{A}\left(  \exists_{\widehat{f_{i}}^{-1}\left(  e_{j_{r}}\right)
}\right)  \left(  \left(  X_{i}^{\prime}\right)  ^{\widehat{f_{i}}^{-1}\left(
e_{j_{r}}\right)  }\right)   &  =\sum_{\varnothing\subset M\subseteq
\widehat{f}_{i}^{-1}\left(  e_{j_{r}}\right)  }m_{\left(  X_{i}^{\prime
}\right)  ^{\widehat{f_{i}}^{-1}\left(  e_{j_{r}}\right)  }}^{\widehat{f_{i}%
}^{-1}\left(  e_{j_{r}}\right)  }\left(  M\right) \\
&  =\sum_{\varnothing\subset M\subseteq\widehat{f}_{i}^{-1}\left(  e_{j_{r}%
}\right)  }m_{X_{i}^{\prime}}^{\widehat{f_{i}}^{-1}\left(  e_{j_{r}}\right)
}\left(  M\right) \\
\mathcal{F}^{A}\left(  \forall_{\widehat{f_{i}}^{-1}\left(  e_{j_{r}}\right)
}\right)  \left(  \widetilde{\lnot}\left(  X_{i}^{\prime}\right)
^{\widehat{f_{i}}^{-1}\left(  e_{j_{r}}\right)  }\right)   &  =%
{\displaystyle\prod_{e^{\prime}\in\widehat{f_{i}}^{-1}\left(  e_{j_{r}%
}\right)  }}
\widetilde{\lnot}\mu_{\left(  X_{i}^{\prime}\right)  ^{\widehat{f_{i}}%
^{-1}\left(  e_{j_{r}}\right)  }}\left(  e^{\prime}\right) \\
&  =%
{\displaystyle\prod_{e^{\prime}\in\widehat{f_{i}}^{-1}\left(  e_{j_{r}%
}\right)  }}
\widetilde{\lnot}\mu_{X_{i}^{\prime}}\left(  e^{\prime}\right)
\end{align*}
then expression \ref{EqAxiomaZ6ModeloFA_4} is equivalent to%
\begin{align}
&  m_{\widehat{\mathcal{F}^{A}}\left(  f_{i}\right)  \left(  X_{i}^{\prime
}\right)  }\left(  Y_{j}\right) \label{EqAxiomaZ6ModeloFA_5}\\
&  =\ldots\nonumber\\
&  =\sum_{\varnothing\subset M\subseteq\widehat{f}_{i}^{-1}\left(  e_{j_{1}%
}\right)  }m_{X_{i}^{\prime}}^{\widehat{f_{i}}^{-1}\left(  e_{j_{1}}\right)
}\left(  M\right)  \cdot\ldots\cdot\sum_{\varnothing\subset M\subseteq
\widehat{f}_{i}^{-1}\left(  e_{j_{k}}\right)  }m_{X_{i}^{\prime}}%
^{\widehat{f_{i}}^{-1}\left(  e_{j_{k}}\right)  }\left(  M\right) \nonumber\\
&  \cdot%
{\displaystyle\prod\limits_{e^{\prime}\in E^{\prime}\backslash\left(
\widehat{f}_{i}^{-1}\left(  e_{j_{1}}\right)  \cup\ldots\cup\widehat{f}%
_{r}^{-1}\left(  e_{j_{k}}\right)  \right)  }}
\widetilde{\lnot}\mu_{X_{i}}\left(  e^{\prime}\right) \nonumber\\
&  =\sum_{\substack{M\in\mathcal{P}\left(  E^{\prime}\right)  |\\\widehat
{f}_{i}^{-1}\left(  e_{j_{1}}\right)  \cap Y_{j}\neq\varnothing\wedge
\ldots\wedge\\\widehat{f}_{i}^{-1}\left(  e_{j_{k}}\right)  \cap Y_{j}%
\neq\varnothing\wedge\\\widehat{f}_{r}^{-1}\left(  e_{j_{k+1}}\right)  \cap
Y_{j}=\varnothing\wedge\ldots\wedge\\\widehat{f}_{r}^{-1}\left(  e_{j_{m}%
}\right)  \cap Y_{j}=\varnothing}}m_{X_{i}^{\prime}}\left(  M\right) \nonumber
\end{align}

In this way, the probability mass $m_{\widehat{\mathcal{F}^{A}}\left(
f_{i}\right)  \left(  X_{i}^{\prime}\right)  }\left(  Y_{j}\right)  $ is
computed by using the probability masses $m_{X_{i}^{\prime}}\left(  M\right)
$ that are associated to the $Ms\in\mathcal{P}\left(  E^{\prime}\right)  $
such that the intersection with the inverse image of the $e\in Y_{j}$ is not
empty, and such that the intersection with the $e\in E\backslash Y_{j}$ is
empty. Moreover, we should note that all $M\in\mathcal{P}\left(  E^{\prime
}\right)  $ is associated to one $Y\in\mathcal{P}\left(  E\right)  $; that is,
$\widehat{f_{i}}\left(  M\right)  =Y$ for some $Y$. In this way, if the $Ys$
visit $\mathcal{P}\left(  E\right)  $, then the $Ms$ visit $\mathcal{P}\left(
E^{\prime}\right)  $, and continuing with the computation of expression
\ref{EqAxiomaZ6ModeloFA_3p5} we obtain
\begin{align*}
&  \mathcal{F}^{A}\left(  Q\right)  \circ\times_{i=1}^{n}\widehat
{\mathcal{F}^{A}}\left(  f_{i}\right)  \left(  X_{1}^{\prime},\ldots
,X_{n}^{\prime}\right) \\
&  =\sum_{Y_{1}\in\mathcal{P}\left(  E\right)  }\ldots\sum_{Y_{n}%
\in\mathcal{P}\left(  E\right)  }m_{\widehat{\mathcal{F}^{A}}\left(
f_{1}\right)  \left(  X_{1}^{\prime}\right)  }\ldots m_{\widehat
{\mathcal{F}^{A}}\left(  f_{n}\right)  \left(  X_{n}^{\prime}\right)  }\left(
Y_{1},\ldots,Y_{n}\right) \\
&  =\ldots\\
&  =\sum_{Y_{1}\in\mathcal{P}\left(  E\right)  }\ldots\sum_{Y_{n}%
\in\mathcal{P}\left(  E\right)  }m_{\left\{  \mathcal{F}^{A}\left(
\exists_{\widehat{f_{1}}^{-1}\left(  e_{1}\right)  }\right)  \left(  \left(
X_{1}^{\prime}\right)  ^{\widehat{f_{1}}^{-1}\left(  e_{1}\right)  }\right)
/e_{1},\ldots,\mathcal{F}^{A}\left(  \exists_{\widehat{f_{1}}^{-1}\left(
e_{m}\right)  }\right)  \left(  \left(  X_{1}^{\prime}\right)  ^{\widehat
{f_{1}}^{-1}\left(  e_{m}\right)  }\right)  /e_{m}\right\}  }\left(
Y_{1}\right) \\
&  \ldots m_{\left\{  \mathcal{F}^{A}\left(  \exists_{\widehat{f_{n}}%
^{-1}\left(  e_{1}\right)  }\right)  \left(  \left(  X_{n}^{\prime}\right)
^{\widehat{f_{n}}^{-1}\left(  e_{1}\right)  }\right)  /e_{1},\ldots
,\mathcal{F}^{A}\left(  \exists_{\widehat{f_{n}}^{-1}\left(  e_{m}\right)
}\right)  \left(  \left(  X_{n}^{\prime}\right)  ^{\widehat{f_{n}}^{-1}\left(
e_{m}\right)  }\right)  /e_{m}\right\}  }\left(  Y_{n}\right) \\
&  Q\left(  Y_{1},\ldots,Y_{n}\right) \\
&  =\ldots\\
&  =\sum_{\substack{Y_{1}=\left\{  e_{\left(  Y_{1}\right)  _{1}}%
,\ldots,e_{\left(  Y_{1}\right)  _{k}}\right\}  \in\mathcal{P}\left(
E\right)  ,\\E\backslash Y_{1}=\left\{  e_{\left(  Y_{1}\right)  _{k+1}%
},\ldots,e_{\left(  Y_{1}\right)  _{m}}\right\}  }}\ldots\sum_{\substack{Y_{n}%
=\left\{  e_{\left(  Y_{n}\right)  _{1}},\ldots,e_{\left(  Y_{n}\right)  _{k}%
}\right\}  \in\mathcal{P}\left(  E\right)  ,\\E\backslash Y_{n}=\left\{
e_{\left(  Y_{n}\right)  _{k+1}},\ldots,e_{\left(  Y_{n}\right)  _{m}%
}\right\}  }}\\
&  \sum_{\substack{Y_{1}^{\prime}\in\mathcal{P}\left(  E^{\prime}\right)
|\\\widehat{f}_{1}^{-1}\left(  e_{\left(  Y_{1}\right)  _{1}}\right)  \cap
Y_{1}^{\prime}\neq\varnothing\wedge\ldots\wedge\\\widehat{f}_{1}^{-1}\left(
e_{\left(  Y_{1}\right)  _{k}}\right)  \cap Y_{1}^{\prime}\neq\varnothing
\wedge\\\widehat{f}_{1}^{-1}\left(  e_{\left(  Y_{1}\right)  _{k+1}}\right)
\cap Y_{1}^{\prime}=\varnothing\wedge\ldots\wedge\\\widehat{f}_{1}^{-1}\left(
e_{\left(  Y_{1}\right)  _{m}}\right)  \cap Y_{1}^{\prime}=\varnothing
}}m_{X_{1}^{\prime}}\left(  Y_{1}^{\prime}\right)  \ldots\sum_{\substack{Y_{n}%
^{\prime}\in\mathcal{P}\left(  E^{\prime}\right)  |\\\widehat{f}_{n}%
^{-1}\left(  e_{\left(  Y_{n}\right)  _{1}}\right)  \cap Y_{n}^{^{\prime}}%
\neq\varnothing\wedge\ldots\wedge\\\widehat{f}_{n}^{-1}\left(  e_{\left(
Y_{n}\right)  _{k}}\right)  \cap Y_{n}^{^{\prime}}\neq\varnothing
\wedge\\\widehat{f}_{n}^{-1}\left(  e_{\left(  Y_{n}\right)  _{k+1}}\right)
\cap Y_{n}^{^{\prime}}=\varnothing\wedge\ldots\wedge\\\widehat{f}_{n}%
^{-1}\left(  e_{\left(  Y_{n}\right)  _{m}}\right)  \cap Y_{n}^{\prime
}=\varnothing}}m_{X_{n}^{\prime}}\left(  Y_{n}^{\prime}\right)  Q\left(
Y_{1},\ldots,Y_{n}\right)
\end{align*}%
\begin{align*}
&  =\sum_{\substack{Y_{1}=\left\{  e_{\left(  Y_{1}\right)  _{1}}%
,\ldots,e_{\left(  Y_{1}\right)  _{k}}\right\}  \in\mathcal{P}\left(
E\right)  ,\\E\backslash Y_{1}=\left\{  e_{\left(  Y_{1}\right)  _{k+1}%
},\ldots,e_{\left(  Y_{1}\right)  _{m}}\right\}  }}\ldots\sum_{\substack{Y_{n}%
=\left\{  e_{\left(  Y_{n}\right)  _{1}},\ldots,e_{\left(  Y_{n}\right)  _{k}%
}\right\}  \in\mathcal{P}\left(  E\right)  ,\\E\backslash Y_{n}=\left\{
e_{\left(  Y_{n}\right)  _{k+1}},\ldots,e_{\left(  Y_{n}\right)  _{m}%
}\right\}  }}\\
&  \sum_{\substack{Y_{1}^{\prime}\in\mathcal{P}\left(  E^{\prime}\right)
|\\\widehat{f}_{1}^{-1}\left(  e_{\left(  Y_{1}\right)  _{1}}\right)  \cap
Y_{1}^{\prime}\neq\varnothing\wedge\ldots\wedge\\\widehat{f}_{1}^{-1}\left(
e_{\left(  Y_{1}\right)  _{k}}\right)  \cap Y_{1}^{\prime}\neq\varnothing
\wedge\\\widehat{f}_{1}^{-1}\left(  e_{\left(  Y_{1}\right)  _{k+1}}\right)
\cap Y_{1}^{\prime}=\varnothing\wedge\ldots\wedge\\\widehat{f}_{1}^{-1}\left(
e_{\left(  Y_{1}\right)  _{m}}\right)  \cap Y_{1}^{\prime}=\varnothing
}}m_{X_{1}^{\prime}}\left(  Y_{1}^{\prime}\right)  \ldots\sum_{\substack{Y_{n}%
^{\prime}\in\mathcal{P}\left(  E^{\prime}\right)  |\\\widehat{f}_{n}%
^{-1}\left(  e_{\left(  Y_{n}\right)  _{1}}\right)  \cap Y_{n}^{^{\prime}}%
\neq\varnothing\wedge\ldots\wedge\\\widehat{f}_{n}^{-1}\left(  e_{\left(
Y_{n}\right)  _{k}}\right)  \cap Y_{n}^{^{\prime}}\neq\varnothing
\wedge\\\widehat{f}_{n}^{-1}\left(  e_{\left(  Y_{n}\right)  _{k+1}}\right)
\cap Y_{n}^{^{\prime}}=\varnothing\wedge\ldots\wedge\\\widehat{f}_{n}%
^{-1}\left(  e_{\left(  Y_{n}\right)  _{m}}\right)  \cap Y_{n}^{\prime
}=\varnothing}}m_{X_{n}^{\prime}}\left(  Y_{n}^{\prime}\right)  Q\left(
\widehat{f}_{1}\left(  Y_{1}^{\prime}\right)  ,\ldots,\widehat{f}_{n}\left(
Y_{n}^{\prime}\right)  \right) \\
&  =\sum_{\substack{Y_{1}^{\prime}\in\mathcal{P}\left(  E^{\prime}\right)
}}m_{X_{1}^{\prime}}\left(  Y_{1}^{\prime}\right)  \ldots\sum_{\substack{Y_{n}%
^{\prime}\in\mathcal{P}\left(  E^{\prime}\right)  }}m_{X_{n}^{\prime}}\left(
Y_{n}^{\prime}\right)  Q\left(  \widehat{f}_{1}\left(  Y_{1}^{\prime}\right)
,\ldots,\widehat{f}_{n}\left(  Y_{n}^{\prime}\right)  \right) \\
&  =\sum_{\substack{Y_{1}^{\prime}\in\mathcal{P}\left(  E^{\prime}\right)
}}m_{X_{1}^{\prime}}\left(  Y_{1}^{\prime}\right)  \ldots\sum_{\substack{Y_{n}%
^{\prime}\in\mathcal{P}\left(  E^{\prime}\right)  }}m_{X_{n}^{\prime}}\left(
Y_{n}^{\prime}\right)  \left(  Q\circ\times_{i=1}^{n}\widehat{f_{i}}\right)
\left(  Y_{1}^{\prime},\ldots,Y_{n}^{\prime}\right) \\
&  =\mathcal{F}\left(  Q\circ\times_{i=1}^{n}\widehat{f_{i}}\right)  \left(
X_{1}^{\prime},\ldots,X_{n}^{\prime}\right)
\end{align*}
as we want to prove.
\end{proof}

\subsection{Properties of the $\mathcal{F}^{A}$ that are not consequences of
the DFS framework}

\subsubsection{Property of argument continuity}

The $\mathcal{F}^{A}$ model fulfills the property of argument continuity.

\begin{proof}
In \cite[appendix A]{DiazHermida06Tesis} a detailed proof can be found. But
the next arguments are enough to prove continuity. Let us consider the
definition of the $\mathcal{F}^{A}$ model:%
\[
\mathcal{F}^{A}\left(  Q\right)  \left(  X_{1},\ldots,X_{n}\right)
=\sum_{Y_{1}\in\mathcal{P}\left(  E\right)  }\ldots\sum_{Y_{n}\in
\mathcal{P}\left(  E\right)  }m_{X_{1}}\left(  Y_{1}\right)  \ldots m_{X_{n}%
}\left(  Y_{n}\right)  Q\left(  Y_{1},\ldots,Y_{n}\right)
\]
Note that for crisp sets $Y_{1},\ldots,Y_{n}\in\mathcal{P}\left(  E\right)  $
$m_{X_{1}}\left(  Y_{1}\right)  \ldots m_{X_{n}}\left(  Y_{n}\right)  Q\left(
Y_{1},\ldots,Y_{n}\right)  $ are continuous funtions, because $Q\left(
Y_{1},\ldots,Y_{n}\right)  $ is constant and $m_{X_{1}}\left(  Y_{1}\right)
\ldots m_{X_{n}}\left(  Y_{n}\right)  $ only involves the use of the product operation.

But the sum of continuous functions (that is, the sum over $\left(
Y_{1},\ldots,Y_{n}\right)  \in\mathcal{P}\left(  E\right)  ^{n}$ is
continuous. And then the model is continuous in arguments.
\end{proof}

\subsubsection{Propery of quantifier continuity}

The model $\mathcal{F}^{A}$ fulfills the property of $Q$-continuity:

\begin{proof}
Let $Q,Q^{\prime}:\mathcal{P}\left(  E\right)  ^{n}\rightarrow\mathbf{I}$
semi-fuzzy quantifiers. Then,%
\begin{align*}
&  d\left(  \mathcal{F}^{A}\left(  Q\right)  ,\mathcal{F}^{A}\left(
Q^{^{\prime}}\right)  \right) \\
&  =\sup\left\{  \left\vert
\begin{array}
[c]{c}%
\sum_{Y_{1}\in\mathcal{P}\left(  E\right)  }\ldots\sum_{Y_{n}\in
\mathcal{P}\left(  E\right)  }m_{X_{1}}\left(  Y_{1}\right)  \ldots m_{X_{n}%
}\left(  Y_{n}\right)  Q\left(  Y_{1},\ldots,Y_{n}\right) \\
-\sum_{Y_{1}\in\mathcal{P}\left(  E\right)  }\ldots\sum_{Y_{n}\in
\mathcal{P}\left(  E\right)  }m_{X_{1}}\left(  Y_{1}\right)  \ldots m_{X_{n}%
}\left(  Y_{n}\right)  Q^{\prime}\left(  Y_{1},\ldots,Y_{n}\right)
\end{array}
\right\vert \right. \\
&  \left.  :X_{1},\ldots,X_{n}\in\widetilde{\mathcal{P}}\left(  E\right)
\right\} \\
&  =\sup\left\{  \left\vert
\begin{array}
[c]{c}%
\sum_{Y_{1}\in\mathcal{P}\left(  E\right)  }\ldots\sum_{Y_{n}\in
\mathcal{P}\left(  E\right)  }m_{X_{1}}\left(  Y_{1}\right)  \ldots m_{X_{n}%
}\left(  Y_{n}\right) \\
\left(  Q\left(  Y_{1},\ldots,Y_{n}\right)  -Q^{\prime}\left(  Y_{1}%
,\ldots,Y_{n}\right)  \right)
\end{array}
\right\vert :X_{1},\ldots,X_{n}\in\widetilde{\mathcal{P}}\left(  E\right)
\right\} \\
&  \leq\sup\left\{  \left\vert \sum_{Y_{1}\in\mathcal{P}\left(  E\right)
}\ldots\sum_{Y_{n}\in\mathcal{P}\left(  E\right)  }m_{X_{1}}\left(
Y_{1}\right)  \ldots m_{X_{n}}\left(  Y_{n}\right)  d\left(  Q,Q^{^{\prime}%
}\right)  \right\vert :X_{1},\ldots,X_{n}\in\widetilde{\mathcal{P}}\left(
E\right)  \right\} \\
&  =\sup\left\{  \left\vert d\left(  Q,Q^{^{\prime}}\right)  \sum_{Y_{1}%
\in\mathcal{P}\left(  E\right)  }\ldots\sum_{Y_{n}\in\mathcal{P}\left(
E\right)  }m_{X_{1}}\left(  Y_{1}\right)  \ldots m_{X_{n}}\left(
Y_{n}\right)  \right\vert :X_{1},\ldots,X_{n}\in\widetilde{\mathcal{P}}\left(
E\right)  \right\} \\
&  =\sup\left\{  \left\vert d\left(  Q,Q^{^{\prime}}\right)  \right\vert
:X_{1},\ldots,X_{n}\in\widetilde{\mathcal{P}}\left(  E\right)  \right\} \\
&  =d\left(  Q,Q^{^{\prime}}\right)
\end{align*}

And the property is fulfilled for $\delta<\varepsilon$.
\end{proof}

\subsubsection{Property of the fuzzy argument insertion}

The $\mathcal{F}^{A}$ verifies the property of fuzzy argument insertion.

\begin{proof}
Let $Q:\mathcal{P}^{n+1}\left(  E\right)  \rightarrow\mathbf{I}$ a semi-fuzzy
quantifier be given and $A\in\widetilde{\mathcal{P}}\left(  E\right)  $ a
fuzzy set. For all $Y_{1},\ldots,Y_{n}\in\mathcal{P}\left(  E\right)  $ crisp
we have%
\begin{align*}
&  Q\widetilde{\vartriangleleft}A\left(  Y_{1},\ldots,Y_{n}\right) \\
&  =\mathcal{U}\left(  \mathcal{F}^{A}\left(  Q\right)  \vartriangleleft
A\right)  \left(  Y_{1},\ldots,Y_{n}\right) \\
&  =\mathcal{U}\left(  \left(
\begin{array}
[c]{c}%
f:\left(  K_{1},\ldots,K_{n+1}\right)  \in\widetilde{\mathcal{P}}\left(
E\right)  \rightarrow\sum_{Z_{1}\in\mathcal{P}\left(  E\right)  }\ldots
\sum_{Z_{n}\in\mathcal{P}\left(  E\right)  }\sum_{Z_{n+1}\in\mathcal{P}\left(
E\right)  }\\
m_{K_{1}}\left(  Z_{1}\right)  \ldots m_{K_{n}}\left(  Z_{n}\right)
m_{K_{n+1}}\left(  Z_{n+1}\right)  Q\left(  Z_{1},\ldots,Z_{n},Z_{n+1}\right)
\end{array}
\right)  \vartriangleleft A\right) \\
&  \left(  Y_{1},\ldots,Y_{n}\right) \\
&  =\mathcal{U}\left(  \left(  f^{\prime}:\left(  K_{1}^{\prime},\ldots
,K_{n}^{\prime}\right)  \in\widetilde{\mathcal{P}}\left(  E\right)
\rightarrow f\left(  K_{1}^{\prime},\ldots,K_{n}^{\prime},A\right)  \right)
\right)  \left(  Y_{1},\ldots,Y_{n}\right) \\
&  =\mathcal{U}\left(  \left(
\begin{array}
[c]{c}%
f^{\prime}:\left(  K_{1}^{\prime},\ldots,K_{n}^{\prime}\right)  \in
\widetilde{\mathcal{P}}\left(  E\right)  \rightarrow\sum_{Z_{1}\in
\mathcal{P}\left(  E\right)  }\ldots\sum_{Z_{n}\in\mathcal{P}\left(  E\right)
}\sum_{Z_{n+1}\in\mathcal{P}\left(  E\right)  }\\
m_{K_{1}^{\prime}}\left(  Z_{1}\right)  \ldots m_{K_{n}^{\prime}}\left(
Z_{n}\right)  m_{A}\left(  Z_{n+1}\right)  Q\left(  Z_{1},\ldots,Z_{n}%
,Z_{n+1}\right)
\end{array}
\right)  \right) \\
&  \left(  Y_{1},\ldots,Y_{n}\right) \\
&  =\left(
\begin{array}
[c]{c}%
f^{\prime\prime}:\left(  K_{1}^{\prime\prime},\ldots,K_{n}^{\prime\prime
}\right)  \in\mathcal{P}\left(  E\right)  \rightarrow\sum_{Z_{1}\in
\mathcal{P}\left(  E\right)  }\ldots\sum_{Z_{n}\in\mathcal{P}\left(  E\right)
}\sum_{Z_{n+1}\in\mathcal{P}\left(  E\right)  }\\
m_{K_{1}^{\prime\prime}}\left(  Z_{1}\right)  \ldots m_{K_{n}^{\prime\prime}%
}\left(  Z_{n}\right)  m_{A}\left(  Z_{n+1}\right)  Q\left(  Z_{1}%
,\ldots,Z_{n},Z_{n+1}\right)
\end{array}
\right) \\
&  \left(  Y_{1},\ldots,Y_{n}\right) \\
&  =\left(  f^{\prime\prime}:\left(  K_{1}^{\prime\prime},\ldots,K_{n}%
^{\prime\prime}\right)  \in\mathcal{P}\left(  E\right)  \rightarrow
\sum_{Z_{n+1}\in\mathcal{P}\left(  E\right)  }m_{A}\left(  Z_{n+1}\right)
Q\left(  K_{1}^{\prime\prime},\ldots,K_{n}^{\prime\prime},Z_{n+1}\right)
\right) \\
&  \left(  Y_{1},\ldots,Y_{n}\right)
\end{align*}
because $\left(  K_{1}^{\prime\prime},\ldots,K_{n}^{\prime\prime}\right)
\in\mathcal{P}\left(  E\right)  $ are crisp sets, and then%
\begin{align}
Q\widetilde{\vartriangleleft}A\left(  Y_{1},\ldots,Y_{n}\right)   &
=\ldots\label{EqModeloFAInserArgBorrosa_0p5}\\
&  =\sum_{Z_{n+1}\in\mathcal{P}\left(  E\right)  }m_{A}\left(  Z_{n+1}\right)
Q\left(  Y_{1},\ldots,Y_{n},Z_{n+1}\right) \nonumber
\end{align}

Using the previously obtained result (expression
\ref{EqModeloFAInserArgBorrosa_0p5}), then:
\begin{align}
&  \mathcal{F}^{A}\left(  Q\widetilde{\vartriangleleft}A\right)  \left(
X_{1},\ldots,X_{n}\right) \label{EqModeloFAInserArgBorrosa_1}\\
&  =\sum_{Y_{1}\in\mathcal{P}\left(  E\right)  }\ldots\sum_{Y_{n}%
\in\mathcal{P}\left(  E\right)  }m_{X_{1}}\left(  Y_{1}\right)  \ldots
m_{X_{n}}\left(  Y_{n}\right)  \left(  Q\widetilde{\vartriangleleft}A\right)
\left(  Y_{1},\ldots,Y_{n}\right) \nonumber\\
&  =\sum_{Y_{1}\in\mathcal{P}\left(  E\right)  }\ldots\sum_{Y_{n}%
\in\mathcal{P}\left(  E\right)  }m_{X_{1}}\left(  Y_{1}\right)  \ldots
m_{X_{n}}\left(  Y_{n}\right)  \sum_{Z_{n+1}\in\mathcal{P}\left(  E\right)
}m_{A}\left(  Z_{n+1}\right)  Q\left(  Y_{1},\ldots,Y_{n},Z_{n+1}\right)
\nonumber\\
&  =\sum_{Y_{1}\in\mathcal{P}\left(  E\right)  }\ldots\sum_{Y_{n}%
\in\mathcal{P}\left(  E\right)  }\sum_{Y_{n+1}\in\mathcal{P}\left(  E\right)
}m_{X_{1}}\left(  Y_{1}\right)  \ldots m_{X_{n}}\left(  Y_{n}\right)
m_{A}\left(  Y_{n+1}\right)  Q\left(  Y_{1},\ldots,Y_{n},A\right) \nonumber\\
&  =\mathcal{F}^{A}\left(  Q\right)  \left(  X_{1},\ldots,X_{n},A\right)
\nonumber\\
&  =\mathcal{F}^{A}\left(  Q\right)  \vartriangleleft A\nonumber
\end{align}

\end{proof}

\subsubsection{Property of the identity quantifier.}

This property is easily proved by induction. Let us denote%
\[
\Pr\left(  card_{X}=j\right)  =\sum_{Y\in\mathcal{P}\left(  E\right)
|\left\vert Y\right\vert =j}m_{X}\left(  Y\right)
\]
the probability that the cardinality of a crisp representative of $X$ let be
$j$.

\begin{proof}
For $\left\vert E\right\vert =m$ we have%
\begin{align*}
\mathcal{F}^{A}\left(  \mathbf{identity}\right)  \left(  X\right)   &
=\sum_{Y\in\mathcal{P}\left(  E\right)  }m_{X}\left(  Y\right)
\mathbf{identity}\left(  Y\right) \\
&  =\sum_{Y\in\mathcal{P}\left(  E\right)  }m_{X}\left(  Y\right)
\frac{\left\vert Y\right\vert }{\left\vert E\right\vert }\\
&  =\sum_{j=0}^{m}\sum_{Y\in\mathcal{P}\left(  E\right)  |\left\vert
Y\right\vert =j}m_{X}\left(  Y\right)  \frac{j}{m}\\
&  =\frac{1}{m}\sum_{j=0}^{m}j\Pr\left(  card_{X}=j\right)
\end{align*}

Let us begin the induction proof:

\textbf{Case} $i=1$, $X\in\widetilde{\mathcal{P}}\left(  E^{1}\right)  $. Evident.

\textbf{Induction hypothesis.} Case $i=m$ (that is, $E=E^{m}=\left\{
e_{1},\ldots,e_{m}\right\}  $). For $X\in\widetilde{\mathcal{P}}\left(
E\right)  $ it is fulfilled%
\[
\mathcal{F}^{A}\left(  \mathbf{identity}\right)  \left(  X\right)  =\frac
{1}{m}\sum_{j=1}^{m}\mu_{X}\left(  e_{j}\right)  ,X\in\widetilde{\mathcal{P}%
}\left(  E\right)
\]

\textbf{Casw} $i=m+1$ ( $E=E^{m+1}=\left\{  e_{1},\ldots,e_{m+1}\right\}  $).

For an $m$ elements referential is fulfilled (using the induction
hypothesis).
\begin{align}
&  \sum_{j=0}^{m}\Pr\left(  card_{X}=j\right)  \frac{j+1}{m+1}%
\label{EqModeloFACuantIdent_1}\\
&  =\Pr\left(  card_{X}=0\right)  \frac{1}{m+1}+\Pr\left(  card_{X}=1\right)
\frac{2}{m+1}+\ldots+\Pr\left(  card_{X}=m\right)  \frac{m+1}{m+1}\nonumber\\
&  =\Pr\left(  card_{X}=0\right)  \frac{0}{m+1}+\Pr\left(  card_{X}=1\right)
\frac{1}{m+1}+\ldots+\Pr\left(  card_{X}=m\right)  \frac{m}{m+1}+\nonumber\\
&  \Pr\left(  card_{X}=0\right)  \frac{1}{m+1}+\Pr\left(  card_{X}=1\right)
\frac{1}{m+1}+\ldots+\Pr\left(  card_{X}=m\right)  \frac{1}{m+1}\nonumber\\
&  =\sum_{j=0}^{m}\Pr\left(  card_{X}=j\right)  \frac{j}{m+1}+\frac{1}%
{m+1}\sum_{j=0}^{m}\Pr\left(  card_{X}=j\right) \nonumber
\end{align}

\begin{align}
&  =\sum_{j=0}^{m}\Pr\left(  card_{X}=j\right)  \frac{j}{m+1}+\frac{1}%
{m+1}\nonumber\\
&  =\frac{m}{m+1}\sum_{j=0}^{m}\Pr\left(  card_{X}=m\right)  \frac{j}{m}%
+\frac{1}{m+1}\nonumber\\
&  =\frac{m}{m+1}\mathcal{F}^{A}\left(  \mathbf{identity}\right)  \left(
X\right)  +\frac{1}{m+1}\nonumber\\
&  =\frac{m}{m+1}\frac{1}{m}\sum_{j=0}^{m}\mu_{X}\left(  e_{j}\right)
+\frac{1}{m+1}\nonumber\\
&  =\frac{1}{m+1}\sum_{j=0}^{m}\mu_{X}\left(  e_{j}\right)  +\frac{1}%
{m+1}\nonumber
\end{align}

Let us suppose now that $X\in\widetilde{\mathcal{P}}\left(  E^{m+1}\right)  $.
And let be $X^{\prime}\in\widetilde{\mathcal{P}}\left(  E^{m}\right)  $ the
fuzzy set%
\[
X^{\prime}=X^{E^{m}}%
\]

Then,%
\begin{align*}
&  \mathcal{F}^{A}\left(  \mathbf{identity}\right)  \left(  X\right) \\
&  =\sum_{j=0}^{m+1}\Pr\left(  card_{X}=j\right)  \frac{j}{m+1}\\
&  =\Pr\left(  card_{X^{\prime}}=0\right)  \left(  1-\mu_{X}\left(
e_{m+1}\right)  \right)  \frac{0}{m+1}+\\
&  +\sum_{j=1}^{m}\left(  \Pr\left(  card_{X^{\prime}}=j\right)  \left(
1-\mu_{X}\left(  e_{m+1}\right)  \right)  +\Pr\left(  card_{X^{\prime}%
}=j-1\right)  \mu_{X}\left(  e_{m+1}\right)  \right)  \frac{j}{m+1}\\
&  +\Pr\left(  card_{X^{\prime}}=m\right)  \mu_{X}\left(  e_{m+1}\right)
\frac{m+1}{m+1}%
\end{align*}%
\begin{align*}
&  =\sum_{j=1}^{m}\left(  \Pr\left(  card_{X^{\prime}}=j-1\right)  \mu
_{X}\left(  e_{m+1}\right)  \right)  \frac{j}{m+1}+\Pr\left(  card_{X^{\prime
}}=m\right)  \mu_{X}\left(  e_{m+1}\right)  \frac{m+1}{m+1}\\
&  +\sum_{j=1}^{m}\Pr\left(  card_{X^{\prime}}=j\right)  \left(  1-\mu
_{X}\left(  e_{m+1}\right)  \right)  \frac{j}{m+1}+\Pr\left(  card_{X^{\prime
}}=0\right)  \left(  1-\mu_{X}\left(  e_{m+1}\right)  \right)  \frac{0}{m+1}\\
&  =\sum_{j=1}^{m+1}\left(  \Pr\left(  card_{X^{\prime}}=j-1\right)  \mu
_{X}\left(  e_{m+1}\right)  \right)  \frac{j}{m+1}+\sum_{j=0}^{m}\Pr\left(
card_{X^{\prime}}=j\right)  \left(  1-\mu_{X}\left(  e_{m+1}\right)  \right)
\frac{j}{m+1}\\
&  =\mu_{X}\left(  e_{m+1}\right)  \sum_{j=0}^{m}\Pr\left(  card_{X^{\prime}%
}=j\right)  \frac{j+1}{m+1}+\left(  1-\mu_{X}\left(  e_{m+1}\right)  \right)
\sum_{j=0}^{m}\Pr\left(  card_{X^{\prime}}=j\right)  \frac{j}{m+1}\\
&  =\mu_{X}\left(  e_{m+1}\right)  \sum_{j=0}^{m}\Pr\left(  card_{X^{\prime}%
}=j\right)  \frac{j+1}{m+1}+\left(  1-\mu_{X}\left(  e_{m+1}\right)  \right)
\frac{m}{m+1}\sum_{j=0}^{m}\Pr\left(  card_{X^{\prime}}=j\right)  \frac{j}{m}%
\end{align*}

And using expression \ref{EqModeloFACuantIdent_1} and the induction
hypothesis:%
\begin{align*}
&  =\mu_{X}\left(  e_{m+1}\right)  \left(  \frac{1}{m+1}\sum_{j=0}^{m}\mu
_{X}\left(  e_{j}\right)  +\frac{1}{m+1}\right)  +\left(  1-\mu_{X}\left(
e_{m+1}\right)  \right)  \frac{m}{m+1}\frac{1}{m}\sum_{j=0}^{m}\mu_{X}\left(
e_{j}\right) \\
&  =\mu_{X}\left(  e_{m+1}\right)  \left(  \frac{1}{m+1}\sum_{j=0}^{m}\mu
_{X}\left(  e_{j}\right)  +\frac{1}{m+1}\right)  +\left(  1-\mu_{X}\left(
e_{m+1}\right)  \right)  \frac{1}{m+1}\sum_{j=0}^{m}\mu_{X}\left(
e_{j}\right) \\
&  =\mu_{X}\left(  e_{m+1}\right)  \frac{1}{m+1}\sum_{j=0}^{m}\mu_{X}\left(
e_{j}\right)  +\mu_{X}\left(  e_{m+1}\right)  \frac{1}{m+1}\\
&  +\frac{1}{m+1}\sum_{j=0}^{m}\mu_{X}\left(  e_{j}\right)  -\mu_{X}\left(
e_{m+1}\right)  \frac{1}{m+1}\sum_{j=0}^{m}\mu_{X}\left(  e_{j}\right)
\end{align*}%
\begin{align*}
&  =\mu_{X}\left(  e_{m+1}\right)  \frac{1}{m+1}+\frac{1}{m+1}\sum_{j=0}%
^{m}\mu_{X}\left(  e_{j}\right) \\
&  =\frac{1}{m+1}\left(  \sum_{j=0}^{m}\mu_{X}\left(  e_{j}\right)  +\mu
_{X}\left(  e_{m+1}\right)  \right) \\
&  =\frac{1}{m+1}\sum_{j=0}^{m+1}\mu_{X}\left(  e_{j}\right)
\end{align*}

\end{proof}

\subsubsection{Property of the probabilistic interpretation of quantifiers}

The model $\mathcal{F}^{A}$ fulfills the property of the probabilistic
interpretation of quantifiers:

\begin{proof}
Let $Q_{1},\ldots,Q_{r}:\mathcal{P}^{n}\left(  E\right)  \rightarrow
\mathbf{I}$ a probabilistic covering of the quantification universe. Then for
all $X_{1},\ldots,X_{n}\in\widetilde{\mathcal{P}}\left(  E\right)  $%
\begin{align*}
&  \mathcal{F}\left(  Q_{1}\right)  \left(  X_{1},\ldots,X_{n}\right)
+\ldots+\mathcal{F}\left(  Q_{r}\right)  \left(  X_{1},\ldots,X_{n}\right) \\
&  =\sum_{Y_{1}\in\mathcal{P}\left(  E\right)  }\ldots\sum_{Y_{n}%
\in\mathcal{P}\left(  E\right)  }m_{X_{1}}\left(  Y_{1}\right)  \ldots
m_{X_{n}}\left(  Y_{n}\right)  Q_{1}\left(  Y_{1},\ldots,Y_{n}\right)
+\ldots+\\
&  +\sum_{Y_{1}\in\mathcal{P}\left(  E\right)  }\ldots\sum_{Y_{n}%
\in\mathcal{P}\left(  E\right)  }m_{X_{1}}\left(  Y_{1}\right)  \ldots
m_{X_{n}}\left(  Y_{n}\right)  Q_{r}\left(  Y_{1},\ldots,Y_{n}\right) \\
&  =\sum_{Y_{1}\in\mathcal{P}\left(  E\right)  }\ldots\sum_{Y_{n}%
\in\mathcal{P}\left(  E\right)  }m_{X_{1}}\left(  Y_{1}\right)  \ldots
m_{X_{n}}\left(  Y_{n}\right)  \left(  Q_{1}\left(  Y_{1},\ldots,Y_{n}\right)
+\ldots+Q_{r}\left(  Y_{1},\ldots,Y_{n}\right)  \right) \\
&  =\sum_{Y_{1}\in\mathcal{P}\left(  E\right)  }\ldots\sum_{Y_{n}%
\in\mathcal{P}\left(  E\right)  }m_{X_{1}}\left(  Y_{1}\right)  \ldots
m_{X_{n}}\left(  Y_{n}\right) \\
&  =1
\end{align*}

\end{proof}

\section{Apendix B. Efficient computation of the $\mathcal{F}^{A}$ model}

Although the time to compute the result of evaluatign a quantified expression
could seem extremely high, it is possible to develop polynomial algorithms for
quantitative quantifiers\footnote{Quantitative quantifiers are invariant under
automorphims \cite[section 4.13]{Glockner06Libro}. Quantitative quantifiers
can be expressed as a function of the cardinalities of their arguments and
their boolean combinations.}. In table \ref{AlgoritmoUnarioFA} the algorithm
to evaluate unary quantitative quantifiers is shown. A quantitative unary
semi-fuzzy quantifier depends on a function $q:\left\{  0,\ldots,\left\vert
E\right\vert \right\}  \rightarrow\mathbf{I}$; that is, a function of the
possible cardinality values in $\mathbf{I}$. The idea of the algorithm is
that, if we know the probabilities of the cardinalities in a base set of size
$k$; that is, we know the probabilities $\Pr\left(  card_{X}=0\right)
,\ldots,\Pr\left(  card_{X}=k\right)  $ then when we add one element to the
base set ($e_{k+1}$) we have to consider two possibilities to compute the
change in the probabilities of the cardinalities. One possibility is that the
element $e_{k+1}$ fulfills the property represented by $X$ (with probability
$\mu_{X}\left(  e_{k+1}\right)  $) and the other is that the element does not
fulfill the property represented by $X$ (with probability $\left(  1-\mu
_{X}\left(  e_{k+1}\right)  \right)  $). The next formula expresses the change
in the probabilities:%

\[
\Pr\left(  card_{X}=j\right)  =\left\{
\begin{array}
[c]{ccc}%
\Pr\left(  card_{X}=0\right)  \left(  1-\mu_{X}\left(  e_{k+1}\right)  \right)
& : & j=0\\%
\begin{array}
[c]{c}%
\Pr\left(  card_{X}=j\right)  \left(  1-\mu_{X}\left(  e_{k+1}\right)  \right)
\\
+\Pr\left(  card_{X}=j-1\right)  \mu_{X}\left(  e_{k+1}\right)
\end{array}
& : & 1\leq j\leq k\\
\Pr\left(  card_{X}=m\right)  \mu_{X}\left(  e_{i+1}\right)  & : & j=k+1
\end{array}
\right.
\]

Similar ideas can be used to develop algorithms for other quantitative
quantifiers. The case of binary proportional quantifiers can be consulted in
\cite[pag. 348]{DiazHermida06Tesis}.%

\begin{table}[tbp] \centering
\begin{tabular}
[c]{|l|}\hline
\textbf{Algorithm for computing }$\mathcal{F}^{A}\left(  Q\right)  \left(
X\right)  $\\\hline%
\begin{tabular}
[c]{l}%
{\small INPUT:\ X[0,...,m-1], m }${\small \geq1}$,{\small q:\{0,\ldots
,m\}}${\small \rightarrow}$\textbf{I}\\
{\small double pr\_aux\_i,pr\_aux\_i\_minus\_1;}\\
{\small double pr[0,....,m];}\\
{\small result = 0;}\\
{\small pr[0] = 1;}\\
{\small for (j = 0;j
$<$
m;j++) \{}\\
\qquad{\small pr\_aux\_i = pr[0];}\\
\qquad{\small pr[0] = (1 - X[j]) }${\small \times}$ {\small pr\_aux\_i;}\\
\qquad{\small pr\_aux\_i\_minus\_1 = pr\_aux\_i;}\\
\qquad{\small for (i = 1; i
$<$%
= j;i++) \{}\\
\qquad\qquad{\small pr\_aux\_i = pr[i];}\\
\qquad\qquad{\small pr[i] = (1 - X[j]) }${\small \times}$ {\small pr\_aux\_i +
X[j] }${\small \times}$ {\small pr\_aux\_i\_minus\_1;}\\
$\qquad$\qquad{\small pr\_aux\_i\_minus\_1 = pr\_aux\_i;}\\
$\qquad${\small \}}\\
$\qquad${\small pr[j+1] = X[j] }${\small \times}$
{\small pr\_aux\_i\_minus\_1;}\\
{\small \}}\\
{\small for (j = 0;j
$<$%
= m;j++)}\\
$\qquad${\small result = result + pr[j] }${\small \times}$ {\small q(j);}\\
{\small return result;}%
\end{tabular}
\\\hline
\end{tabular}
\caption{{Algorithm for computing unary quantitative
quantifiers $\mathcal{F}^{A}\left( Q\right) \left( X\right) $\label{AlgoritmoUnarioFA}}}%
\end{table}%

\bibliographystyle{plain}

\end{document}